\newif\ifproceedings
\DeclareMathOperator*{\argmax}{arg\,max}
\title{Likelihood Ratio Confidence Sets for \\ Sequential Decision Making}
\author{%
Nicolas Emmenegger\thanks{Equal contribution.} \\
ETH Zürich \\
\And 
Mojmír Mutný\footnotemark[1] \\
ETH Zürich \\
\And
Andreas Krause \\
ETH Zürich
}
\begin{document}

\maketitle

\begin{abstract}
  \looseness -1 Certifiable, adaptive uncertainty estimates for unknown quantities are an essential ingredient of sequential decision-making algorithms. Standard approaches rely on problem-dependent concentration results and are limited to a specific combination of parameterization, noise family, and estimator. In this paper, we revisit the likelihood-based inference principle and propose to use \emph{likelihood ratios} to construct \emph{any-time valid} confidence sequences without requiring specialized treatment in each application scenario. Our method is especially suitable for problems with well-specified likelihoods, and the resulting sets always maintain the prescribed coverage in a model-agnostic manner. The size of the sets depends on a choice of estimator sequence in the likelihood ratio. We discuss how to provably choose the best sequence of estimators and shed light on connections to online convex optimization with algorithms such as Follow-the-Regularized-Leader. To counteract the initially large bias of the estimators, we propose a reweighting scheme that also opens up deployment in non-parametric settings such as RKHS function classes. We provide a \emph{non-asymptotic} analysis of the likelihood ratio confidence sets size for generalized linear models, using insights from convex duality and online learning. We showcase the practical strength of our method on generalized linear bandit problems, survival analysis, and bandits with various additive noise distributions.
\end{abstract}

\newcommand{\spacebeforesubsections}{-0.15cm} %
\newcommand{\spacebeforesections}{-0.18cm} %
\newcommand{\spacebeforeparagraphs}{-0.13cm} %
\newcommand{\spaceaftertheorem}{-0.15cm} 
\newcommand{\spaceafterbanner}{-0.5cm}

\def\calC{\mathcal{C}}
\def\calD{\mathcal{D}}
\def\calE{\mathcal{E}}
\def\calF{\mathcal{F}}
\def\calG{\mathcal{G}}
\def\calL{\mathcal{L}}
\def\calO{\mathcal{O}}
\def\calS{\mathcal{S}}
\def\calT{\mathcal{T}}
\def\calM{\mathcal{M}}
\def\calX{\mathcal{X}}
\def\calH{\mathcal{H}}
\def\calD{\mathcal{D}}
\def\calN{\mathcal{N}}

\newcommand{\xstar}{{x_\star}}

\newcommand\calDD{\mathbf{\calD}}

\newcommand\DDelta{\mathbf{{\Delta}}}
\newcommand\Ppsi{\mathbf{{\Psi}}}
\newcommand\PPsi{\mathbf{{\Psi}}}
\newcommand\ppsi{\mathbf{{\psi}}}
\newcommand\pphi{\mathbf{{\phi}}}
\newcommand\PPhi{\mathbf{\Phi}}
\newcommand\LLambda{\mathbf{{\Lambda}}}
\newcommand\PPi{\mathbf{\Pi}}

\newcommand\ppi{\mathbf{\pi}}
\newcommand\cchi{\mathbf{\chi}}
\newcommand\aalpha{\mathbf{\alpha}}
\newcommand\bbeta{\mathbf{\beta}}
\newcommand\ggamma{\mathbf{\gamma}}
\newcommand\ddelta{\mathbf{\delta}}

\newcommand\rrho{\mathbf{\rho}}
\newcommand\xxi{\mathbf{\xi}}

\newcommand\er{R_{\text{eff}}}

\def\aa{\pmb{{a}}}
\newcommand\bb{\mathbf{{b}}}
\newcommand\cc{\mathbf{{c}}}
\newcommand\dd{\mathrm{{d}}}
\newcommand\ee{\mathbf{{e}}}
\newcommand\ff{\mathbf{{f}}}
\renewcommand\gg{\mathbf{{g}}}
\newcommand\ii{\mathbf{{i}}}
\newcommand\jj{\mathbf{{j}}}
\newcommand\kk{\mathbf{{k}}}
\renewcommand\ll{\mathbf{{l}}}
\newcommand\pp{\mathbf{{p}}}
\newcommand\qq{\mathbf{{q}}}
\newcommand\bs{\mathbf{{s}}}
\newcommand\nn{\mathbf{{n}}}
\newcommand\rr{\mathbf{{r}}}
\renewcommand\ss{\mathbf{{s}}}
\def\tt{\mathbf{{t}}}
\newcommand\uu{\mathbf{{u}}}
\newcommand\vv{\mathbf{{v}}}
\newcommand\ww{\mathbf{{w}}}
\newcommand\yy{\mathbf{{y}}}
\newcommand\zz{\mathbf{{z}}}
\newcommand\xx{\mathbf{{x}}}
\newcommand\mE{\mathbb{E}}
\newcommand\veczero{\mathbf{0}}
\newcommand\vecone{\mathbf{1}}

\newcommand\matzero{\mathbf{0}}
\newcommand\matone{\mathbf{1}}

\newcommand{\matlow}{\mathbf{{{\mathcal{L}}}}}
\newcommand{\matlowtil}{\mathbf{{\widetilde{\mathcal{L}}}}}
\newcommand{\matlowhat}{\mathbf{{\widehat{\mathcal{L}}}}}

\newcommand{\matup}{\mathbf{{{\mathcal{U}}}}}

\renewcommand\AA{\mathbf{{A}}}
\newcommand\BB{\mathbf{{B}}}
\newcommand\CC{\mathbf{{C}}}
\newcommand\DD{\mathbf{{D}}}
\newcommand\EE{\mathbf{{E}}}
\newcommand\GG{\mathbf{{G}}}
\newcommand\HH{\mathbf{{H}}}
\newcommand\II{\mathbf{{I}}}
\newcommand\JJ{\mathbf{{J}}}
\newcommand\KK{\mathbf{{K}}}
\newcommand\NN{\mathbf{{N}}}
\newcommand\MM{\mathbf{{M}}}
\newcommand\LL{\mathbf{{L}}}
\newcommand\PP{\mathbf{{P}}}
\newcommand\RR{\mathbf{{R}}}
\renewcommand\SS{\mathbf{{S}}}
\newcommand\TT{\mathbf{{T}}}
\newcommand\UU{\mathbf{{U}}}
\newcommand\WW{\mathbf{{W}}}
\newcommand\VV{\mathbf{{V}}}
\newcommand\XX{\mathbf{{X}}}
\newcommand\YY{\mathbf{{Y}}}

\newcommand{\N}{\mathbb{N}}

\newcommand\MMtil{\mathbf{{\tilde{M}}}}
\newcommand\AAtil{\mathbf{{\tilde{A}}}}
\newcommand\BBtil{\mathbf{{\tilde{B}}}}
\newcommand\LLtil{\mathbf{{\tilde{L}}}}
\newcommand\MMtilde{\mathbf{{\tilde{M}}}}
\newcommand\XXtil{\mathbf{{\tilde{X}}}}

\newcommand\AAn{\mathbf{\mathcal{A}}}
\newcommand\ZZ{\mathbf{{Z}}}

\newcommand\AAhat{\mathbf{\widehat{{A}}}}
\newcommand\AAapprox{\mathbf{\widetilde{{A}}}}
\newcommand\DDhat{\mathbf{\widehat{{D}}}}
\newcommand\DDapprox{\mathbf{\widetilde{{D}}}}
\newcommand\LLhat{\mathbf{\widehat{{L}}}}
\newcommand\LLapprox{\mathbf{\widetilde{{L}}}}
\newcommand\MMhat{\mathbf{\widehat{{M}}}}
\newcommand\MMapprox{\mathbf{\widetilde{{M}}}}
\newcommand\ZZhat{\mathbf{\widehat{{Z}}}}

\newcommand\DDtil{\mathbf{\widetilde{{D}}}}

\newcommand\fftil{\mathbf{\tilde{{f}}}}
\newcommand{\cctil}{\tilde{\cc}}
\newcommand\sstil{\mathbf{\tilde{{s}}}}
\newcommand\xxtil{\mathbf{\tilde{{x}}}}
\newcommand\yytil{\mathbf{\tilde{{y}}}}
\newcommand\wwtil{\mathbf{\tilde{{w}}}}

\newcommand\Otil{\widetilde{O}}

\newcommand\xhat{{\hat{{x}}}}
\newcommand\uhat{{\hat{{u}}}}
\newcommand\uuhat{\mathbf{{\hat{u}}}}
\newcommand\vhat{{\hat{{v}}}}
\newcommand\what{{\hat{{w}}}}
\newcommand\that{{\hat{{\theta}}}}
\newcommand\ttilde{{\tilde{{\theta}}}}

\newcommand\Ghat{{\widehat{{G}}}}
\newcommand\GGhat{\mathbf{\widehat{G}}}

\newcommand\R{\mathbb{R}}

\newcommand\ffhat{\mathbf{\hat{{f}}}}

\newcommand\cchat{\mathbf{\widehat{{c}}}}
\newcommand\sshat{\mathbf{{\widehat{s}}}}
\newcommand\xxhat{\mathbf{{\widehat{x}}}}
\newcommand\yyhat{\mathbf{\widehat{{y}}}}
\newcommand\xxbar{\overline{\mathbf{{x}}}}
\newcommand\yybar{\overline{\mathbf{{y}}}}
\newcommand\xxstar{{\mathbf{{x}}^{*}}}
\newcommand\yystar{{\mathbf{{y}}^{*}}}
\newcommand\tstar{{\theta_\star}}
\newcommand{\ttimes}{\theta^\times}
\newcommand{\tbar}{\bar{\theta}}

\newcommand\ffbar{\overline{\mathbf{{f}}}}

\newcommand\energy{\mathcal{E}}

\newcommand{\prodt}{\prod_{s=1}^t}
\newcommand{\prodbigt}{\prod_{t=1}^T}
\newcommand{\sumt}{\sum_{s=1}^t}
\newcommand{\sumbigt}{\sum_{t=1}^T}
\newcommand{\setcomprehension}[2]{\left\{#1 \; \middle\vert \; #2 \right\} }
\newcommand{\condE}[2]{\mathbb{E}\left [#1 \; \middle\vert \; #2 \right ] }\newcommand{\expE}[1]{\mathbb{E}\left [#1  \right ] }
\newcommand{\indicator}[1]{\mathbb{I}\left(#1\right)}

\newcommand{\expct}[2]{{}\mathop{\mathbb{E}}_{#1}\left[#2\right]}
\newcommand{\E}{\mathop{{}\mathbb{E}}}
\renewcommand{\P}[1]{\mathop{\mathbb{P}}(#1)}
\newcommand{\Pstar}[1]{\mathbb{P}_\tstar\left(#1\right)}
\newcommand{\Psub}[2]{\mathop{\mathbb{P}_{#1}}(#2)}
\newcommand{\norm}[1]{\lvert\lvert #1 \rvert \rvert}
\newcommand{\abs}[1]{\lvert #1 \rvert}
\newcommand{\inner}[2]{\langle #1,\, #2 \rangle}

\newcommand{\Var}[1]{\mathop{{}Var}\left[#1\right]}
\newcommand{\Ex}[1]{{}\mathop{\mathbb{E}}_{#1}}
\newcommand\tr{\mathrm{Tr}}

\newcommand{\schurto}[2]{\ensuremath{\textsc{Sc}\!\left[#1\right]_{#2}}}

\renewcommand{\sc}[2]{\schurto{#1}{#2}}

\newcommand{\trp}{\top}

\newcommand{\pinv}{+}

\newcommand{\proj}{\PPi}

\newcommand{\lik}{\mathcal{L}}
\newcommand{\sep}{\,\vert\,}
\newcommand{\Prob}{\mathrm{P}}

\newtheorem{theorem}{Theorem}
\newtheorem{lemma}{Lemma}
\newtheorem{proposition}{Proposition}
\newtheorem{corollary}{Corollary}
\newtheorem{conjecture}{Conjecture}

\theoremstyle{plain}
\newtheorem{condition}{Condition}
\newtheorem{definition}{Definition}
\newtheorem{example}{Example}
\newtheorem{assumption}{Assumption}
\newtheorem{remark}{Remark}

\vspace{\spacebeforesections}
\section{Introduction}\label{sec:intro} \looseness -1
\vspace{\spacebeforeparagraphs} \looseness -1 One of the main issues addressed by machine learning and statistics is the estimation of an unknown \emph{model} from noisy observations. For example, in supervised learning, this might concern learning the dependence between an input (covariate) $x$ and a random variable (observation) $y$. In many cases, we are not only interested in an estimate $\that$ of the true model parameter $\tstar$, but instead in a set of plausible values that $\tstar$ could take. Such confidence sets are of tremendous importance in sequential decision-making tasks, where uncertainty is used to drive exploration or risk-aversion needs to be implemented, and covariates are iteratively chosen based on previous observations. This setting includes problems such as bandit optimization, reinforcement learning, or active learning. In the former two, the confidence sets are often used to solve the \emph{exploration-exploitation} dilemma and more generally influence the selection rule \citep{mukherjee2022chernoff}, termination rule \citep{Katz-Samuels2020a}, exploration \citep{Auer2002} and/or risk-aversion \citep{makarova2021riskaverse}.

\begin{figure*}[t]
	\centering
        \begin{subfigure}[b]{0.2\textwidth}
            \centering
            \includegraphics[width=1\textwidth]{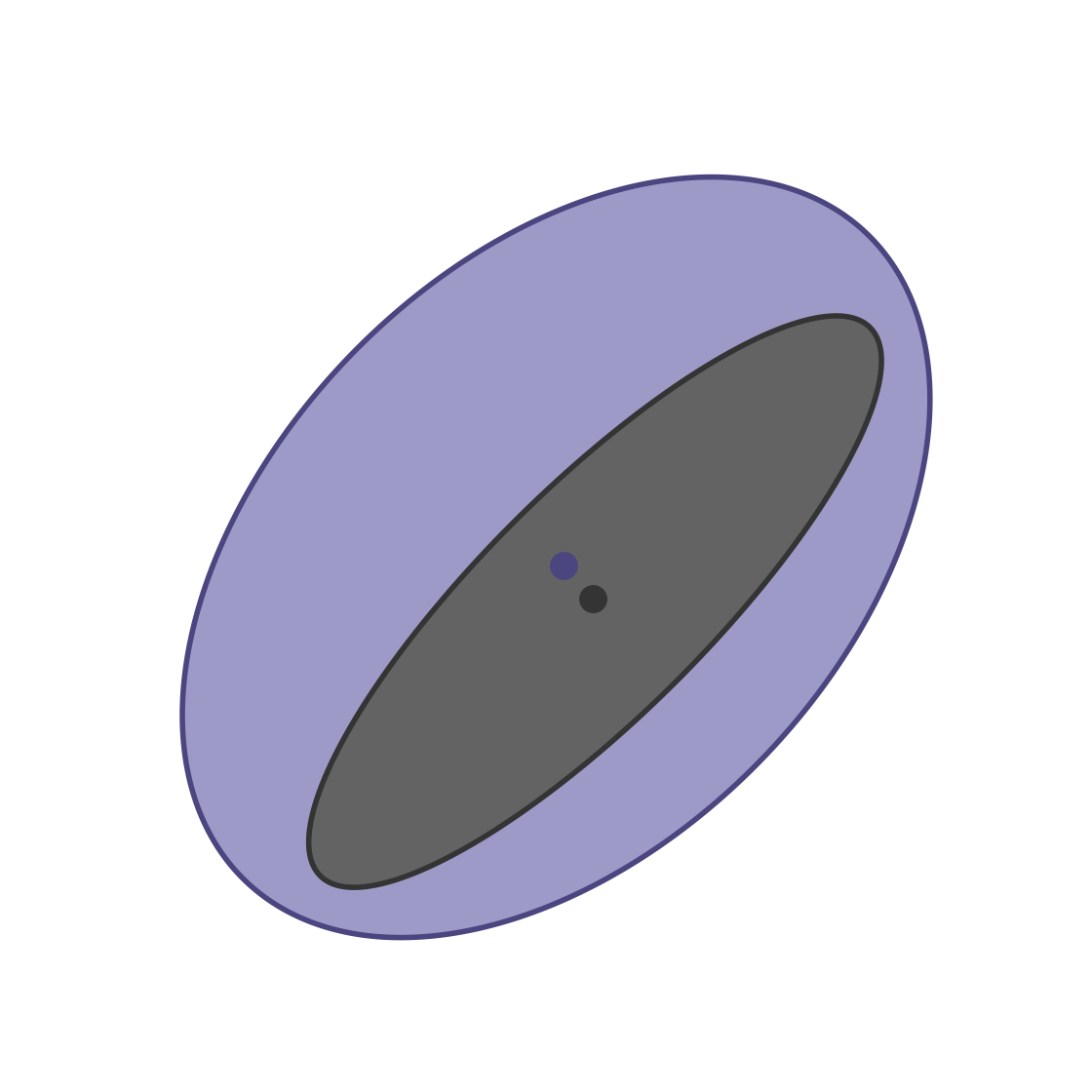}
            \caption{Gaussian $\mathcal{L}$}
        \end{subfigure}
        \begin{subfigure}[b]{0.2\textwidth}
            \centering
            \includegraphics[width=1\textwidth]{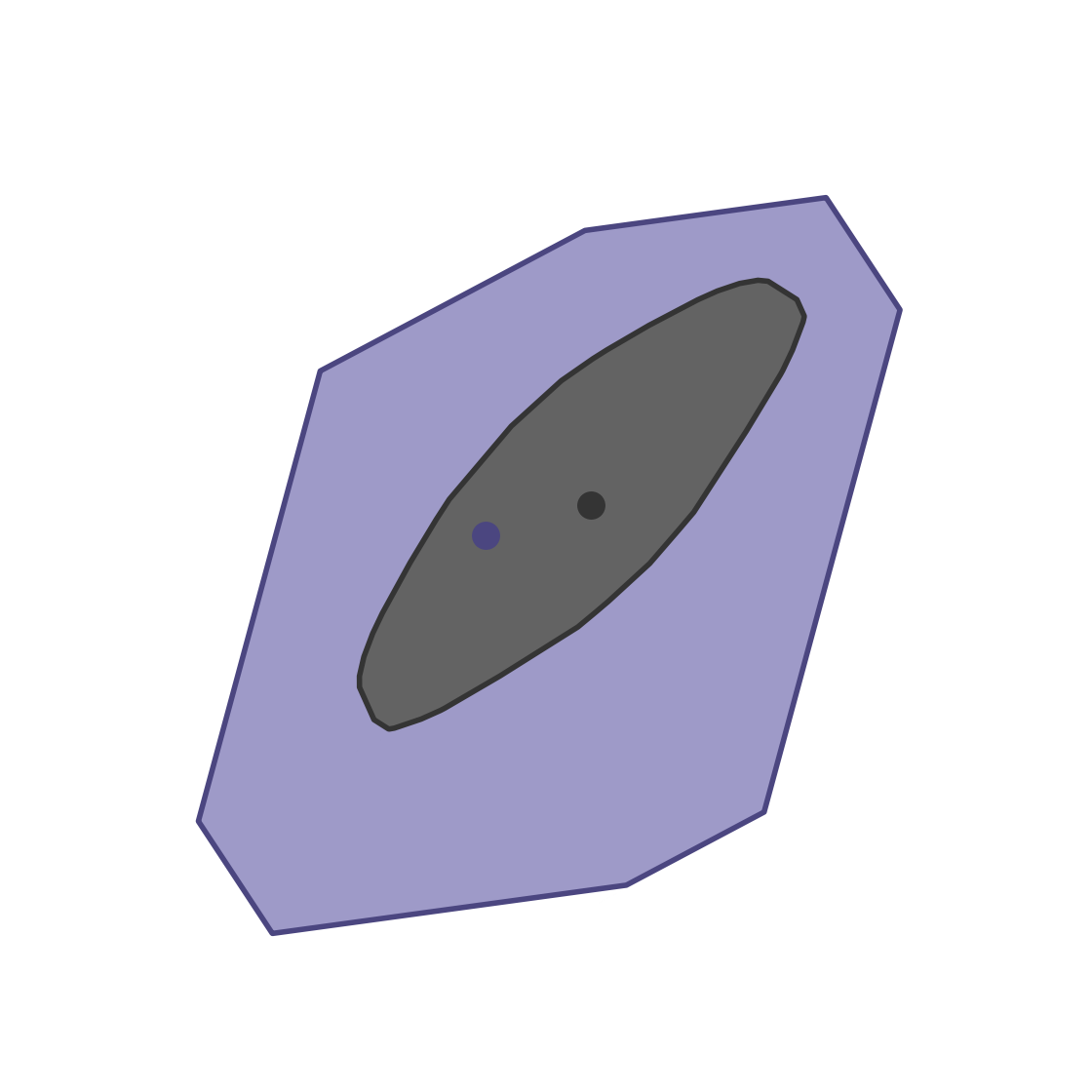}
            \caption{Laplace $\mathcal{L}$}
        \end{subfigure}
        \begin{subfigure}[b]{0.55\textwidth}
            \centering
            \includegraphics[width=\textwidth]{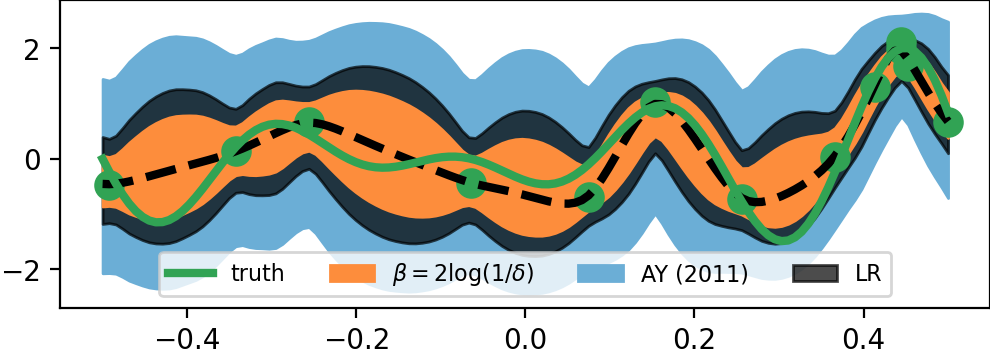}
            \caption{Gaussian $\mathcal{L}$ in RKHS}
        \end{subfigure}
	\caption{(a) and (b) show examples of confidence sets defined via level sets of the log-likelihood function in 2D at two dataset sizes, for Gaussian (a) and Laplace (b) likelihoods respectively. The sets inherit the geometry of the likelihood, and are not always ellipsoidal.  (c) shows confidence bands on an RKHS function in a bandit game searching for the optimum. We compare prior work on confidence sets \citep{abbasi:improved}, our LR sets, and a common heuristic (orange). Our sets are nearly as small as the commonly used heuristic, but have provable coverage and can vastly improve sequential decision-making tasks such as bandits by quickly eliminating hypotheses.} 
 \label{fig:banner}
 \vspace{\spaceafterbanner}
\end{figure*}

When we interact with the environment by gathering data sequentially based on previous confidence sets, we introduce correlations between past noisy observations and future covariates. Data collected in this manner is referred to as {\em adaptively gathered} \citep{ramdas:universal}. Constructing estimators, confidence sets, and hypothesis tests for such non-i.i.d.~data comes with added difficulty. Accordingly, and also for its importance in light of the reproducibility crisis \citep{reproducibilitycrisis}, the task has attracted significant attention in the statistics community in recent years \citep{Ramdas2022c}.

Instead of deriving explicit concentration inequalities around an online estimator, we construct confidence sets {\em implicitly} defined by an inclusion criterion that is easy to evaluate in a computationally efficient manner and requires little statistical knowledge to implement. Roughly speaking, given a model $p_\theta(y \sep x)$ that describes the conditional dependence of the observation $y$ given the covariate $x$ under parameter $\theta$, we will build sets based on a \emph{weighted} modification of the sequential likelihood ratio statistic \citep{Robbins1972, ramdas:universal}
\begin{equation} \label{eq:lrp}
    R_t(\theta) := \frac{\lik_t(\{\that_s\}_{s=1}^t)}{\lik_t(\theta)} := \frac{\prodt p^{w_s}_{\that_{s}}(y_s  \sep x_s)}{\prod_{i=1}^t p^{w_s}_{\theta}(y_s \sep x_s)},
\end{equation}
\looseness -1 where $\{\that_s\}_s$ is a running estimator sequence that we are free to choose, but which may only depend on \emph{previously} collected data.  Parameters $\theta$ for which this statistic is small, i.e., for which $R_t(\theta) \leq 1/\alpha$ will be included in the set (and considered {\em plausible}). Examples of sets in a parametric and non-parametric setting are shown in Figure~\ref{fig:banner}. The weighting terms $w_s \in (0,1]$ are crucial for dealing with inherent irregularities of many conditional observation models but can be flexibly chosen. Classically, these are set to $w_s=1$. The full exposition of our method with choice of estimators and weights is given in Section~\ref{sec:methodology}. Apart from being easy to use and implement, our approach also comes with performance guarantees. These sets maintain a provable $1-\alpha$ coverage -- a fact we establish using Ville's inequality for supermartingales \citep{ville:etudecritique}, which is known to be essentially tight for martingales \citep[see][for a discussion]{howard:timeuniform}. Therefore, in stark contrast to alternate methods, our confidence sequence is {\em fully data-dependent}, making it empirically tighter than competing approaches. Despite the rich history of sequential testing and related confidence sets going back to \citet{wald:sequentiallrt} and \citet{Robbins1972}, these sets have found little use in the interactive machine learning community, which is a gap we fill in the present paper.

\vspace{\spacebeforeparagraphs}
\paragraph{Contributions} \looseness -1
In this work, we revisit the idea of using likelihood ratios to generate anytime-valid confidence sets. The main insight is that whenever the likelihood of the noise process is known, the likelihood ratio confidence sets are fully specified. They inherit their geometry from the likelihood function, and their size depends on the quality of our estimator sequence. We critically evaluate the likelihood ratio confidence sets and, in particular, we shed light on the following aspects:
\textbf{Firstly}, for generalized linear models, we \emph{theoretically} analyze the geometry of the LR confidence sets under mild assumptions. We show their geometry is dictated by Bregman divergences of exponential families \citep{chowdhury2022bregman}.
\textbf{Secondly}, we show that the size of the confidence set is dictated by an online prediction game. The size of these sets depends on a sequence of estimators $\{\that_s\}_{s=1}^t$ that one uses to estimate the unknown parameter $\tstar$. We discuss how to pick the estimator sequence in order to yield a provably small radius of the sets, by using the Follow-the-Regularized-Leader algorithm, which implements a regularized maximum-likelihood estimator. We prove that the radius of the confidence sets is nearly-worst-case optimal, and accordingly, they yield nearly-worst-case regret bounds when used in generalized linear bandit applications. However, due to their data-dependent nature, they can be much tighter than this theory suggests.
\textbf{Thirdly}, we analyze the limitations of classical (unweighted) LR sets when the underlying conditional observation model is not identifiable. In this case, the resulting (inevitable) estimation bias unnecessarily increases the size of the confidence sets. To mitigate this, we propose an adaptive reweighting scheme that decreases the effect of uninformed early bias of the estimator sequence on the size of the sets downstream. The reweighting does not affect the coverage guarantees of our sets and utilizes an elegant connection to (robust) powered likelihoods \citep{ramdas:universal}. \textbf{Finally}, thanks to the adaptive reweighting scheme, our sets are very practical as we showcase experimentally.
We demonstrate that our method works well with exponential and non-exponential family likelihoods, and in parametric as well as in kernelized settings. We attribute their practical benefits to the fact that they \emph{do not depend on (possibly loose) worst-case parameters}.
\vspace{\spacebeforesections}
\section{The Likelihood Method}\label{sec:methodology}  \looseness -1
\vspace{\spacebeforeparagraphs}The sequential likelihood ratio process (LRP) in \eqref{eq:lrp} is a statistic that compares the likelihood of a given model parameter, with the performance of an adaptively chosen estimator sequence. As noted above, we generalize the traditional definition, which would have $w_s \equiv 1$, and define a corresponding confidence set as
\begin{equation}\label{eq:confidencesetdefinition}
    \calC_t = \setcomprehension{\theta}{R_t(\theta) \leq 1/\alpha}.
\end{equation}
The rationale is that the better a parameter $\theta$ is at explaining the data $\{(x_s, y_s)\}_{s}^t$ from the true model $\tstar$, the smaller this statistic will be, thereby increasing its chances to be included in $\calC_t$. When we construct $R_t$, the sequence of $x_s$, $w_s$ and $\that_s$ cannot depend on the noisy observation $y_s$. Formally, consider the filtration $(\calF_s)_{s=0}^\infty$ with sub-$\sigma$-algebras $\calF_s = \sigma(x_1,\ldots, y_1, \ldots x_{s}, y_{s}, x_{s+1})$. We require that $\that_s$ and $w_s$ are $\calF_{s-1}$-measurable. Under these very mild assumptions and with arbitrary weights $w_s \in (0,1]$, we can show coverage, i.e., our (weighted) confidence sets uniformly track the true parameter with probability $1-\alpha$.
\begin{theorem}\label{thm:coverage} The stochastic process 
	$R_t(\tstar)$ in \eqref{eq:lrp} is a non-negative supermartingale with respect to the filtration $(\calF_t)$ and satisfies $R_0(\tstar) \equiv 1$.
    In addition, the sequence $\calC_t$ from \eqref{eq:confidencesetdefinition} satisfies
	$
	\Pstar{\exists t \, : \, \tstar \not \in \mathcal{C}_t} \leq \alpha.
	$
\end{theorem}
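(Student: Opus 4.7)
The plan is to verify the three claims in the theorem separately: non-negativity, the initialization $R_0(\tstar)\equiv 1$, and the one-step supermartingale inequality. The coverage bound will then follow as a direct application of Ville's inequality to the non-negative supermartingale $R_t(\tstar)$.

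Non-negativity is immediate since $R_t(\tstar)$ is a ratio of products of densities raised to positive powers, and $R_0(\tstar)=1$ holds by convention on the empty product. The main work is to show that $\mathbb{E}[R_t(\tstar)\sep\calF_{t-1}]\leq R_{t-1}(\tstar)$. I will use the telescoping identity
\[
R_t(\tstar) \;=\; R_{t-1}(\tstar)\cdot \frac{p_{\that_t}^{w_t}(y_t\sep x_t)}{p_{\tstar}^{w_t}(y_t\sep x_t)}.
\]
By the measurability assumptions, $\that_t$, $w_t$, $x_t$ and $R_{t-1}(\tstar)$ are all $\calF_{t-1}$-measurable, so the factor $R_{t-1}(\tstar)$ pulls out of the conditional expectation. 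The conditional law of $y_t$ given $\calF_{t-1}$ is $p_{\tstar}(\cdot\sep x_t)$, and hence the remaining conditional expectation equals
\[
\int \frac{p_{\that_t}^{w_t}(y\sep x_t)}{p_{\tstar}^{w_t}(y\sep x_t)}\, p_{\tstar}(y\sep x_t)\,dy
\;=\;\int p_{\that_t}^{w_t}(y\sep x_t)\, p_{\tstar}^{\,1-w_t}(y\sep x_t)\,dy.
\]

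The main (if still modest) obstacle is controlling this last integral for arbitrary $w_t\in(0,1]$. For $w_t=1$ it collapses to $\int p_{\that_t}(y\sep x_t)\,dy=1$. For $w_t\in(0,1)$, I will apply Hölder's inequality with conjugate exponents $1/w_t$ and $1/(1-w_t)$, obtaining
\[
\int p_{\that_t}^{w_t}\, p_{\tstar}^{\,1-w_t}\,dy \;\leq\; \Bigl(\int p_{\that_t}\,dy\Bigr)^{w_t}\Bigl(\int p_{\tstar}\,dy\Bigr)^{1-w_t} \;=\;1.
\]
This yields $\mathbb{E}[R_t(\tstar)\sep\calF_{t-1}]\leq R_{t-1}(\tstar)$, establishing the supermartingale property (and equality, i.e., a martingale, when $w_t\equiv 1$).

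Given the supermartingale property with $R_0(\tstar)=1$, Ville's inequality yields $\Pstar{\sup_{t\geq 0} R_t(\tstar) \geq 1/\alpha}\leq \alpha$. By the definition of $\calC_t$ in \eqref{eq:confidencesetdefinition}, $\tstar\notin\calC_t$ is equivalent to $R_t(\tstar)>1/\alpha$, so the event $\{\exists t : \tstar\notin\calC_t\}$ is contained in $\{\sup_t R_t(\tstar)\geq 1/\alpha\}$, completing the proof. The only subtlety worth highlighting is that the weights $w_s$ and estimators $\that_s$ must be $\calF_{s-1}$-measurable; it is precisely this predictability together with Hölder's inequality that drives the argument.
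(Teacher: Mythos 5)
Your proof is correct and follows essentially the same route as the paper: the same telescoping decomposition of $R_t(\tstar)$, the same use of $\calF_{t-1}$-measurability to pull out $R_{t-1}(\tstar)$ and reduce to a single-step integral, and the same appeal to Ville's inequality. The one cosmetic difference is in bounding $\int p_{\that_t}^{w_t}p_{\tstar}^{1-w_t}\,dy \leq 1$: you invoke H\"older directly, whereas the paper recognizes this integral as $\exp((w_t-1)D^{r}_{w_t}(p_{\tstar},p_{\that_t}))$ and cites non-negativity of the R\'enyi divergence --- the same fact, since H\"older is precisely how that non-negativity is proved, so your version is slightly more self-contained.
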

\looseness -1 The last statement follows by applying  Ville's inequality for super-martingales on $R_t(\tstar)$. The proof closely follows \citet{ramdas:universal}. While coverage is always guaranteed irrespective of the estimator sequence $\{\that_s\}$, we would like to make the sets as small as possible at fixed coverage, which we do by picking a well-predicting estimator sequence. 
\vspace{\spacebeforesubsections}
\subsection{The Estimator Sequence Game} 
\vspace{\spacebeforeparagraphs} The specification of the LR process (LRP) allows us to choose an arbitrary estimator sequence $\{\that_s\}_s$. To understand the importance of the sequence, let us introduce $\tstar$ to the definition of $R_t$ in \eqref{eq:lrp}, and divide by $\lik_t(\{\that_s\}_{s=1}^t)$. This gives the equivalent formulation
\[ \mathcal{C}_t := \left\{ \theta ~\Big|~ \frac{\lik_t(\tstar)}{\lik_t(\theta)} \leq \frac{1}{\alpha} \frac{\lik_t(\tstar)}{\lik_t(\{\that_s\}_{s=1}^t)} \leftarrow \text{confidence parameter} \right\}.\]
We see that the predictor sequence does not influence the geometry of the confidence set, which is fully specified by the likelihood function. We also observe that the ratio on the right-hand side serves as a confidence parameter controlling the size (radius) of the confidence sets measured under the likelihood ratio distance to $\tstar$. If the confidence parameter goes to zero, only $\tstar$ is in the set. The better the estimator sequence is at predicting the data, the smaller the inclusion threshold, and hence the smaller the sets will ultimately be. Specifically, taking the $\log$, we would like to \emph{minimize}
\begin{align}
\label{eq:see-regret}
    \mathcal{R}_t:=\log \frac{\lik_t(\tstar)}{\lik_t(\{\that_s\}_{s=1}^t)} =  \sum_{s=1}^t {-\log(p_{\that_{s}}^{w_s}(y_s  \sep x_s))} - \sum_{s=1}^t {-\log(p_{\tstar}^{w_s}(y_s  \sep x_s))}. %
\end{align}
The quantity $\mathcal{R}_t$ corresponds to a regret in an online prediction game, as will become apparent below.

\vspace{\spacebeforeparagraphs}
\paragraph{Online Prediction Game} \looseness-1
Online optimization is a mature field in interactive learning  \citep{Cesa-Bianchi2006, orabona:book}. The general goal is to minimize a sequence of loss functions as in Eq.~\eqref{eq:see-regret} and compete against a baseline, which typically is the best-in-hindsight prediction, or -- in our case -- given by the performance of the fixed parameter $\tstar$. Specifically, at every timestep $s$, iteratively, the agent  chooses an action $\that_s$ based on $\calF_{s-1}$, and a loss function $f_s(\theta)$ is revealed. In most of the online optimization literature, $f_s$ can be chosen adversarially. In our prediction game, we know the whole form of loss function $f_s(\theta) = -\log(p^{w_s}_{\theta}(y_s \sep x_s))$, as can be seen in \eqref{eq:see-regret}, and not just $f_s(\that_s)$. Opposed to traditional assumptions in online prediction, in our case, $f_s$ are non-adversarial, but have a stochastic component due to $y_s$. Also, contrary to most instances of online prediction, we do not compare against the best-in-hindsight predictor, but $\tstar$ instead, as this is more meaningful in our setting. 

\vspace{\spacebeforeparagraphs} \paragraph{Online Optimization Algorithms} \looseness -1 
Generally, we seek an algorithm that incurs low regret. Here, we focus on {\em Follow-the-Regularized Leader (FTRL)}, which corresponds exactly to using regularized maximum likelihood estimation, making it a natural and computationally practical choice. The update rule is defined in Alg.~\ref{alg:online} (Line 3). While other algorithms could be considered, FTRL enjoys the optimal regret rate for generalized linear regression as we show later, and is easily implemented. In order to run the algorithm, one requires a sequence of strongly convex regularizers. For now, let us think of it as $\psi_s(\theta) =\lambda ||\theta||_2^2$, which we use in practice. However, one can derive a tighter analysis for a slightly modified, time-dependent regularization strategy for generalized linear models as we show in Sec.~\ref{sec:online-regret}. 
\vspace{\spacebeforesubsections}
\subsection{Adaptive Reweighting: Choosing the Right Loss} \vspace{\spacebeforeparagraphs}
 \label{sec:bias}
There is yet more freedom in the construction of the LR, via the selection of the {\em loss function}. Not only do we select the predictor sequence, but also the {\em weights} of the losses via $w_t$. This idea allows controlling the influence of a particular data point $(x_t,y_t)$ on the cumulative loss based on the value of $x_t$. For example, if we know a priori that for a given $x_t$ our prediction will be most likely bad, we can opt out of using the pair $(x_t,y_t)$ by setting $w_t = 0$. 
Below we will propose a weighting scheme that depends on a notion of {\em bias}, which captures how much of the error in predicting $y_t$ is due to our uncertainty about $\that_t$ (compared to the uncertainty we still would have {\em knowing} $\tstar$). Sometimes this \emph{bias} is referred to as \emph{epistemic} uncertainty in the literature, while the residual part of the error is referred to as \emph{aleatoric}.  
Putting large weight on a data point heavily affected by this bias
might unnecessarily increase the regret of our learner (and hence blow up the size of the confidence set). Note that, conveniently, even if we put low weight (zero) on a data point, nothing stops us from using this sample point to improve the estimator sequence in the next prediction round. As we will show below, our reweighting scheme is crucial in defining a practical algorithm for Reproducing Kernel Hilbert Space (RKHS) models and in high signal-to-noise ratio scenarios. Since we do not know $\tstar$, our strategy is to compute an \emph{estimate of the bias of the estimator $\that_t$} and its effect on the value of the likelihood function for a specific $x$ that we played. We use the value of the bias to rescale the loss via $w_t$ such that its effect is of the same magnitude as the statistical error (see Algorithm~\ref{alg:online}; we call this step \textsc{bias-weighting}). 
\vspace{\spacebeforeparagraphs}
\paragraph{Intuition} To give a bit more intuition, suppose we have a Gaussian likelihood. Then the negative log-likelihood of $(x_t,y_t)$ with weighting is proportional to $\frac{w_t}{\sigma^2}(y_t - x_t^\top \that_t)^2$. Now, if $x_t$ does not lie in the span of the data points $\{x_s\}_{s=1}^{t-1}$ used to compute $\that_t$, it is in general unavoidable to incur large error, inversely proportional to $\sigma^2$. To see this, let us decompose
the projection onto $x_t$ as\begin{equation}\label{eq:bias-first-mention}
x_t^\top({\that_t - \tstar}) = \underbrace{x_t^\top(\that_t - \mathbb{E}[\that_t])}_{\text{statistical error}} + \underbrace{x_t^\top(\mathbb{E}[\that_t] - \tstar)}_{\operatorname{bias}_{x_t}(\that_t)},
\end{equation} \looseness -1
where the first term represents the statistical error up to time $t$, while the second, bias, is deterministic, and independent of the actual realization $y$, depending only $\tstar$. Estimators with non-zero bias are \emph{biased}. 
Plugging this into the likelihood function, we see that in expectation $\frac{1}{\sigma^2} \mathbb{E}[(y_t - x_t^\top \that_t)^2|\mathcal{F}_{t-1}] \lesssim
 \frac{1}{\sigma^2}{\operatorname{bias}^2_{x_t}(\that_t)} + \epsilon^2 + \frac{C}{t}$, where $\epsilon^2$ is the unavoidable predictive error in expectation (due to a noisy objective) and is a constant independent of $\sigma^2$. $\frac{C}{t}$ is the statistical error, and $C$ is independent of $\sigma^2$. Note that the bias term scales inversely with the variance, and leads to unnecessarily big confidence parameters for small $\sigma^2$. 

\looseness -1 In fact, the problem is that we use the likelihood to measure the distance between two parameters, but this is only a ``good`` distance once the deterministic source of the error (bias) vanishes. For this reason, without weighting, the incurred regret blows up severely in low-noise settings. To counter this, we balance the deterministic estimation bias and noise variance via proper selection of $w_t$. In this case, it turns out that $w_t = \frac{\sigma^2}{\sigma^2 + \operatorname{bias}^2_{x_t}(\that_t) }$ ensures that the overall the scaling is independent of $\sigma^2$. While the choice of weights $\{w_s\}_s^t$ influences the geometry of the confidence sets, with a \emph{good data collection and estimation strategy} the bias asymptotically decreases to zero, and hence the weights converge to $1$. 
\paragraph{Bias estimation} \vspace{\spacebeforeparagraphs}
In order to generalize this rule beyond Gaussian likelihoods, we need a proper generalization of the bias. Our generalization is motivated by our analysis of generalized linear models, but the method can be applied more broadly. The role of the squared statistical error (variance) is played by the inverse of the smoothness constant of the negative log-likelihood functions $f_s$, denoted by $L$. This is the usual smoothness, commonly seen in the convex optimization literature. We consider penalized likelihood estimators with strongly convex regularizers (Alg.~\ref{alg:online}, line 3). For this estimator class, we define the bias via a hypothetical stochastic-error-free estimate $\that_t^\times$, had we access to the expected values of the gradient loss functions (a.k.a. score). We use the first-order optimality conditions and the indicator function of the set $\Theta$, $i_{\Theta}$, to define the error-free-estimate $\that_t^\times$, and the bias of the estimator $\that_t$ as 
\begin{equation}\label{eq:bias-calculation}
    \operatorname{bias}^2_{x_t}(\that_t) = (x_t^\top(\tstar - \that_t^\times))^2 \quad \text{with} \quad \mE\left[\sum_{s=1}^{t-1} \nabla \log p_{\that_t^\times}(y_s|x_s)\right] - \nabla \psi_t(\that_t^\times) + i_{\Theta}(\that^\times_t)= 0,
\end{equation}
where the expectation denotes a sequence of expectations conditioned on the prior filtration.
This notion of bias coincides with the definition of bias in Eq.~\eqref{eq:bias-first-mention} for the Gaussian likelihood. This quantity cannot be evaluated in general, however, we prove a computable upper bound.
\begin{theorem}[Bias estimate]\label{thm:bias}
Let the negative log-likelihood have the form, $-\log p_\theta (y_s|x_s) = g(x_s^\top \theta)$, where $g:\mathbb{R}\rightarrow \mathbb{R}$ is $\mu$ strongly-convex and let the regularizer be $\psi_t(\theta)=\lambda \norm{\theta}_2^2$ making the overall objective strongly convex. Then, defining $\VV_t^{\mu;\lambda} = \sum_{s=1}^t \mu x_s x_s^\trp + \lambda \II$, we can bound
\begin{equation}\label{eq:bias-approximation}
 \operatorname{bias}^2_{x}(\that_t) \leq 
 2\lambda\norm{\tstar}^2_2 x^\top(\VV_t^{\mu;\lambda})^{-1}x.
\end{equation}
\end{theorem}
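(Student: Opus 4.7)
The plan is to use the first-order characterization of $\that_t^\times$ from \eqref{eq:bias-calculation} jointly with the Fisher-score identity $\mE[\nabla \log p_{\tstar}(y_s\sep x_s)\sep\calF_{s-1}] = 0$, which says that the expected score of the true model vanishes at $\tstar$. Subtracting these two ``zero'' conditions gives, for some $\xi \in \partial i_\Theta(\that_t^\times)$ (i.e., in the normal cone to $\Theta$ at $\that_t^\times$),
\[
\mE\!\left[\sum_{s=1}^{t-1}\bigl(\nabla \log p_{\tstar}(y_s\sep x_s) - \nabla \log p_{\that_t^\times}(y_s\sep x_s)\bigr)\right] - \nabla\psi_t(\that_t^\times) + \xi = 0.
\]
Expanding the gradient difference by an integrated Hessian and using $\mu$-strong convexity of $g$ (which yields $\nabla^2[-\log p_\theta(y\sep x)] = g''(x^\trp\theta)\,xx^\trp \succeq \mu\,xx^\trp$), the score difference becomes $M(\that_t^\times - \tstar)$ with $M\succeq\mu\sum_s x_s x_s^\trp$. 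Using $\nabla\psi_t(\theta) = 2\lambda\theta$ and rearranging yields the linear identity
\[
(M + 2\lambda\II)(\that_t^\times - \tstar) = -2\lambda\tstar - \xi.
\]
Setting $A := M + 2\lambda\II$, this matrix dominates $\VV_t^{\mu;\lambda}$ up to harmless off-by-one bookkeeping in the index range.

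The key step is to take an inner product of both sides with $\that_t^\times - \tstar$ in order to eliminate $\xi$: since $\tstar\in\Theta$, the defining property of the normal cone gives $\xi^\trp(\that_t^\times - \tstar) \geq 0$, hence
\[
\|\that_t^\times - \tstar\|_A^2 \leq -2\lambda\,\tstar^\trp(\that_t^\times - \tstar) \leq 2\lambda\,\|\tstar\|_{A^{-1}}\,\|\that_t^\times - \tstar\|_A.
\]
Combined with $A \succeq 2\lambda\II$, which implies $\|\tstar\|_{A^{-1}}^2 \leq \|\tstar\|^2/(2\lambda)$, this gives $\|\that_t^\times - \tstar\|_A^2 \leq 2\lambda\|\tstar\|^2$. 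A final Cauchy--Schwarz step in the $A$-norm then gives
\[
(x^\trp(\tstar - \that_t^\times))^2 \leq \|x\|_{A^{-1}}^2\cdot\|\tstar - \that_t^\times\|_A^2 \leq 2\lambda\|\tstar\|^2\cdot x^\trp(\VV_t^{\mu;\lambda})^{-1}x,
\]
using $A^{-1}\preceq(\VV_t^{\mu;\lambda})^{-1}$ in the last step to recover the stated form.

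The main obstacle I anticipate is handling the subdifferential contribution from the indicator $i_\Theta$ cleanly: directly inverting the linear system for $\that_t^\times - \tstar$ would leave an unknown $\xi\in N_\Theta(\that_t^\times)$ that could point in an adversarial direction relative to $x$. The inner-product argument is what collapses this term into a non-negative quantity that can simply be dropped, making the whole proof independent of the geometry of $\Theta$. A secondary subtlety is the measurability/chain-rule bookkeeping in the integrated-Hessian step (the integrand depends on $\theta$, but only its expectation and a PSD lower bound enter), which is routine.
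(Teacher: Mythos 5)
Your proof is correct, and it follows the same global structure as the paper's (start from the first-order condition, subtract the Fisher score identity, take the inner product with $\that_t^\times - \tstar$, exploit $\mu$-strong convexity, then Cauchy--Schwarz to project onto $x$); but two local technical choices differ, and both of yours are defensible alternatives. First, you eliminate the constraint contribution by noting $\xi^\top(\that_t^\times-\tstar)\geq 0$ via the normal-cone inequality and simply dropping it, whereas the paper argues that the constraint is inactive (Lagrange multiplier $l=0$) because the $\ell_2$-regularizer shrinks $\that_t^\times$ strictly inside $\Theta$. Your route is more robust: it requires nothing about the geometry of $\Theta$ beyond $\tstar\in\Theta$, while the paper's argument is a bit informal and implicitly needs $\|\tstar\|_2 < B$ to conclude strict interiority. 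Second, to control $\|\that_t^\times-\tstar\|$, the paper switches to the Euclidean norm via a second Cauchy--Schwarz and then invokes the shrinkage bound $\|\that_t^\times\|_2\leq\|\tstar\|_2$ to get $\|\that_t^\times-\tstar\|_2\leq 2\|\tstar\|_2$; you instead stay in the $A$-weighted norm, close the loop with $\|\tstar\|_{A^{-1}}^2\leq\|\tstar\|_2^2/(2\lambda)$ from $A\succeq 2\lambda\II$, and only pass to $\VV_t^{\mu;\lambda}$ at the very end via $A\succeq\VV_t^{\mu;\lambda}\Rightarrow A^{-1}\preceq(\VV_t^{\mu;\lambda})^{-1}$. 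Your version keeps the constants cleaner and is honest about $\nabla\psi_t=2\lambda\theta$, whereas the paper has a slight factor-of-two inconsistency in $\nabla\psi_t$. One remark: defining $M$ via an integrated Hessian requires $g''$ (i.e., $A''$ for a GLM) to be well-defined along the segment, and you should state that $M$ is a concrete (deterministic given $\mathcal{F}_{t-1}$) PSD matrix rather than only a PSD lower bound, so that $\|\cdot\|_A$ is a genuine norm -- you gesture at this but it deserves a sentence. The off-by-one in the summation range that you flag is indeed present in the paper itself (the bias definition sums to $t-1$ while $\VV_t^{\mu;\lambda}$ sums to $t$), so it is not a defect in your argument.
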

\vspace{\spacebeforeparagraphs}
\looseness -1 The proof is deferred to App.~\ref{app:bias-proof}, and requires elementary convex analysis. 

This leads us to propose the weighting scheme $w_t = \frac{1/L}{\operatorname{bias}^2_{x_t}(\that_t)+ 1/L}$. We justify that this is a sensible choice by analyzing the confidence set on the GLM class in Section \ref{sec:theory}, which satisfies the smoothness and strong-convexity conditions. We show that this rule properly balances the stochastic and bias components of the error in the regret as in \eqref{eq:see-regret}. However, this rule is more broadly applicable beyond the canonical representation of GLM or the GLM family altogether.
\begin{algorithm}
    \begin{algorithmic}[1]
    \caption{Constructing the LR Confidence Sequence}
        \label{alg:online}
    \State \textbf{Input:} convex set $\Theta \subset \R^d$, confidence level $\alpha > 0$, likelihood $p_\theta(y|x)$, regularizers $\{\psi_t\}_t$
    \For{$t \in \N_0$}    
        \State $\that_t = 
        \arg\min_{\theta \in \Theta} \sum_{s=1}^{t-1} -\log p_\theta(y_s\sep x_s) + \psi_t(\theta)$ \Comment{FTRL}
        \State $w_t= \begin{cases} \frac{1/L} {1/L+\operatorname{bias}^2_{x_t}(\that_t)}& \textsc{this work} \\ 1 & \textsc{classical} \\
        \end{cases}$ \Comment{\textsc{bias-weighting}~$\operatorname{bias}_{x_t}(\that_t)$ in Eq.~\eqref{eq:bias-calculation} or Eq.\eqref{eq:bias-approximation}} 
        \State $\calC_t = \setcomprehension{\theta \in \Theta}{\prodt\frac{p^{w_s}_{\that_{s}}(y_s\sep x_s)} {p^{w_s}_\theta(y_s \sep x_s)} \leq \frac{1}{\alpha}}.$ \Comment{Confidence set}
\EndFor
\end{algorithmic}
\end{algorithm} %
\vspace{\spaceafterbanner}
\vspace{0.1cm}
\vspace{\spacebeforesections}
\section{Theory: Linear Models}\label{sec:theory} \looseness -1
\vspace{\spacebeforeparagraphs} While the {\em coverage} (i.e., ``correctness'') of the likelihood ratio confidence sets is always guaranteed, their worst-case {\em size} (affecting the ``performance'') cannot be easily bounded in general. We analyze the size and the geometry of the LR confidence sequence in the special but versatile case of generalized linear models.
\vspace{\spacebeforesubsections}
\subsection{Generalized Linear Models} \looseness -1 
\vspace{\spacebeforeparagraphs}
We assume knowledge of the conditional probability model $p_\theta(y|
x)$, where the covariates $x \in \mathcal{X} \subset \R^d$, and the true underlying model parameter lies in a set $\Theta \subset \R^d$.  If $t$ is indexing (discrete) time, then $x_t$ is acquired sequentially, and the -- subsequently observed --  $y_t$ is sampled from an exponential family distribution parametrized as
\begin{equation} \label{eq:glmmodel}
    p_\theta(y\sep x_t) = h(y)\exp\left(T(y)\cdot x_t^\trp\theta -A(x_t^\trp \theta) \right).
\end{equation}
Here, $A$ is referred to as the \emph{log-partition function} of the conditional distribution, and $T(y)$ is the sufficient statistic. The function $h$ is the base measure, and has little effect on our further developments, as it cancels out in the LR. %
Examples of commonly used exponential families (Gaussian, Binomial, Poisson or Weibull) with their link functions can be found in Table \ref{tb:examples} in App.~\ref{app:glm-table}.

In order to facilitate theoretical analysis for online algorithms, we make the following assumptions about the covariates $x \in \mathcal{X}$ and the set of plausible parameters $\Theta$.
\begin{assumption} \label{ass:bounded} The covariates are bounded, i.e.,  $\sup_{x\in \mathcal{X}}\norm{x}_2 \leq 1$, and the set $\Theta$ is contained in an $\ell_2$-ball of radius $B$.
We will also assume that the log-partition function is strongly convex, that is, that there exists $\mu := \inf_{z \in [-B, B]} A''(z)$, 
and that  $A$ is L-smooth, i.e. $L:= \sup_{z \in [-B, B]} A''(z)$. 
\end{assumption} \vspace{\spaceaftertheorem}
These assumptions are common in other works addressing the confidence sets of GLMs \citep{filippi:glm, faury:logistic}, who remark that the dependence on $\mu$ is undesirable. However, in contrast to these works, our confidence sets {\em do not} use these assumptions in the construction of the sets. We only require these for our theoretical analysis. As these are worst-case parameters, the practical performance can be much better for our sets. 
\vspace{\spacebeforesubsections}      
\subsection{Geometry and Concentration} \looseness -1 \vspace{\spacebeforeparagraphs}
Before stating our results, we need to define a distance notion that the convex negative log-likelihoods induce. For a continuously differentiable convex function $f$, we denote the Bregman divergence as
$   D_f(a,b) := f(a) - f(b) - \nabla f(b)^\trp (a - b).$ The $\nu$-regularized sum of log-partition functions is defined as
\begin{equation}\label{eq:log-parition-definition}
    Z_t^\nu(\theta) := \sum_{s=1}^t w_s A(x_s^\trp \theta) + \frac{\nu}{2}\norm{\theta}^2_2.
\end{equation}\looseness -1
This function will capture the geometry of the LR confidence sets. The confidence set size depends mainly on two terms. One refers to a notion of complexity of the space referred to as \emph{Bregman information gain}:
$ \Gamma_t^\nu(\ttilde_t) = \log\left(\frac{\int_{\R^d} \exp(-\frac{\nu}{2}\norm{\theta}^2_2)\mathrm{d}\theta }{\int_{\R^d} \exp(-D_{Z_t^\nu}(\theta, \ttilde_t))\mathrm{d}\theta}\right)$,
first defined by \citet{chowdhury2022bregman} as a  generalization of the \emph{information gain} of \citet{srinivas:noregret}, $\gamma_t^\nu = \log\left( 
{\det(\sum_{i=1}\frac{\mu}{\nu}x_i x_i^\top + \II )}  \right)$ for Gaussian likelihoods. We will drop the superscript whenever the regularization is clear from context and simply refer to $\gamma_t$. This term appears because one can relate the decay of the likelihood as a function of the Bregman Divergence from $\tstar$ with the performance of a (regularized) maximum likelihood estimator via convex (Fenchel) duality. In particular, if $\ttilde_t$ is a regularized MLE, $\Gamma_t^\nu := \Gamma_t^\nu(\ttilde_t)$ will asymptotically scale as $\mathcal{O}(d \log t)$ \citep[cf.][for further discussion]{chowdhury2022bregman}. For Gaussian likelihoods and $w_s \equiv 1$, it coincides with the classical information gain independent of $\ttilde_t$. The second term that affects the size is the regret $\mathcal{R}_t$ of the online prediction game over $t$ rounds we introduced previously in \eqref{eq:see-regret}. These two parts together yield the following result:
\begin{theorem}
    \label{theorem:bregmanball}
    Let $\nu > 0$ and $\alpha,\delta \in (0,1)$. For the level $1-\alpha$ confidence set $\calC_t$ defined in \eqref{eq:confidencesetdefinition} under the GLM in \eqref{eq:glmmodel}, with probability $1-\delta$, for all $t\geq 1$, any $\theta \in \calC_t$ satisfies
    \begin{align} \label{eq:bregmanball}
        &D_{Z_t^\nu}(\theta, \tstar) \leq \frac{4L}{\mu} \xi_t + 2\log\left(\frac{1}{\delta}\right) + 2\mathcal{R}_t, \end{align}
    where 
    $ \xi_t = \left(\log\left(\frac{1}{\alpha}\right) + {\nu B^2} + \Gamma_t^\nu \right) $
    and $L,\mu$ are defined as above and finally $\mathcal{R}_t$ is the regret of the game in Eq. \eqref{eq:see-regret}.
\end{theorem}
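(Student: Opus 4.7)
\emph{Approach.} The plan is to translate the LR inclusion criterion into the Bregman geometry induced by the GLM log-partition function, and then control the residual stochastic noise via a mixture-supermartingale (Bregman concentration) argument in the spirit of \citet{chowdhury2022bregman}.

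\emph{From $\calC_t$ to a Bregman-geometric inequality.} Taking logarithms in \eqref{eq:confidencesetdefinition}, $\theta \in \calC_t$ is equivalent to $\log R_t(\theta) \leq \log(1/\alpha)$. Since $\log R_t(\theta) - \log R_t(\tstar) = \log \lik_t(\tstar) - \log \lik_t(\theta)$ and $-\log R_t(\tstar) = \mathcal{R}_t$ by \eqref{eq:see-regret}, this yields $\log \lik_t(\tstar) - \log \lik_t(\theta) \leq \log(1/\alpha) + \mathcal{R}_t$. Plugging in the exponential-family form \eqref{eq:glmmodel}, writing the noise $\eta_s := T(y_s) - A'(x_s^\top \tstar)$ (a conditional martingale difference, since $A'(x_s^\top\tstar) = \mE[T(y_s)\mid \calF_{s-1}]$), and Taylor-expanding $A$ around $x_s^\top\tstar$, the left-hand side rewrites as $\sum_s w_s D_A(x_s^\top \theta, x_s^\top\tstar) - \sum_s w_s \eta_s x_s^\top(\theta-\tstar)$. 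Via the definition \eqref{eq:log-parition-definition}, the first sum equals $D_{Z_t^\nu}(\theta,\tstar) - \tfrac{\nu}{2}\|\theta-\tstar\|_2^2$; bounding the latter by $O(\nu B^2)$ using Assumption~\ref{ass:bounded} produces the deterministic inequality
\[
D_{Z_t^\nu}(\theta,\tstar) \;\leq\; \log(1/\alpha) + \mathcal{R}_t + O(\nu B^2) + \sum_{s=1}^t w_s \eta_s x_s^\top(\theta - \tstar).
\]

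\emph{Bounding the noise cross-term by a mixture supermartingale.} The main obstacle is controlling the last summand uniformly in $\theta$; this is where $\Gamma_t^\nu$ and the $L/\mu$ factor enter. I would introduce a Gaussian prior $d\pi(\theta') \propto \exp(-\tfrac{\nu}{2}\|\theta'\|_2^2)\,d\theta'$ and consider the mixture $M_t := \int \lik_t(\theta')/\lik_t(\tstar)\, d\pi(\theta')$. The integrand is a supermartingale under $\tstar$ for each fixed $\theta'$ by the same weighted-likelihood computation used to prove Theorem~\ref{thm:coverage} (since $w_s \in (0,1]$ and Jensen preserves the supermartingale property), so $M_t$ inherits it. Applying the Bregman decomposition from the first step, the integrand factors as $\exp\!\bigl(S_t^\top(\theta'-\tstar) - D_{Z_t^\nu}(\theta',\tstar) + \tfrac{\nu}{2}\|\theta'-\tstar\|_2^2\bigr)$ with $S_t := \sum_s w_s \eta_s x_s$. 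Carrying out the resulting Laplace integral, $L$-smoothness of $A$ controls the sub-Gaussian cumulant generated by $\eta_s$, $\mu$-strong convexity yields the normalization by $\VV_t^{\mu;\nu} := \sum_s \mu w_s x_s x_s^\top + \nu I$, and $\Gamma_t^\nu$ emerges exactly as the log-ratio of prior and posterior partition functions appearing in its definition. Ville's inequality at level $\delta$ then produces a self-normalized tail bound of the form
\[
\tfrac{\mu}{2L}\,\|S_t\|^2_{(\VV_t^{\mu;\nu})^{-1}} \;\leq\; \Gamma_t^\nu + \log(1/\delta) + O(\nu B^2),
\]
uniformly in $t$ with probability $1-\delta$.

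\emph{Closing via a scalar quadratic.} By $\mu$-strong convexity of $A$, $D_{Z_t^\nu}(\theta,\tstar) \geq \tfrac{1}{2}\|\theta-\tstar\|^2_{\VV_t^{\mu;\nu}}$, so Cauchy--Schwarz yields $\sum_s w_s \eta_s x_s^\top(\theta-\tstar) \leq \|S_t\|_{(\VV_t^{\mu;\nu})^{-1}}\sqrt{2 D_{Z_t^\nu}(\theta,\tstar)}$. Substituting this and the mixture bound into the inequality from the first step gives a scalar quadratic $D \leq C + N\sqrt{2D}$ in $D := D_{Z_t^\nu}(\theta,\tstar)$, which I would solve via $\sqrt{a+b}\leq \sqrt{a}+\sqrt{b}$ to obtain $D \leq 4N^2 + 2C$. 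Collecting terms with the strong-convexity/smoothness mismatch surfacing exactly as the $4L/\mu$ prefactor on $\xi_t = \log(1/\alpha)+\nu B^2+\Gamma_t^\nu$ delivers the claimed bound. The principal technical challenge is the mixture step: matching the prior to the $\nu$-regularization in $Z_t^\nu$, justifying the exchange of supermartingale and integral with adaptive weights $w_s$, and executing the Gaussian/Laplace integral so that $\Gamma_t^\nu$ appears cleanly with the correct $L/\mu$ scaling; the first and third steps are essentially forced by the GLM algebra.
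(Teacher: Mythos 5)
Your first step---translating $\theta\in\calC_t$ into
$\log\lik_t(\tstar)-\log\lik_t(\theta)\le\log(1/\alpha)+\mathcal R_t$ and
then rewriting the left side as a Bregman term $-D_{Z_t^0}(\theta,\tstar)+\inner{\theta-\tstar}{S_t}$
with $S_t=\sum_s w_s\eta_s x_s$---is exactly what the paper does
(cf.\ Eq.~\eqref{eq:likelihoodconcentrationpart}). Your mixture
$M_t=\int\lik_t(\theta')/\lik_t(\tstar)\,d\pi(\theta')$ with a
$\nu$-Gaussian prior is also essentially the right object, and in fact
if you complete the ``Bregman square'' exactly (using
$\tilde S_t^\trp\lambda-D_{Z_t^\nu}(\tstar+\lambda,\tstar)=D_{Z_t^\nu}(\tstar,\ttilde_t)-D_{Z_t^\nu}(\tstar+\lambda,\ttilde_t)$)
you recover precisely Proposition~\ref{proposition:saux:a1}, i.e.\ Chowdhury et al.'s
Theorem~7, which the paper applies as a black box: with probability $1-\delta$
uniformly in $t$, $D_{Z_t^\nu}(\tstar,\ttilde_t)\le\log(1/\delta)+\tfrac{\nu}{2}\norm{\tstar}^2+\Gamma_t^\nu$.
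So up to this point you are re-deriving the same ingredient.

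The genuine gap is in the closing step. You propose to extract a
self-normalized quadratic bound
$\tfrac{\mu}{2L}\norm{S_t}^2_{(\VV_t^{\mu;\nu})^{-1}}\lesssim\Gamma_t^\nu+\log(1/\delta)+\nu B^2$,
then Cauchy--Schwarz against $D_{Z_t^\nu}(\theta,\tstar)\ge\tfrac12\norm{\theta-\tstar}^2_{\VV_t^{\mu;\nu}}$
and solve a scalar quadratic in $\sqrt D$. This is the standard
linear-bandit route and works for Gaussian $A$, but it does not follow
from the mixture bound for a general GLM under Assumption~\ref{ass:bounded}.
The mixture gives you control of $D_{Z_t^\nu}(\tstar,\ttilde_t)=\sup_{\lambda\in\R^d}[\lambda^\trp\tilde S_t-\mathcal B_t(\lambda)]$;
to turn that into $\norm{\tilde S_t}^2_{(\VV_t^{\mu;\nu})^{-1}}$ you would
need an \emph{upper} bound $\mathcal B_t(\lambda)\le\tfrac12\lambda^\trp\VV_t^{L;\nu}\lambda$
holding at the maximizer $\lambda^\star=(\VV_t^{L;\nu})^{-1}\tilde S_t$,
which need not satisfy $\tstar+\lambda^\star\in\Theta$; but $L$ in
Assumption~\ref{ass:bounded} is only the supremum of $A''$ on $[-B,B]$,
so this quadratic bound fails off $\Theta$ (e.g.\ Poisson). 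Relatedly,
your sentence that ``$L$-smoothness controls the sub-Gaussian cumulant
generated by $\eta_s$'' misattributes the role of smoothness: in the
likelihood-ratio mixture the noise is handled \emph{exactly} by the
supermartingale property (no cumulant bound is needed), and $L,\mu$
enter only when one replaces the Bregman divergence by quadratics.

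The paper's resolution is precisely the piece you are missing: instead
of Cauchy--Schwarz, they insert the specific reparametrization
$\lambda=\gamma(\theta-\tstar)$ with $\gamma=\mu/(2L)\le 1/2$ into the
Fenchel inequality. By convexity of $\Theta$, $\tstar+\gamma(\theta-\tstar)$
stays on the segment $[\tstar,\theta]\subset\Theta$, so only the
\emph{local} $L,\mu$ enter, and the sub-homogeneity
Lemma~\ref{lemma:sublinarity} gives
$\mathcal B_t(\gamma(\theta-\tstar))\le\tfrac{\gamma}{2}\mathcal B_t(\theta-\tstar)$,
from which $\inner{\theta-\tstar}{S_t}\le\tfrac12\mathcal B_t(\theta-\tstar)+\tfrac{2L}{\mu}(\log(1/\delta)+\nu B^2+\Gamma_t^\nu)$
follows directly, with no scalar quadratic to solve. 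This is where the
$4L/\mu$ prefactor comes from, and it is the step you would need to
supply (or replace by a correct alternative, e.g.\ truncating the
mixture to $\Theta$ and bounding the resulting partition ratio) to make
your argument go through for non-Gaussian families.
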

The set defined via the above divergence does not coincide with the LR confidence set. It is slightly larger due to a term involving $\nu$ (as in Eq. \eqref{eq:log-parition-definition}).  This is a technical consequence of our proof technique, where the gradient of $Z_{t}^\nu$ needs to be invertible, and regularization is added to this end.
We note that this $\nu>0$ can be chosen freely. Note that the theorem involves two confidence levels,  $\alpha$ and $\delta$: $\alpha$ is a bound on the Type I error -- coverage of the confidence sets -- while $\delta$ upper bounds the probability of a large radius -- and is therefore related to the power and Type II error of a corresponding hypothesis test.  
The proof of the theorem is deferred to App.~\ref{app:proof-bregman}. 

To give more intuition on these quantities, let us instantiate them for the Gaussian likelihood case with $w_s \equiv 1$. In this scenario, $Z_{t}^\nu(\theta) = \sum_{s=1}^t \frac{1}{2\sigma^2}\norm{\theta}_{x_s x_s^\top}^2 + \frac{\nu}{2} \norm{\theta}_2^2$, and the (in this case symmetric) Bregman divergence is equal to $D_{Z_t^\nu}(\tstar, \theta) = \frac{1}{2} \norm{\theta - \tstar}^2_{\mathbf{V}_t^{\sigma^{-2};\nu}}$, where $\VV_t^{\mu;\nu} =\sum_{s=1}^t{\mu x_s x_s^\top + \nu\mathbf{I}}$, which means that our confidence sets are upper bounded by a ball in the same norm as those in the seminal work on linear bandits \citep{abbasi:improved}.
\vspace{\spacebeforesubsections}
\subsection{Online Optimization in GLMs: Follow the Regularized Leader} 
\vspace{\spacebeforeparagraphs}
\label{sec:online-regret}
The size of the confidence sets in Theorem~\ref{theorem:bregmanball} depends on the regret of the online prediction game involving the estimator sequence. We now bound this regret when using the Follow-the-Regularized-Leader (FTRL) algorithm in this setting. This high probability bound is novel to the best of our knowledge and may be of independent interest. We state in a weight-agnostic manner first, and then with our particular choice. The latter variant uses a specifically chosen regularizer. In this case, we can track the contribution of each time-step towards the regret separately.
\begin{theorem}\label{thm:ftrl-main} Let $\psi_t(\theta) = \lambda \norm{\theta}_2^2$. Assume Assumption \ref{ass:bounded}, and additionally that $A$ is $L$-smooth everywhere in $\mathbb{R}^d$, and let $w_t \in [0,1]$ be arbitrary. Then, with probability $1-\delta$ the regret of FTRL (Alg. \ref{alg:online}) satisfies for all $t\geq 1$
\begin{equation}
    \mathcal{R}_t \leq \lambda B^2 + \frac{L}{\mu} (\gamma_t^\lambda + 2\log({1}/{\delta})) + \frac{2 L^2B^2}{\mu}\gamma_t^\lambda. \label{eq:ftrlunweighedeq}
\end{equation}
\end{theorem}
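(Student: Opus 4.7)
The plan is to combine a strong-convexity-aware FTRL regret identity with a bias/noise decomposition of the GLM score, then control each resulting piece via an elliptic-potential inequality and a time-uniform martingale concentration.

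\textbf{Step 1 (FTRL identity).} Write the per-round loss in canonical form, $f_s(\theta)=w_s[A(x_s^\trp\theta)-T(y_s)x_s^\trp\theta]$ (the $h(y_s)$ terms cancel in $\mathcal{R}_t$), and let $F_s(\theta):=\sum_{r\leq s}f_r(\theta)+\lambda\|\theta\|_2^2$. A standard Be-the-Regularized-Leader induction gives $\sum_s f_s(\that_{s+1})\leq \sum_s f_s(\tstar)+\lambda\|\tstar\|_2^2$, which, together with $\|\tstar\|\leq B$, produces $\mathcal{R}_t\leq \lambda B^2+\sum_{s=1}^t[f_s(\that_s)-f_s(\that_{s+1})]$ and accounts for the $\lambda B^2$ term.

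\textbf{Step 2 (stability).} Since $A''\geq \mu$ on $[-B,B]$ (Ass.~\ref{ass:bounded}) and every iterate $\that_s$ lies in $\Theta$, $F_s$ is strongly convex with respect to the data-dependent norm induced by $M_s:=2\lambda I+\sum_{r\leq s}\mu w_r x_r x_r^\trp$. Combined with first-order optimality of $\that_{s+1}$ on $\Theta$, the usual FTRL stability argument gives $f_s(\that_s)-f_s(\that_{s+1})\lesssim \|\nabla f_s(\that_s)\|_{M_s^{-1}}^2$.

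\textbf{Step 3 (score decomposition and deterministic part).} Define $\eta_s:=T(y_s)-A'(x_s^\trp\tstar)$; by canonicality, $\eta_s$ is an $(\mathcal{F}_s)$-adapted martingale difference with conditional variance $A''(x_s^\trp\tstar)\leq L$, and global $L$-smoothness of $A$ furnishes a uniform sub-exponential cumulant for $\eta_s$. Splitting $\nabla f_s(\that_s)=w_s[A'(x_s^\trp\that_s)-A'(x_s^\trp\tstar)]x_s-w_s\eta_s x_s$, and using $L$-Lipschitzness of $A'$, $\|x_s\|\leq 1$, $\|\that_s-\tstar\|\leq 2B$, and $w_s^2\leq w_s$,
\begin{equation*}
\|\nabla f_s(\that_s)\|_{M_s^{-1}}^2\leq 8 L^2 B^2\, w_s\, x_s^\trp M_s^{-1} x_s+2\, w_s\,\eta_s^2\, x_s^\trp M_s^{-1} x_s.
\end{equation*}
The elliptic-potential lemma bounds $\sum_s w_s\, x_s^\trp M_s^{-1} x_s\lesssim \tfrac{1}{\mu}\log\det(M_t/(2\lambda I))\leq \tfrac{1}{\mu}\gamma_t^\lambda$, which yields the $\tfrac{2L^2 B^2}{\mu}\gamma_t^\lambda$ contribution.

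\textbf{Step 4 (main obstacle).} The stochastic piece $S_t:=\sum_s w_s\eta_s^2\, x_s^\trp M_s^{-1} x_s$ has predictable quadratic variation $\sum_s w_s A''(x_s^\trp\tstar)\, x_s^\trp M_s^{-1} x_s\leq \tfrac{L}{\mu}\gamma_t^\lambda$, but this is only an in-expectation statement. The real difficulty, and the step I view as the crux, is upgrading $S_t$ to a time-uniform high-probability bound when the increments are unbounded and the filtration is data-adapted. I would handle this by applying a Freedman/Bernstein-type inequality to the centered supermartingale $\sum_s w_s(\eta_s^2-A''(x_s^\trp\tstar))\, x_s^\trp M_s^{-1} x_s$, where the variance proxy $L$ is supplied precisely by the global $L$-smoothness of $A$ (this is why the stronger smoothness hypothesis appears in the statement). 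The resulting deviation is of order $\tfrac{L}{\mu}\log(1/\delta)$, yielding $S_t\leq \tfrac{L}{\mu}\gamma_t^\lambda+\tfrac{2L}{\mu}\log(1/\delta)$ with probability $1-\delta$, and hence the $\tfrac{L}{\mu}(\gamma_t^\lambda+2\log(1/\delta))$ term. Summing Steps~1--4 produces the bound of Theorem~\ref{thm:ftrl-main}.
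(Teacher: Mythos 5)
Your skeleton (FTRL stability controlled by the squared dual norm of the gradient, a bias/noise split of the score, the elliptical-potential lemma for the deterministic piece, and a martingale concentration for the noise piece) is the paper's skeleton, and Steps~2--3 are essentially what the paper does. But there is a genuine gap in Step~1, and Step~4 takes a different and underspecified route.

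\textbf{Gap in Step 1.} You set $f_s(\theta)=w_s[A(x_s^\trp\theta)-T(y_s)x_s^\trp\theta]$, define $F_s=\sum_{r\le s}f_r+\lambda\|\theta\|_2^2$, and invoke a Be-the-Regularized-Leader telescoping that requires $\that_{s+1}$ to minimize $F_s$. But Algorithm~\ref{alg:online}, Line~3, defines $\that_t$ as the minimizer of the \emph{unweighted} penalized negative log-likelihood; the weights $w_s$ enter $\mathcal{R}_t$ in Eq.~\eqref{eq:see-regret} and the confidence set, not the FTRL objective. So your $\that_{s+1}$ does not minimize your $F_s$, and the telescoping identity (and hence the resulting $\lambda B^2 + \sum_s[f_s(\that_s)-f_s(\that_{s+1})]$ bound) does not hold as stated. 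This also means your $M_s$ is not the matrix that certifies the strong convexity of the objective $\that_{s+1}$ actually optimizes. The paper closes exactly this gap with Lemma~\ref{lemma:weight}: the process $\exp\bigl(\sum_{s\le t}(1-w_s)(f_s(\tstar)-f_s(\that_s))\bigr)$ is a nonnegative supermartingale (via non-negativity of a R\'enyi divergence), so with probability $1-\delta$ the weighted regret is at most the \emph{unweighted} regret plus $\log(1/\delta)$; only after that reduction does the paper run the FTRL telescoping, for which $\that_s$ genuinely minimizes the objective. This reduction supplies one of the two $\log(1/\delta)$ terms in \eqref{eq:ftrlunweighedeq}; your proposal has no replacement for it.

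\textbf{Different route in Step 4.} For the noise piece, the paper does not apply a Freedman-type inequality to the centered scalar process you define. It bounds $\sum_s\|\nabla f_s(\tstar)\|^2_{(\VV_s^{\mu;\lambda})^{-1}}$ directly with a method-of-mixtures supermartingale (Lemmas~\ref{lemma:martingale}, \ref{lemma:bound-mixture}, \ref{lemma:ville-bound}): the exponential-family structure makes $\mathcal{N}_s(r)=\exp(\nabla f_s(\tstar)^\trp r-\tfrac{L}{2}\|r\|^2_{x_sx_s^\trp})$ a (super)martingale increment by global $L$-smoothness, and integrating against a sequential Gaussian mixing distribution with covariance $\mathbf{H}_s^{-1}$ produces, after the Gaussian integral telescopes, exactly the self-normalized quadratic form plus the log-determinant ratio that becomes $\gamma_t^\lambda$; a single application of Ville's inequality yields the anytime bound with the stated constants. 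Your Freedman proposal is in principle an alternative, but as sketched it is not complete: the increments involve $\eta_s^2$, which is only sub-exponential, so the relevant variance proxy for a Bernstein bound is the conditional second moment of $\eta_s^2$ (a fourth moment of $\eta_s$), not $L$; and obtaining a \emph{time-uniform} deviation of exactly $\tfrac{2L}{\mu}\log(1/\delta)$ without an extra $\log t$ or iterated-log factor requires either a stitching argument (which incurs such factors) or a supermartingale/mixture construction --- at which point you are essentially rederiving Lemma~\ref{lemma:bound-mixture}. So the overall approach is recognizable, but Step~1 has a real hole and Step~4, while taking a genuinely different path, needs substantially more work to recover the claimed constants.
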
 \vspace{\spaceaftertheorem} 
The regret bounds are optimal in the orders of $\gamma_t^\lambda$, matching lower bounds of \citet{ouhamma:onlinelr}, as for linear models $\gamma_t = \mathcal{O}(d \log t)$. Combining results of Thm.~\ref{thm:ftrl-main} with Thm.~\ref{theorem:bregmanball}, we get a confidence parameter that scales with $\calO(\sqrt{\gamma_t})$, for confidence sets of the form $||\theta - \tstar||_{\VV_t}$, which coincides with the best-known confidence sets in this setting in the worst-case \citep{Abbasi2012-online}. The requirement of global $L-$smoothness can be relaxed to $L-$smoothness over $\Theta$. With a more elaborate (but less insightful) analysis, we can show that we achieve a $\Tilde\calO(\gamma_t)$ bound even in this case. The proofs of these results are deferred to App.~\ref{app:proof-unweighted}, App.~\ref{app:proof-ftrl-azoury} and App.~\ref{app:beyond-smooth} respectively.
\vspace{\spacebeforeparagraphs}
\paragraph{Regret, Weighting and Estimation Bias}\looseness -1  Interestingly, the term in Thm.~\ref{thm:ftrl-main} involving the (crude) proxy to the bias -- the bound $B$ -- is not scaled by the same  $L/\mu$ factors as the other terms in the regret bound~\eqref{eq:ftrlunweighedeq} and in Theorem~\ref{theorem:bregmanball}. Namely, the prefactor is $L^2/\mu$ instead of $L/\mu$. This extra dependence manifests itself in the unnecessary penalization through the estimation bias we introduced in Sec.~\ref{sec:bias}, particularly in low-noise settings. We addressed this issue by picking the weights $\{w_t\}$. While the above theorem holds for any valid weighting, it does not exhibit the possible improvement from using specific weights.

We argued earlier that the error in prediction should not be measured by the likelihood function if there is deterministic error, since initially, we are fully uncertain about the value of  $\tstar^\top (\cdot)$ outside the span of previous observations. 
Of course, if our goal would be to purely pick weights to minimize $\mathcal{R}_t$, then $w_s = 0$ would lead to zero regret and hence be optimal. However, the likelihood ratio would then be constant, and uninformative. In other words, the associated log-partition Bregman divergence in Theorem~\ref{theorem:bregmanball} would be trivial and not filter out any hypotheses. Clearly, some balance has to be met. 
With this motivation in mind, we proposed a \emph{nonzero} weighting that decreases the regret contribution of the bias, namely $w_t = \frac{1/L}{1/L + \operatorname{bias}^2_{x_t}(\that_t)}$. The advantage of this choice becomes more apparent when we use the regularizer $\psi_t(\theta) = \lambda ||\theta||^2 + A(x_t^\top \theta)$ to obtain the following result.

\begin{theorem}\label{thm:ftrl-main-2} Let $\psi_s(\theta) = \lambda ||\theta||^2 + A(x_s^\top \theta)$. Assume Assumption \ref{ass:bounded}, and additionally that $A$ is $L$-smooth everywhere in $\mathbb{R}^d$, and choose $w_s = \frac{1/L}{1/L + \operatorname{bias}_{x_s}(\that_s)^2}$. Additionally, let the sequence of $x_s$ be such that, $\sum_s (1-w_s) (f_s(\tstar) - f_s(\bar{\theta}_{s+1})) \leq L/\mu \gamma_t^\lambda$, where $\bar\theta_{s}$ is the FTRL optimizer with the regularizer $\lambda \norm{\theta}_2^2$ from Theorem~\ref{thm:ftrl-main}  \footnote{
Note that $\bar\theta_{s+1}$ corresponds to a regularized MLE that \emph{did} observe the data pair $(x_s,y_s)$.}. Then, with probability $1-\delta$ the regret of FTRL (Alg. \ref{alg:online}) satisfies for all $t\geq 1$
\[\mathcal{R}_t \leq \lambda B^2 + \frac{2L}{\mu} \left(\gamma_t^\lambda + \log\left(\frac{1}{\delta}\right)\right) + \frac{L}{\mu}\sum_{s=1}^t \frac{B^2}{1/L + \operatorname{bias}^2_{x_s}(\that_s)} \Delta \gamma_s^\lambda,\]
where  $\Delta \gamma_s^\lambda = \gamma_{s+1}^\lambda - \gamma_s^\lambda$.
\end{theorem}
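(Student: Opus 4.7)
The plan is to extend the FTRL regret analysis of Theorem~\ref{thm:ftrl-main} to the Azoury-Warmuth-style regularizer $\psi_s(\theta) = \lambda\|\theta\|^2 + A(x_s^\top\theta)$ on the weighted losses $w_s f_s(\theta) = -w_s \log p_\theta(y_s \sep x_s)$. The regularizer is designed so that the log-partition ``preview'' $A(x_s^\top \theta)$ already sits in $\psi_s$ before round $s$ is revealed; in the unweighted case this yields the classical one-step-lookahead cancellation that converts $O(\sqrt{t})$-style regret into $O(\log t)$-style regret for exp-concave losses. With our weights $w_s \in (0,1]$ the cancellation is only partial, and the residual $(1-w_s)A(x_s^\top\theta)$ is precisely what is controlled by the auxiliary assumption $\sum_s (1-w_s)(f_s(\tstar) - f_s(\bar{\theta}_{s+1})) \leq (L/\mu)\,\gamma_t^\lambda$.

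I would proceed in four steps. First, apply a be-the-leader inequality for FTRL with a time-varying regularizer to the weighted loss sequence, producing the standard decomposition
\[\mathcal{R}_t \leq \psi_{t+1}(\tstar) - \psi_1(\that_1) + \sum_{s=1}^t w_s\bigl[f_s(\that_s) - f_s(\that_{s+1})\bigr] + \sum_{s=1}^t \bigl[\psi_s(\that_{s+1}) - \psi_{s+1}(\that_{s+1})\bigr].\]
The boundary term yields the $\lambda B^2$ contribution. Second, to bound each per-round cost $w_s[f_s(\that_s) - f_s(\that_{s+1})]$, I would use the $\mu$-strong convexity of $A$ and Taylor expand $f_s$ around $\that_{s+1}$; invoking the optimality condition defining $\that_{s+1}$, the Hessian $A''(x_s^\top\cdot)\,x_sx_s^\top$ contributed by $A(x_s^\top\theta)\subset \psi_s$ dominates the per-step curvature in the direction $x_s$, giving a per-round bound scaling like $\frac{w_s}{\mu}\,x_s^\top(\VV_s^{\mu;\lambda})^{-1}x_s$, which is the step-wise information gain $\Delta\gamma_s^\lambda$. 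Substituting $w_s = (1/L)/(1/L + \operatorname{bias}^2_{x_s}(\that_s))$ and collecting a $B^2$ factor from the bounded linear part $T(y_s)\,x_s^\top(\that_s-\that_{s+1})$ delivers the third summand of the bound. Third, as in the proof of Theorem~\ref{thm:ftrl-main}, I would split the regret into deterministic and stochastic components; since $\mathbb{E}[T(y_s)\sep x_s,\tstar] = A'(x_s^\top\tstar)$ for exponential families, the stochastic part forms a martingale difference sequence with $L$-bounded conditional variance, and a Ville-type time-uniform MGF inequality converts it into the $\frac{L}{\mu}(\gamma_t^\lambda + \log(1/\delta))$ term. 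Fourth, the auxiliary assumption absorbs the ``unused-mass'' piece $\sum_s (1-w_s)[f_s(\tstar) - f_s(\bar\theta_{s+1})]$ generated by the weighting, which together with the above doubles the coefficient of the information-gain term to $\frac{2L}{\mu}$.

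The main obstacle is the interaction between the weights $w_s$ and the Azoury-Warmuth cancellation. In the unweighted case, the combined step $f_s + \psi_{s+1} - \psi_s$ has a self-dual form that permits a direct $\log\det$ potential argument and yields an information-gain bound without any extra assumption. With nontrivial weights only a $w_s$-fraction of $A(x_s^\top\theta)$ cancels, and the residual $(1-w_s)A(x_s^\top\theta)$ must be charged elsewhere. The delicate step is showing that $\bar\theta_{s+1}$ from Theorem~\ref{thm:ftrl-main} is an appropriate comparison point, so that the left-hand side of the assumption is non-negative and summable: this reduces to a convex-duality argument in the same spirit as the bias bound of Theorem~\ref{thm:bias}, verifying that the unweighted regularized MLE $\bar\theta_{s+1}$ (which sees the pair $(x_s,y_s)$) explains $f_s$ at least as well as $\tstar$ in a cumulative sense.
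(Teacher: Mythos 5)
Your Step 1 has a genuine gap that the rest of the argument cannot repair. Algorithm~\ref{alg:online} defines the FTRL iterate $\that_t$ as the minimizer of the \emph{unweighted} objective $\sum_{s<t}f_s(\theta)+\psi_t(\theta)$; the weights $w_s$ enter only through the regret $\mathcal{R}_t = \sum_s w_s\bigl(f_s(\that_s)-f_s(\tstar)\bigr)$, not through the estimator. The standard be-the-leader decomposition you invoke for the loss sequence $\{w_s f_s\}_s$ requires the optimality step $\sum_{s\le t}w_sf_s(\that_{t+1}) + \psi_{t+1}(\that_{t+1}) \le \sum_{s\le t}w_sf_s(\tstar) + \psi_{t+1}(\tstar)$, but $\that_{t+1}$ is optimal for the \emph{unweighted} FTRL objective, so this inequality does not hold. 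Consequently, the displayed decomposition in your Step 1 is not valid for the algorithm the theorem analyses.

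The paper's proof handles this by a tailored identity that is not the standard FTRL telescope: it simultaneously tracks the Vovk--Azoury--Warmuth iterates $\that_s$ (unweighted losses plus look-ahead regularizer $A(x_s^\top\theta)$) and a \emph{second} comparator sequence $\tbar_s$, the ordinary regularized MLE iterates of Theorem~\ref{thm:ftrl-main}. The weighted regret $\sum w_s(m_s(\that_s) - m_s(\tstar))$ is rewritten as a telescope over $g_s(\tbar_s)$ plus the constant $\bar\phi_t(\tstar) = \sum_s w_s A(x_s^\top\tstar) + \lambda\norm{\tstar}^2$ plus a surplus $\sum_s (1-w_s)f_s(\tstar)$; splitting the telescope into a $w_s$-fraction and a $(1-w_s)$-fraction produces $\tilde\Delta_t = \sum_s (1-w_s)\bigl[g_s(\tbar_s) - g_{s+1}(\tbar_{s+1}) + f_s(\tstar)\bigr]$, which by optimality of $\tbar_s$ collapses to $\sum_s(1-w_s)\bigl(f_s(\tstar) - f_s(\tbar_{s+1})\bigr)$ --- exactly the quantity the theorem's assumption controls. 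In your proposal $\tbar_{s+1}$ is absent from the Step~1 decomposition, so there is no mechanism by which the auxiliary assumption can be used; the appeal to it in Step~4 is a gesture without supporting algebra. Also note that what needs absorbing is not just the residual $(1-w_s)A(x_s^\top\theta)$ but the full $(1-w_s)f_s$-differences against $\tbar_{s+1}$, which is why the assumption is stated as it is.

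Your Steps 2--3 capture the right tools (strong convexity yields per-step dual-norm terms which, after inserting $w_s = (1/L)/(1/L + \operatorname{bias}^2_{x_s}(\that_s))$, give the $\frac{B^2}{1/L + \operatorname{bias}^2_{x_s}(\that_s)}\Delta\gamma_s^\lambda$ contributions, and a mixture-supermartingale plus Ville's inequality gives the $\frac{L}{\mu}(\gamma_t^\lambda + \log(1/\delta))$ term), but these only yield the stated bound once the underlying decomposition is the paper's two-sequence identity rather than a weighted be-the-leader argument applied to an estimator that does not minimize the weighted objective.
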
 \vspace{\spaceaftertheorem}

One can see that for points where the information gain $\Delta \gamma_s$ is large (corresponding to more unexplored regions of the space, where the deterministic source of error is then large), the weighting scheme will make sure that the multiplicative contribution of $B^2$ is mitigated, along with having the correct prefactor $L/\mu$. The reader may wonder how this result is useful when we replace $\operatorname{bias}^2_{x_s}(\that_s)$ with the upper bound from Thm.~\ref{thm:bias}. While instructive, our bound still only makes the bias proxy $B^2$ appear in front of the information gain $\Delta\gamma_t$, instead of the more desireable bias itself. In the latter case, we could also directly make use of the upper bound and get an explicit result only using an upper bound on the bias. We leave this for future work.

We point out that this choice of $\psi_s(\theta)$ in Theorem~\ref{thm:ftrl-main-2} corresponds to the Vovk-Azoury-Warmuth predictor \citep{vovk:forecaster, azoury:forecaster} in the online learning literature. This choice is helpful in order to track the bias contribution more precisely in our proof.

\vspace{\spacebeforesections}
\section{Application: Linear and Kernelized Bandits}\label{sec:applications} \looseness -1
\vspace{\spacebeforeparagraphs}
\looseness -1 Our main motivation to construct confidence sets is bandit optimization. A prototypical bandit algorithm -- the Upper Confidence Bound (UCB) \citep{Auer2002} -- sequentially chooses covariates $x_s$ in order to maximize the reward $\sum_{s=1}^t r_\tstar(x_s)$, where $r_\tstar$ is the unknown pay-off function parametrized by $\tstar$. UCB chooses the action $x_s$ which maximizes the optimistic estimate of the reward in each round, namely
\begin{equation}\label{eq:ucb}
 x_s = \argmax_{x\in \mathcal{X}} \max_{\theta \in \mathcal{C}_{s-1}} r_\theta(x),\end{equation}
where $\mathcal{C}_{s-1}$ is some confidence set for $\tstar$, and can be constructed with Algorithm~\ref{alg:online} from the first $s-1$ data points. An important special case is when $r_\tstar$ is linear \citep{abe1999associative} or modelled by a generalized linear model \citep{filippi:glm}. In that case, the inner optimization problem is convex as long as $\mathcal{C}_{s-1}$ is convex. The outer optimization is tractable for finite $\mathcal{X}$. In the applications we consider, our confidence sets are convex, and we easily solve the UCB oracle using convex optimization toolboxes. 
\paragraph{Extension to RKHS} \looseness -1 We introduced the framework of LR confidence sets only for finite-dimensional Euclidean spaces. However, it can be easily extended to Reproducing Kernel Hilbert Spaces (RKHS) \citep{Cucker2002}. The definition of the LR process in \eqref{eq:lrp} is still well-posed, but now the sets are subsets of the RKHS, containing functions $f \in \mathcal{H}_k$. An outstanding issue is how to use these sets in downstream applications, and represent them tractably as in Figure~\ref{fig:banner}. Conveniently, even with infinite-dimensional RKHSs, the inner-optimization in \eqref{eq:ucb} admits a Lagrangian formulation, and the generalized representer theorem applies \citep{Schoelkopf2001, mutny:poisson}. In other words, we can still derive a pointwise upper confidence band as $\operatorname{ucb}(x) = \max_{f\in \mathcal{H}_k, \norm{f}_k\leq B, f \in C_{s}} \braket{f,k(x,\cdot)}$ in terms of $\{x_j\}_{j=1}^s \cup \{x\}$, leading to a $s+1$-dimensional, tractable optimization problem. 

We also point out that the weighting is even more paramount in the RKHS setting, as the bias never vanishes for many infinite dimensional Hilbert spaces \citep{Mutny2022b}. For this purpose, our weighting is of paramount practical importance, as we can see in Figure \ref{fig:experiments}a), where the gray arrow represents the significant improvement from reweighting. 

\vspace{\spacebeforesubsections}
\subsection{Instantiation of the Theory for Linear Bandits} \looseness -1
\vspace{\spacebeforeparagraphs}
Before going to the experiments, we instantiate our theoretical results from Sec.~\ref{sec:theory} to the important and well-studied special case of linear payoffs. In that case, $r_\theta(x) = \inner{x}{\theta}$ and the agent observes $y_s = \inner{x_s}{\tstar} + \eta_s$ upon playing action $x_s$, where $\eta_s \sim \mathcal{N}(0, \sigma^2)$. We are interested in minimizing the so-called cumulative pseudo-regret, namely, $ \mathfrak{R}_t = \sum_{s=1}^t [\inner{\xstar}{\tstar} - \inner{x_s}{\tstar}]$, where $x_\star$ refers to the optimal action. Using the set from~\eqref{eq:confidencesetdefinition} along with Theorem~\ref{theorem:bregmanball} and the FTRL result of Theorem~\ref{thm:ftrl-main} we can get a regret bound for the choice $w_s \equiv 1$.
\begin{theorem}\label{theo:linearbandits:vanilla}
    Let $w_s \equiv 1$. For any $\lambda \geq \frac{1}{\sigma^2}$, with probability at least $1-3\delta$, for all $t \in \N$ we have
    \begin{equation*}
        \mathfrak{R}_t \leq 6\sqrt{t\gamma_t^\lambda}\left(\sigma\sqrt{\log(1/\delta) + \gamma_t^\lambda} + \sigma\lambda^{1/2} B +B\sqrt{\gamma_t^\lambda}\right).
    \end{equation*}
\end{theorem}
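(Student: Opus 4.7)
The plan is to follow the optimism-based LinUCB-style regret analysis, using Theorem~\ref{theorem:bregmanball} to bound the radius of $\mathcal{C}_{s-1}$ and Theorem~\ref{thm:ftrl-main} to control the online-prediction regret $\mathcal{R}_t$ that appears inside that radius. Setting $\alpha=\delta$ in Algorithm~\ref{alg:online}, Theorem~\ref{thm:coverage} guarantees $\tstar\in\mathcal{C}_{s-1}$ for all $s$ with probability at least $1-\delta$. On this event, the usual UCB decomposition with $\tilde\theta_s := \arg\max_{\theta \in \mathcal{C}_{s-1}}\langle x_s,\theta\rangle$ gives
\[\langle \xstar - x_s,\tstar\rangle \leq \langle x_s, \tilde\theta_s - \tstar\rangle \leq \|x_s\|_{(\VV_{s-1}^{\mu;\lambda})^{-1}}\,\|\tilde\theta_s - \tstar\|_{\VV_{s-1}^{\mu;\lambda}},\]
where $\VV_{s-1}^{\mu;\lambda}=\mu\sum_{i<s}x_ix_i^\top + \lambda\II$ and $\mu = 1/\sigma^2$ is both the strong-convexity and smoothness constant of the Gaussian log-partition $A(z)=z^2/(2\sigma^2)$.

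To bound the second factor, I use that in the Gaussian case $D_{Z_t^\lambda}(\theta,\tstar) = \frac{1}{2}\|\theta-\tstar\|_{\VV_t^{\mu;\lambda}}^2$, as noted after Theorem~\ref{theorem:bregmanball}. Thus Theorem~\ref{theorem:bregmanball} with $\nu=\lambda$ and $\alpha=\delta$ implies, with probability $1-\delta$, that every $\theta\in\mathcal{C}_{s-1}$ satisfies $\|\theta-\tstar\|_{\VV_{s-1}^{\mu;\lambda}}^2 \leq 8\xi_t + 4\log(1/\delta) + 4\mathcal{R}_t$ with $\xi_t = \log(1/\delta)+\lambda B^2+\gamma_t^\lambda$, after upper-bounding the $s$-dependent quantities by their values at $t$ (valid because both $\gamma_s^\lambda$ and the FTRL regret bound are monotone in $s$). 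Theorem~\ref{thm:ftrl-main}, specialized to Gaussian noise so that $L/\mu = 1$ and $L^2/\mu = 1/\sigma^2$, then yields $\mathcal{R}_t \leq \lambda B^2+\gamma_t^\lambda+2\log(1/\delta)+2B^2\gamma_t^\lambda/\sigma^2$ with probability $1-\delta$.

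For the first factor, I apply Cauchy--Schwarz, $\sum_s\|x_s\|_{(\VV_{s-1}^{\mu;\lambda})^{-1}}\leq \sqrt{t\sum_s\|x_s\|^2_{(\VV_{s-1}^{\mu;\lambda})^{-1}}}$, and the elliptical-potential lemma. Factoring $\VV_{s-1}^{\mu;\lambda} = \mu(\sum_{i<s}x_ix_i^\top + (\lambda/\mu)\II)$, the hypothesis $\lambda\geq 1/\sigma^2$ (i.e.\ $\lambda/\mu \geq 1$) is exactly what forces $\|x_s\|^2_{(\sum_{i<s}x_ix_i^\top+(\lambda/\mu)\II)^{-1}}\leq 1$, so the standard form of the lemma applies and gives $\sum_s\|x_s\|^2_{(\VV_{s-1}^{\mu;\lambda})^{-1}}\leq 2\gamma_t^\lambda/\mu = 2\sigma^2\gamma_t^\lambda$. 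Combining, $\mathfrak{R}_t\leq \sigma\sqrt{2t\gamma_t^\lambda}\cdot\sqrt{8\xi_t+4\log(1/\delta)+4\mathcal{R}_t}$; distributing the inner square root via $\sqrt{a+b+c}\leq\sqrt a+\sqrt b+\sqrt c$ regroups the contributions exactly into the three summands $\sigma\sqrt{\log(1/\delta)+\gamma_t^\lambda}$, $\sigma\sqrt{\lambda}B$, and $B\sqrt{\gamma_t^\lambda}$, at the price of an absolute constant that can be checked to be at most $6$. A union bound over the coverage event, the confidence-radius event, and the FTRL event supplies the total failure probability $3\delta$.

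The proof is essentially bookkeeping; the main obstacle is tracking how the exponential-family constants $L=\mu=1/\sigma^2$ combine with the elliptical-potential factor $\mu^{-1}=\sigma^2$ and with the $B^2/\sigma^2$ term appearing in $\mathcal{R}_t$, so that each of the three summands in the statement picks up the correct power of $\sigma$. The assumption $\lambda\geq 1/\sigma^2$ plays no further role beyond ensuring the standard form of the elliptical-potential lemma applies directly.
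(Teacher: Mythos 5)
Your proof is correct and follows essentially the same route as the paper: specialize Theorem~\ref{theorem:bregmanball} (with $\nu=\lambda$, $\alpha=\delta$, and $\Gamma_t^\lambda=\gamma_t^\lambda$ in the Gaussian unweighted case) to get a $\norm{\cdot}_{\VV_t^{\sigma^{-2};\lambda}}$-ball for $\calC_t$, plug in the FTRL bound of Theorem~\ref{thm:ftrl-main}, then run the standard LinUCB decomposition with Cauchy--Schwarz and the elliptical-potential lemma (where $\lambda\geq 1/\sigma^2$ ensures its applicability), and finish with a union bound over the three $\delta$-events. One small bookkeeping caveat: the paper's appendix actually uses $\log(1/\delta)$ from its FTRL proof rather than the $2\log(1/\delta)$ appearing in the statement of Theorem~\ref{thm:ftrl-main} that you used, and it is only with the former that the final absolute constant comes out to exactly $6$.
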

Our results are optimal in both $d$ and $t$ up to constant and logarithmic factors. The proof is deferred to App.~\ref{app:linearbandits}, but is an instantiation of the aforementioned theorems, along with a standard analysis. There, we also compare to the seminal result of \cite{abbasi:improved}, which does not suffer from the dependence on $B\sqrt{\gamma_t}$. We attribute this to the incurred bias in the absence of the reweighting scheme.

\looseness -1 For the weighted likelihood ratio, we can obtain a result similar to the above, but multiplied by an upper bound on $\sup_{s\geq 1} w_s^{-1}$. This is undesirable, as our experiments will show that the reweighting scheme vastly improves performance. While this could be somewhat mitigated by using the Theorem~\ref{thm:ftrl-main-2} instead of Theorem~\ref{thm:ftrl-main} to bound the FTRL regret, a better result should be achievable using our weighting scheme that improves upon Theorem~\ref{theo:linearbandits:vanilla} and possibly even matches \cite{abbasi:improved} exactly in the worst-case. We leave this for future work.

\vspace{\spacebeforesubsections}
\subsection{Experimental Evaluation}
\vspace{\spacebeforeparagraphs}
\looseness -1 In this subsection, we demonstrate that the practical applicability goes well beyond the Gaussian theoretical result from the previous subsection. In the examples below, we always use the UCB algorithm but employ different confidence sets. In particular, we compare our LR confidence sets for different likelihood families with alternatives from the literature, notably classical sub-family confidence sets \citep{abbasi:improved, mutny:poisson}, and the robust confidence set of \citet{Neiswanger21a}. In practice, however, the radius of these confidence sets is often tuned heuristically. We include such sets as a baseline \emph{without} provable coverage as well. The main take-home message from the experiments is that among all the estimators and confidence sets that enjoy \emph{provable} coverage, our confidence sets perform the best, on par with successful heuristics. For all our numerical experiments in Figure~\ref{fig:experiments}, the true payoff function is assumed to be an infinite dimensional RKHS element. For further details and experiments, please refer to App.~\ref{app:experiments}.
\begin{figure*}
    \centering
	\includegraphics[width = 1\textwidth]{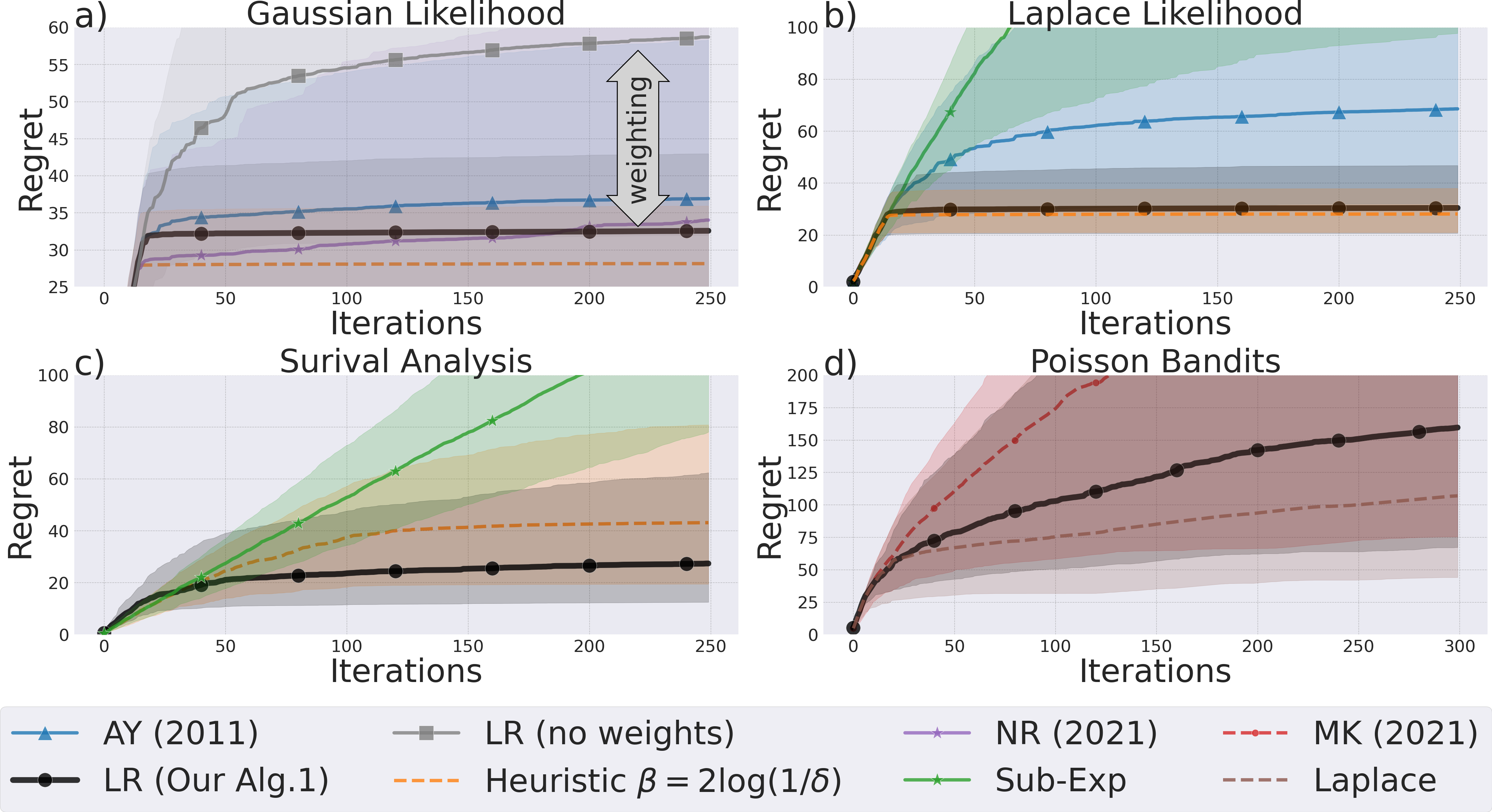}
    \caption{Bandit experiments: On the $y$-axis we report cumulative regret, while the $x$-axis shows the number of iterations. In a) and b) we report the results for linear models with different parametric additive noise. In c) we report the results on a survival analysis with a log-Weibull distribution ($p=2$) and in d) we showcase Poisson bandits. See App.~\ref{app:experiments} for more details. Heuristic methods are \emph{dashed}, while provable are \emph{solid}. Our sets perform the best among all provable methods. Notice in a) the difference in gray and black represents the improvement due to adaptive weighting over $w_s=1$ for all $s\in [t]$. For each experiment we did 10 reruns, median values are plotted.} 
    \label{fig:experiments} \vspace{\spaceafterbanner}
\end{figure*}
\vspace{\spacebeforeparagraphs}
\paragraph{Additive Noise Models}\looseness -1
Suppose that $r_\tstar$ is linear and we observe 
$y_s =  x_s^\trp \tstar + \eta_s$,
where $\eta_s$ is additive noise, and $\tstar$ is an element of a Hilbert space. We consider classical Gaussian noise as well as Laplace noise in Fig. \ref{fig:experiments}[a), b)]. Notice that in both cases our confidence sets yield lower regret than any other provably valid method. In both cases they are performing as good as \emph{heuristic} confidence sets with confidence parameter $\beta_t \equiv 2\log(1/\delta)$. The sub-Gaussian confidence sets of \citet{abbasi:improved} (AY 2011) are invalid for the Laplace distribution as it is not sub-Gaussian but only sub-Exponential. For this reason, we compare also with sub-exponential confidence sets derived similarly to those of \citep{faury:logistic}. The confidence sets of \citep{Neiswanger21a} (NR 2021) perform similarly on Gaussian likelihood, but are only applicable to this setting, as their generalization to other likelihood families involves intractable posterior inference. We note also the difference between the unweighted LR and the weighted one. The examples in Fig.~\ref{fig:experiments} use the true payoff functions $r(x) =-(1.4-3x)\sin(18x)$, which we model as an element of a RKHS with squared exponential kernel lengthscale $\gamma = 6\times 10^{-2}$ on $[0,1.2]$, which is the baseline function no.~4 in the global optimization benchmark database \emph{infinity77} \citep{gavana}. Additional experiments can be found in App.~\ref{app:experiments}. 
\vspace{\spacebeforeparagraphs}
\paragraph{Poisson Bandits}\looseness -1 
A prominent example of generalized linear bandits (GLB) are Poisson bandits, where the linear payoff is scaled by an exponential function. We instantiate our results on a common benchmark problem, and report the results in Fig.~\ref{fig:experiments}d). We improve the regret of UCB for GLBs compared to two alternative confidence sets: one that uses a Laplace approximation with a heuristic confidence parameter, and one inspired by considerations in \citet{mutny:poisson} (MK 2021), also with a heuristic confidence parameter. Note that we cannot compare directly to their provable results in their original form as they do not state them in the canonical form of the exponential family.
\vspace{\spacebeforeparagraphs}
\paragraph{Survival Analysis}\looseness -1
Survival analysis is a branch of statistics with a rich history that models the lifespan of a service or product \citep{Breslow1975,Cox1997,kleinbaum2010survival}. The classical approach postulates a well-informed likelihood model. Here, we use a specific hazard model, where the survival time $T$ is distributed with a Weibull distribution, parametrized by $\lambda$ and $p$. The \emph{rate} $\lambda_\theta(x)=\exp(x^\top \theta)$ differs for each configuration $x$, and $p$ -- which defines the shape of the survival distribution -- is fixed and known. We assume that the unknown part is due to the parameter $\theta$ which is the quantity we build a confidence set around to use within the UCB Algorithm. In particular, the probability density of the Weibull distribution is
$P(T=t|x) = \lambda_\theta(x)pt^{p-1}\exp(-t^p\lambda_\theta(x))$. In fact, with $p=2$, the confidence sets are convex and the UCB rule can be implemented efficiently. 

\looseness -1 Interestingly, this model admits an alternate linear regression formulation. Namely upon using the transformation $Y=\log T$, the transformed variables $Y|x$ follow a Gumbel-type distribution, with the following likelihood that can be obtained by the change of variables
$P(Y=y|x) = \lambda_\theta(x)p\exp(y)^p\exp(-\exp(y)^p \lambda_\theta(x))$.
The expectation over $Y$ allows us to express it as a linear regression problem since
$\E[Y|x] = -(\theta^\top x +\gamma)/p$,
where $\gamma$ is the Euler-Mascheroni constant. More importantly, $Y\vert x$ is sub-exponential. Hence, this allows us to use confidence sets for sub-exponential variables constructed with the pseudo-maximization technique inspired by \citet{faury:logistic}. More details on how these sets are derived can be found in App.~\ref{app:experiments}. However, these approaches necessarily employ crude worst-case bounds and as can be seen in Figure \ref{fig:experiments}c) the use of our LR-based confidence sequences substantially reduces the regret of the bandit learner.
\vspace{\spacebeforesections}
\section{Related Work and Conclusion}\label{sec:onlinetoconf} \looseness -1
\vspace{\spacebeforeparagraphs}
\paragraph{Related Work} \looseness -1
The adaptive confidence sequences stem from the seminal work of \citet{Robbins1972}, who note that these sets have $\alpha$-bounded Type I error. The likelihood ratio framework has been recently popularized by \citet{ramdas:universal} for likelihood families without known test statistics under the name \emph{universal inference}. This approach, although long established, is surprisingly uncommon in sequential decision-making tasks like bandits. This might be due to the absence of an analysis deriving the size of the confidence sets \citep{mutny:poisson}, a necessary ingredient to obtain regret bounds. We address this gap for generalized linear models.  Another reason might be that practitioners might be interested in non-parametric \emph{sub-}families -- a scenario our method does not cover. That being said, many fields such as survival analysis \citep{Cox1997} {\em do} have well-informed likelihoods. However, most importantly, if used naively, this method tends to fail when one departs from assumptions that our probabilistic model is identifiable (i.e., $p_{\theta}(\cdot\sep x) = p_{\ttilde}(\cdot\sep x)$ even if $\theta \not = \ttilde$). We mitigate this problem by introducing the scaling parameters $w_t$ in Eq. \eqref{eq:lrp} to deal with it. 

Prevalent constructions of anytime-valid confidence intervals rely on carefully derived concentration results and for a specific estimator such as the least-squares estimator and noise sub-families such as sub-Gaussian, sub-Bernoulli and sub-Poisson \cite{abbasi:improved, faury:logistic, mutny:poisson}. Their constructions involve bounding the suprema of collections of self-normalized stochastic processes \citep{faury:logistic, mutny:poisson, chowdhury2022bregman}. To facilitate closed-form expressions, worst-case parameters are introduced that prohibitively affect the size of the sets -- making them much larger than they need to be. 

\citet{chowdhury2022bregman} use the exact form of the likelihood to build confidence sets for parameters of exponential families. However, their approach is restricted to exponential family distributions. They use self-normalization and mixing techniques to explicitly determine the size of the confidence set and do not use an online learning subroutine as we do here. \citet{Neiswanger21a} use likelihood ratios for bandit optimization with possibly misspecified Gaussian processes but is not tractable beyond Gaussian likelihoods. The relation between online convex optimization and confidence sets has been noted in so-called online-to-confidence conversions \citep{Abbasi2012-online, jun:gloc, zhao:ftrl}, where the existence of a low-regret learner implies a small confidence set. However, these sets still use potentially loose regret bounds to define confidence sets. Our definition is \emph{implicit}. We do not necessarily need a regret bound to run our method, as the radius will depend on the actual, instance-dependent performance of the learner.
\paragraph{Conclusion} \looseness -1 \vspace{\spacebeforeparagraphs}
In this work, we generalized and analyzed sequential likelihood ratio confidence sets for adaptive inference. We showed that with well-specified likelihoods, this procedure gives small, any-time valid confidence sets with model-agnostic and precise coverage. For generalized linear models, we quantitatively analyzed their size and shape. We invite practitioners to explore and use this very versatile and practical methodology for sequential decision-making tasks.

\begin{ack}
We thank Wouter Koolen and Aaditya Ramdas for helpful discussions as well as for organizing the SAVI workshop where these discussions took place. NE acknowledges support from the Swiss Study Foundation and the Zeno Karl Schindler Foundation. MM has received funding from the Swiss National Science Foundation through NFP75. This publication was created as part of NCCR Catalysis (grant number 180544), a National Centre of Competence in Research funded by the Swiss National Science Foundation.
\end{ack}

\bibliographystyle{apalike}
\bibliography{refs}

\newpage
\appendix

\section{Proofs of Theorem~\ref{thm:coverage} and \ref{thm:bias}}
\label{app:coverage}
\subsection{GLM Families}\label{app:glm-table}

\begin{table}[ht]
\centering
\caption{Examples of exponential family distributions.}
\label{tb:examples}
\begin{tabular}{l|l|l|l|l|l|}
\midrule
\textbf{Name}&$A(z)$& $A'(z)$ & $T(y)$ & $\mu$ & $L$ \\
\hline
Gaussian & $z^2/(2\sigma^2)$ & $z/\sigma^2$ &  $y/\sigma$ & $1/\sigma^2$ & $1/\sigma^2$\\
Poisson & $\exp(z)$ & $\exp(z)$ & $y$ & $\exp(-B)$  & $\exp(B)$ \\
Binomial & $\log(1 + \exp(z))$& $\frac{1}{1+\exp(-z)}$ & $y$ & $\calO(\exp(-B))$ & $1/4$ \\
Weibull & $k\log(z)-\log k$ & $k/z$ & $y^k$& $1/B^2$ & $\infty$
\end{tabular}
\end{table}

\subsection{Proof of Theorem \ref{thm:coverage} (Coverage)}
\begin{proof}
	Starting with $\condE{R_t(\tstar)}{\calF_{t-1}} 
	 $
\begin{eqnarray*}
	&=&\condE{R_{t-1}(\tstar)\frac{p^{w_t}_{\that_{t}}(y_t \sep x_t)}{p^{w_t}_{\tstar }(y_t\sep x_t)} }{ \calF_{t-1}} \\
	&=& R_{t-1}(\tstar) \int \frac{p^{w_t}_{\that_{t}}(y \sep x_t)}{p^{w_t}_{\tstar}(y \sep x_t)} p_{\tstar}(y \sep x_t) \mathrm{d}y \\ &=&  R_{t-1}(\tstar) e^{((w_t -1) D^r_{w_t}(p_{\tstar}(x_t),p_{\that_t}(x_t))} \leq R_{t-1}(\tstar).
    \end{eqnarray*}
	The second equality is due to the fact that $R_{t-1}(\tstar)$ only depends on $x_1,y_1$ through $x_{t-1},y_{t-1}$. Since $\that_t$ is $\calF_{t-1}$ measurable by assumption, $p_{\that_t}$ is a density, and if $w_t < 1$, the integral is equal to an exponential of the R\'enyi-divergence $D^r_{w_t}(\cdot,\cdot)$. The negativity of the exponent follows from $w_t < 1$ and the non-negativity of the divergence. Note that in the "degenerate" case of $w_t = 1$, we can easily see that the integral is over a density (cancellation), and hence also bounded by $1$.
 The last part of the statement follows easily by using Ville's inequality for supermartingales.
\end{proof}
  All the elements of the above proof appear in \citet{ramdas:universal} albeit separately, and not with time-varying powered robust likelihoods.

\subsection{Proof of Theorem \ref{thm:bias} (Bias)} 
\newcommand{\noisefree}{\theta^\times}
We will need a gradient characterization of strong-convexity, which we prove in the following lemma. 

\begin{lemma}[Convexity: Gradient]\label{lemma:convexity-gradient}
    Defining 
    $$
        F_t(\theta) = -\sum_{s=1}^t \E_{\tstar}[\nabla \log_\theta p(y_s\sep x_s) \sep \calF_{s-1}],
    $$
    under the assumption $p_\theta(y_s|x_s) = -g(x_s^\trp \theta)$ and $g$ is $\mu$-strongly convex, we have for any $\theta \in \Theta$: 
    \begin{equation*}
        (F_t(\theta) - F_t(\tstar))^\trp (\theta - \tstar) \geq \norm{\theta - \tstar}_{\VV_t^{\mu;0}}^2.
    \end{equation*}    
\end{lemma}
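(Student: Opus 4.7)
The plan is to expand $F_t(\theta)$ using the assumed structure of the negative log-likelihood and then reduce the vector inequality to a sum of scalar strong-monotonicity statements along the directions $x_s$. Reading the hypothesis in the natural GLM way, i.e., $-\log p_\theta(y_s \sep x_s) = g(x_s^\trp \theta) - T(y_s)\, x_s^\trp \theta - \log h(y_s)$ so that the $\theta$-dependent curvature is entirely contained in $g$, direct differentiation gives $\nabla_\theta \log p_\theta(y_s \sep x_s) = (T(y_s) - g'(x_s^\trp \theta))\, x_s$. Taking the conditional expectation under $\tstar$, the $T(y_s)$-term becomes a vector that depends on $\calF_{s-1}$ but \emph{not} on the evaluation point $\theta$; consequently, subtracting $F_t(\tstar)$ from $F_t(\theta)$ cancels those terms and yields the clean expression
\begin{equation*}
F_t(\theta) - F_t(\tstar) = \sum_{s=1}^t \bigl(g'(x_s^\trp \theta) - g'(x_s^\trp \tstar)\bigr)\, x_s.
\end{equation*}

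Next I would take the inner product with $\theta - \tstar$, commuting $x_s^\trp$ past the scalar factor to expose a one-dimensional quantity in the argument $x_s^\trp \theta$:
\begin{equation*}
(F_t(\theta) - F_t(\tstar))^\trp (\theta - \tstar) = \sum_{s=1}^t \bigl(g'(x_s^\trp \theta) - g'(x_s^\trp \tstar)\bigr)\bigl(x_s^\trp \theta - x_s^\trp \tstar\bigr).
\end{equation*}
The $\mu$-strong convexity of $g$ is equivalent to $\mu$-strong monotonicity of $g'$ on $\R$, i.e., $(g'(a) - g'(b))(a - b) \geq \mu(a-b)^2$. Applying this termwise with $a = x_s^\trp \theta$ and $b = x_s^\trp \tstar$ and summing, the right-hand side is bounded below by $\mu \sum_{s=1}^t (x_s^\trp(\theta - \tstar))^2 = (\theta - \tstar)^\trp \bigl(\mu \sum_{s=1}^t x_s x_s^\trp\bigr)(\theta - \tstar) = \norm{\theta - \tstar}_{\VV_t^{\mu;0}}^2$, which is exactly the claimed inequality.

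The only subtlety is the bookkeeping around the conditional expectation: since $x_s$ is $\calF_{s-1}$-measurable while $y_s$ is not, $\E_{\tstar}[\cdot \sep \calF_{s-1}]$ touches only the $T(y_s)$-piece and leaves $g'(x_s^\trp \theta)$ unchanged. Because that piece is linear in $\theta$, it disappears on taking the difference $F_t(\theta) - F_t(\tstar)$, so no real work is needed there. Beyond this, the lemma is a routine application of one-dimensional strong monotonicity summed along the data directions, and I do not expect any serious obstacle. If one instead insists on the literal reading where $g$ may depend on $y_s$, the same argument applies provided the $\mu$-strong convexity holds in expectation under $\tstar$, since only $\E_{\tstar}[g'(x_s^\trp \theta) \sep \calF_{s-1}]$ enters $F_t(\theta)$.
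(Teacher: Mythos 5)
Your proposal is correct and follows essentially the same route as the paper's proof: both arguments reduce the claim to the $\mu$-strong monotonicity of $g'$ applied along each scalar direction $x_s^\trp(\theta-\tstar)$ and then sum. The only cosmetic difference is that you take the conditional expectation first and observe the $T(y_s)$-term cancels in $F_t(\theta)-F_t(\tstar)$, whereas the paper derives the monotonicity inequality pointwise in $y_s$ and takes expectations afterward; the two orderings are interchangeable here.
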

\begin{proof}
We assume that $g$ is $\mu$-strongly convex. Therefore, for any $s \leq t$, we get the two inequalities
\begin{align*}
g(x_s^\trp \theta) - g(x_s^\trp \tstar) &\geq g'(x_s^\trp \tstar)(x_s^\trp \theta - x_s^\trp\tstar) + \frac{\mu}{2}\norm{x_s^\trp \tstar - x_s^\trp \theta}_2^2 \\
g(x_s^\trp \tstar) - g(x_s^\trp \theta)  &\geq g'(x_s^\trp \theta)(x_s^\trp \tstar - x_s^\trp\theta) + \frac{\mu}{2}\norm{x_s^\trp \tstar - x_s^\trp \theta}_2^2.
\end{align*}
Adding these two together, we obtain
\begin{equation*}
    0 \geq (g'(x_s^\trp \tstar) - g'(x_s^\top \theta))(x_s^\trp (\theta - \tstar)) + \mu \norm{\theta - \tstar}_{x_s x_s^\top}^2. 
\end{equation*}
Observing that $-\nabla_\theta p_\theta(y_s\sep x_s) = -g'(x_s^\trp \theta)x_s$, we can equivalently write
\begin{equation*}
    0 \geq (\nabla \log p_{\theta}(y_s\sep x_s) - \nabla \log p_{\tstar}(y_s\sep x_s))^\top(\theta - \tstar)) + \mu \norm{\theta - \tstar}_{x_s x_s^\top}^2.
\end{equation*}
This holds for any realization of $y_s$, and hence taking expectations yields
\begin{equation*}
    \left(\E[\nabla \log p_{\tstar}(y_s\sep x_s) \sep \calF_{s-1}] - \E[\nabla \log p_{\theta}(y_s\sep x_s) \sep \calF_{s-1}]\right)^\trp (\theta - \tstar) \geq \mu \norm{\theta - \tstar}_{x_s x_s^\trp}^2.
\end{equation*}
Summing up over $s\leq t$ and using the definition of $F_t$ we get
\begin{equation*}
    (F_t(\theta) - F_t(\tstar))^\trp (\theta - \tstar) \geq \norm{\theta - \tstar}_{\VV_t^\mu;0}^2.
\end{equation*}
\end{proof}
\label{app:bias-proof}
Notice that the estimator in Alg. \ref{alg:online} has to fulfil the KKT conditions. We will denote the condition for belonging to the set as $h(\theta) \leq B^2$, where $h$ is a squared and twice-differentiable norm (there are many choices beyond $||\cdot||_2^2$). The KKT conditions are 
\begin{align}\label{eq:kkt}
 \sum_{s=1}^t -\nabla_\theta \log p_\theta(y_s|x_s) + \nabla \psi_t(\theta) + l\nabla h(\theta) = 0 \\
l (h(\theta) - B^2) = 0 \nonumber \\
l \geq 0, \nonumber
\end{align}
where the second and third conditions represent a complementary slackness requirement. Notice that the system of these equations has a unique solution due to the strong-convexity of the objective, and has to attain a unique minimum on a compact convex subset of $\mathbb{R}^d$. Adding the same quantity on both sides of \eqref{eq:kkt} yields
\begin{align} \label{eq:noisefreeestimator}
    &\sum_{s=1}^t  -\mathbb{E}_\tstar[\nabla_\theta \log p_\theta(y_s|x_s)|\calF_{s-1}]+ \nabla \psi_t(\theta) + l\nabla h(\theta)  \nonumber\\=\quad  &\underbrace{\sum_{s=1}^t[ \nabla_\theta \log p_\theta(y_s|x_s) -\mathbb{E}_\tstar[\nabla_\theta \log p_\theta(y_s|x_s)|\calF_{s-1}]]}_{:=E_t}. 
\end{align}
This line motivates the definition of the error-free estimator in $\ttimes_t$ \eqref{eq:bias-approximation}, where $E_t$ is set to zero. We will also make use of a fundamental property of the score (gradient of log-likelihood), namely
\begin{equation}\label{eq:score}
    \mE_{y_t \sim p_\tstar(\cdot \sep x)}[\nabla \log p_\tstar(y_t|x_t)|\calF_{t-1}] = 0.
\end{equation}
A classical textbook reference for this is e.g. \citet{McCullagh2018} but any other classical statistics textbook should contain it.
 Using these observations, we can already prove Theorem~\ref{thm:bias}.

\begin{proof}[Proof of Theorem \ref{thm:bias}]
    Using the optimality conditions of $\theta_t^\times$, $h(\theta) = ||\theta||^2_2$ and $\psi_t(\theta) = \norm{\theta}_2^2$, we obtain the following statements:
    \begin{align*}
    & \sum_{s=1}^t  -\mathbb{E}_\tstar[\nabla \log p_{\theta^\times_t}(y_s|x_s)|\calF_{s-1}]+ \lambda \theta^\times_t + 2l \theta^\times_t = 0 \notag \\
    \implies & \sum_{s=1}^t  -\mathbb{E}_\tstar[\nabla \log p_{\theta^\times_t}(y_s|x_s)|\calF_{s-1}] + \mathbb{E}_\tstar[\nabla \log p_{\tstar}(y_s|x_s)|\calF_{s-1}]+ {\lambda} \theta^\times_t + 2l\theta^\times_t = 0 \notag,
    \end{align*}
    where in the last line we used the property \eqref{eq:score}. Now, notice that since we know $\tstar$ is generating the data, the best possible explanation without enforcing the constraint and the regularization would be to set $\theta^\times_t = \tstar$ as
    the cross-entropy is minimized at this point, and the above is just the optimality condition for optimizing the cross-entropy between these two distribution. Of course, this is only in the absence of regularization or constraints i.e. ${\lambda}= 0$. Now, with the regularization constraint, as the true $\tstar$ lies inside the constraint $h(\theta)\leq B^2$, and both the regularization and constraints induce star-shaped sets, their effect is to make $\theta^\times_t$ smaller in norm than $\theta^*$. This holds generally for any $h$ which is a norm. As a consequence of this consideration, $||\theta^\times_t||_2 < B$, and then the complementary slackness dictates that $l = 0$.

    We can therefore proceed with this simplification. Let us use the shorthand $F_t(\theta) = \sum_{s=1}^t -\mathbb{E}_\tstar[\nabla \log p_{\theta}(y_s|x_s)|\calF_{s-1}]$ and compute
    \begin{align}
     &F_t(\theta_t^\times) -   F(\tstar) + \lambda (\noisefree_t-\tstar) &&= -\lambda\tstar \nonumber  \\
    \implies  &(\theta_t^\times - \tstar)^\top ( F_t(\theta_t^\times) - F(\tstar) + \lambda (\noisefree_t-\tstar)) &&= - \lambda(\noisefree_t - \tstar)^\top\tstar \nonumber \\\stackrel{\text{Lemma}~\ref{lemma:convexity-gradient}}{\implies} &\norm{\noisefree_t - \tstar}^2_{\VV_t^{\mu,\lambda}} &&\leq - \lambda(\theta_t^\times  -\tstar)^\top \tstar. \label{eq:usefulforsecondcauchy} %
    \end{align}

    It suffices to apply the Cauchy-Schwarz Inequality and invoke \eqref{eq:usefulforsecondcauchy}:
    \begin{eqnarray*}
            \operatorname{bias}_{x_s}(\that_s)^2  & = & (x_s^\top (\that_t^\times - \tstar))^2 \\ 
             & \leq &||x_s||_{(\VV^{\mu,\lambda}_t)^{-1}}^2  \norm{\noisefree_t - \tstar}^2_{\VV_t^{\mu,\lambda}} \\
            & \leq & \lambda ||x_s||_{(\VV^{\mu,\lambda}_t)^{-1}}^2 \lambda (\theta_t^\times - \tstar)^\top (-\tstar) \\
        & \leq & \lambda ||x_s||_{(\VV^{\mu,\lambda}_t)^{-1}}^2 ||(\theta_t^\times - \tstar)||_2  ||\tstar||_2\\
        & \leq & 2\lambda ||x_s||_{(\VV^{\mu,\lambda}_t)^{-1}}^2 ||\tstar||_2^2,
    \end{eqnarray*}
where in the last inequality we used $\norm{\ttimes_t}_2 \leq \norm{\tstar}_2$, due to the regularizer, as explained above.
\end{proof}

\paragraph{GLM models}
Let us define the processes
\[ S_t = \sum_{s=1}^{t} x_sT(y_s)  \quad \text{and} \quad W_t = \sum_{s=1}^{t} x_sA'(x_s^\top \tstar). \]
In this scenario, an equivalent of \eqref{eq:noisefreeestimator} then involves the gradient of the regularized (unweighted) log-partition function ${Z_t^\lambda}$ we defined in \eqref{eq:log-parition-definition} and is equal to
\begin{equation}\label{eq:z-operator-glm}
 \sum_{s=1}^t A'(x_s^\top \theta) x_s + \nabla \psi_t(\theta) + l \nabla h(\theta) = \tilde E_t,
 \end{equation}
where $\tilde E_t = S_t$ for $\that_t$ and $\tilde E_t = W_t$ for $\theta_t^\times$.

\section{Proof of Theorem~\ref{theorem:bregmanball} (Bregman Ball Confidence Set)}

\newcommand{\fullBgOffset}[1]{\mathcal{B}_{#1}}

\paragraph{Proof sketch} \looseness -1 We give a quick sketch of the proof. To bound the size of the sets, we will draw inspiration from the i.i.d. parameter estimation analysis of \citet{ramdas:universal} and separate out the likelihood ratio in a part that relates the true parameter with the estimator sequence (i.e. regret), and a part that is independent of the estimator and characterized by a supremum of a stochastic process. We want to show that \emph{any} point which is far away from the true parameter will eventually not be included in the confidence set anymore. Defining $\lik^{(t)}(\{\that_s\}_{s=1}^t)$ as $\prod_{i=1}^t p_{\that_{i}}^{w_i}(y_i \sep x_i)$, we wish to show that for any $\theta$ far from $\tstar$, we have
\begin{equation*}\label{eq:exclusion}
	\log \frac{1}{R_t(\theta)} = \log\frac{\lik^{(t)}(\theta)}{\lik^{(t)}(\tstar)} + \log\frac{\lik^{(t)}(\tstar)}{\lik^{(t)}(\{\that_s\}_{s=1}^t)} \leq \log(\alpha),
\end{equation*}
which is equivalent to saying that $\theta \not \in \mathcal{C}_{t}$. The second term corresponds to our notion of regret exactly ($\mathcal{R}_t$, as discussed above). The first term is what we will focus on. We will bound the supremum of $\log\frac{\lik^{(t)}(\theta)}{\lik^{(t)}(\tstar)}$ for all $\theta$ sufficiently far away from $\tstar$. "Far away" will be measured in the Bregman divergence outlined above. Note that this quantity can be expected to be negative, in general, (especially for "implausible" parameters), since with enough data, $\tstar$ should appear much more likely. Writing this ratio out, we will observe that it is equal to 
\begin{equation*} \looseness -1
   -D_{Z^0_t}(\theta, \tstar) + \underbrace{\inner{\theta-\tstar}{{\sum_{s=1}^t w_s x_s ({T(y_s) - \E_{\tstar}[T(y_s)]})}}}_{\approx \tilde{S}_t}.
\end{equation*}
\looseness -1 At this point, it will be sufficient to bound the cross term (second term) over the whole of $\Theta$. We view this supremum as part of the Legendre Fenchel transform of the function $\fullBgOffset{t}(\lambda) = D_{Z^\nu_t}(\tstar + \lambda, \tstar)$:
\begin{align*}
 \sup_{\lambda\in \R^d}\left(\lambda^\trp \tilde{S_t} - \fullBgOffset{t}(\lambda) \right) 
 {=} \left(\fullBgOffset{t}\right)^\star(\Tilde{S_t})\label{eq:dualityestimator}
 \end{align*}
and harness duality properties of the Bregman divergence, along with known concentration arguments \citep[Theorem A.1]{chowdhury2022bregman}.

\subsection{Technical Lemmas}
We need to introduce the concept of Legendre functions:
\begin{definition}
    Let $f: \R^d \rightarrow \R$ be a convex function and $C = \mathrm{int}(\mathrm{dom}(f))$. Then, a function is called Legendre if it satisfies
    \begin{enumerate}
        \item $C$ is non-empty.
        \item $f$ is differentiable and strictly convex on $C$.
        \item $\lim_{n \rightarrow \infty} \norm{\nabla f(x_n)} = \infty$ for any sequence $(x_n)_n$ with $x_n \in C$ for all $n$ and $\lim_{n \rightarrow \infty} x_n = x$ for some $x \in \partial C$.
    \end{enumerate}
\end{definition}
This means that the gradient has to blow up near the edge of the domain. Note as well that the boundary condition is vacuous if there is no boundary. Legendre functions have some nice properties, most importantly regarding the bijectivity of their gradients (see e.g. \cite{lattimore:book}):
\begin{lemma}
    \label{lemma:legendre}
    For a Legendre function $f: \R^d \rightarrow \R$
    \begin{enumerate}
        \item $\nabla f$ is a bijection between $\mathrm{int}(\mathrm{dom}(f))$ and $\mathrm{int}(\mathrm{dom}(f^*))$ with the inverse $(\nabla f)^{-1} = \nabla f^*$.
        \item $D_f(x,y) = D_{f^*}(\nabla f(y), \nabla f(x))$ for all $x, y \in \mathrm{int}(\mathrm{dom}(f))$.
        \item The Fenchel conjugate $f^*$ is also Legendre.
    \end{enumerate}
\end{lemma}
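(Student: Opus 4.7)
\textbf{Proof plan for Lemma \ref{lemma:legendre}.} This is a classical result from convex analysis (essentially Rockafellar, Sec.~26), and I would prove the three parts in the order 1, 2, 3, since part 2 reuses the bijection in part 1 and part 3 reuses both. Throughout, write $C = \mathrm{int}(\mathrm{dom}(f))$ and $C^\star = \mathrm{int}(\mathrm{dom}(f^\star))$; recall the Fenchel--Young inequality $f(x) + f^\star(y) \geq \langle x,y\rangle$ with equality iff $y \in \partial f(x)$ (and, when $f$ is differentiable at $x$, iff $y = \nabla f(x)$).

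\textbf{Part 1 (bijection).} Injectivity of $\nabla f$ on $C$ follows directly from strict convexity: if $\nabla f(x_1) = \nabla f(x_2)$ with $x_1 \neq x_2$, the strict convexity inequality $f(x_2) > f(x_1) + \langle \nabla f(x_1), x_2 - x_1\rangle$ combined with the symmetric inequality yields a contradiction. For surjectivity onto $C^\star$, I would argue in two steps. First, $\nabla f(C) \subseteq C^\star$: given $x_0 \in C$ and $y_0 = \nabla f(x_0)$, the Fenchel--Young equality $f^\star(y_0) = \langle x_0, y_0\rangle - f(x_0)$ is finite, and a standard continuity/compactness argument shows $f^\star$ is bounded near $y_0$, so $y_0 \in C^\star$. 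Conversely, for $y_0 \in C^\star$, consider the function $x \mapsto f(x) - \langle y_0, x\rangle$: its infimum is $-f^\star(y_0) > -\infty$, and the infimum must be attained in $C$ (not on $\partial C$) because condition 3 forces $\|\nabla f(x_n)\| \to \infty$ along any sequence approaching $\partial C$, which would make $\nabla f(x_n) - y_0 \neq 0$ for large $n$ and exclude a boundary minimizer via a coercivity/first-order argument. The interior minimizer $x_0$ satisfies $\nabla f(x_0) = y_0$, proving surjectivity. The inverse identity $(\nabla f)^{-1} = \nabla f^\star$ then follows from differentiating the Fenchel--Young equality at the pair $(x_0, y_0)$, or equivalently from noting that $x_0$ achieves the maximum defining $f^\star(y_0) = \sup_x \langle y_0, x\rangle - f(x)$, so by first-order conditions $x_0 \in \partial f^\star(y_0)$, and differentiability of $f^\star$ at $y_0$ (established as part of step 3) forces $x_0 = \nabla f^\star(y_0)$.

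\textbf{Part 2 (divergence symmetry).} This is purely algebraic once part 1 is in hand. Let $y, x \in C$ and set $u = \nabla f(y)$, $v = \nabla f(x)$, so by part 1, $y = \nabla f^\star(u)$ and $x = \nabla f^\star(v)$. Expand
\begin{equation*}
D_f(x,y) = f(x) - f(y) - \langle \nabla f(y), x - y\rangle.
\end{equation*}
Apply Fenchel--Young equality at both $(x,v)$ and $(y,u)$ to substitute $f(x) = \langle x, v\rangle - f^\star(v)$ and $f(y) = \langle y, u\rangle - f^\star(u)$. After grouping inner products (using $\langle \nabla f(y), x-y\rangle = \langle u, x\rangle - \langle u, y\rangle$) and canceling, one obtains exactly
\begin{equation*}
D_f(x,y) = f^\star(u) - f^\star(v) - \langle x, u - v\rangle = D_{f^\star}(u, v),
\end{equation*}
which is the claim after substituting back $u = \nabla f(y)$, $v = \nabla f(x)$ and using $x = \nabla f^\star(v)$.

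\textbf{Part 3 ($f^\star$ is Legendre).} Non-emptiness of $C^\star$ follows since $\nabla f(C) \subseteq C^\star$ and $C$ is non-empty. Differentiability of $f^\star$ on $C^\star$ follows from the strict convexity of $f$: a standard result says a proper convex function is differentiable at an interior point of its domain iff its conjugate is strictly convex at the subdifferential image there, and symmetrically here, strict convexity of $f$ implies $\partial f^\star$ is single-valued on $C^\star$, hence $f^\star$ is differentiable there. Strict convexity of $f^\star$ on $C^\star$ follows from differentiability of $f$ on $C$ by the dual of the preceding statement. The boundary blow-up condition for $f^\star$ is the most delicate: one needs to show that if $y_n \in C^\star$ with $y_n \to y \in \partial C^\star$, then $\|\nabla f^\star(y_n)\| \to \infty$. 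By part 1, $\nabla f^\star(y_n) = (\nabla f)^{-1}(y_n) \in C$; if this stayed bounded, a subsequence would converge to some $x \in \overline{C}$. If $x \in C$, continuity of $\nabla f$ forces $y = \nabla f(x) \in C^\star$, contradicting $y \in \partial C^\star$. If $x \in \partial C$, condition 3 of Legendre-ness for $f$ forces $\|\nabla f(x_n)\| = \|y_n\| \to \infty$, contradicting $y_n \to y$ finite.

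\textbf{Anticipated obstacle.} The main subtlety is the boundary behavior needed for surjectivity in part 1 and for the blow-up condition in part 3: in both places one has to exclude a minimizer (resp.~accumulation point) lying on $\partial C$ using condition 3 on $\nabla f$. The cleanest way to handle this is probably via Rockafellar's theorem that essentially smooth and essentially strictly convex are dual properties; I would likely cite this theorem rather than reprove it from scratch, since the paper treats this lemma as a background tool.
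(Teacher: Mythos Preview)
Your proof plan is a correct sketch of the classical Rockafellar argument, but the paper does not actually prove this lemma: it simply states the result and cites \cite{lattimore:book} as a reference, treating it as a standard background fact from convex analysis. So there is no proof in the paper to compare against, and your proposal already goes well beyond what the paper provides; your closing remark that you ``would likely cite this theorem rather than reprove it from scratch, since the paper treats this lemma as a background tool'' is exactly what the authors did.
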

With this, we can prove a slightly extended result, that appears as Lemma 2.1. in \cite{chowdhury2022bregman}.
\begin{lemma}
\label{lemma:dualitypatrick}
For a Legendre function $f$ we have the identity
    $$
        D_f(x,y) = (D_{f, x})^*(\nabla f(y) - \nabla f(x))
    $$
    where we define $D_{f, x}(\lambda) = D_f(x+\lambda, x)$.
\end{lemma}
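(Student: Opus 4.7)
The plan is to expand the Fenchel conjugate of $D_{f,x}$ directly from its definition and simplify by a change of variables. Writing out
\[
(D_{f,x})^*(u) = \sup_{\lambda \in \R^d}\bigl\{\langle u, \lambda\rangle - f(x+\lambda) + f(x) + \langle \nabla f(x), \lambda\rangle\bigr\},
\]
I would substitute $z = x + \lambda$ inside the supremum, collect the terms linear in $z$, and recognize the resulting supremum as the Fenchel conjugate of $f$ evaluated at the shifted argument $u + \nabla f(x)$. This reduces the identity to the cleaner form
\[
(D_{f,x})^*(u) = f^*(u + \nabla f(x)) - \langle x, u + \nabla f(x)\rangle + f(x).
\]

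Next I would specialize to $u = \nabla f(y) - \nabla f(x)$, so that the argument of $f^*$ collapses to $\nabla f(y)$, yielding
\[
(D_{f,x})^*(\nabla f(y) - \nabla f(x)) = f^*(\nabla f(y)) - \langle x, \nabla f(y)\rangle + f(x).
\]
To finish, I would invoke Fenchel--Young equality at the conjugate point: since $f$ is Legendre, Lemma~\ref{lemma:legendre}(1) guarantees that $\nabla f$ is a bijection with inverse $\nabla f^*$, hence $y$ attains the supremum defining $f^*(\nabla f(y))$, giving $f^*(\nabla f(y)) = \langle y, \nabla f(y)\rangle - f(y)$. Substituting this identity and rearranging reproduces exactly $f(x) - f(y) - \langle \nabla f(y), x - y\rangle = D_f(x,y)$, completing the proof.

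The main obstacle, or rather the step requiring care, is the invocation of Fenchel--Young with equality; for a general proper convex $f$ one only has the inequality $f^*(\nabla f(y)) \geq \langle y, \nabla f(y)\rangle - f(y)$. The Legendre hypothesis is precisely what upgrades this to equality (through the bijectivity and the gradient blow-up at the boundary). Everything else is a computational rearrangement that follows the same blueprint as Lemma 2.1 in \citet{chowdhury2022bregman}, with the only novelty being the offset by $x$ inside the definition of $D_{f,x}$, which is absorbed cleanly by the change of variables $z = x + \lambda$.
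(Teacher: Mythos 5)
Your proof is correct, and it reaches the same destination as the paper's via a slightly different but equivalent route. The paper expands the supremum in the Fenchel conjugate, cancels the linear terms, and then solves the inner optimization directly: strict convexity and differentiability give the first-order condition $\nabla f(x+a) = \nabla f(y)$, and the Legendre bijectivity forces $a = y - x$. You instead perform a change of variables $z = x + \lambda$ so that $(D_{f,x})^*(u)$ is recognized as a tilt of $f^*$, namely $f^*(u + \nabla f(x)) - \langle x, u + \nabla f(x)\rangle + f(x)$, and then close by invoking Fenchel--Young equality at $\nabla f(y)$. These are two packagings of the same calculation: the step the paper does ``by hand'' (solving the first-order condition and plugging in $a=y-x$) is precisely the content of Fenchel--Young with equality that you cite as a black box. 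Your version is a little more modular and makes the general identity for $(D_{f,x})^*$ explicit, which could be reused; the paper's is marginally more self-contained because it never needs to mention $f^*$ or Fenchel--Young by name. Both hinge on the Legendre hypothesis in the same place -- you correctly flag that this is what upgrades the Fenchel--Young inequality to an equality, just as the paper uses it to justify the invertibility of $\nabla f$.
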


\paragraph{Notational Shorthands}
Remember the model \eqref{eq:glmmodel}, with log-partition function $A$. We define $A_s(\theta) = w_s A(x_s^\trp \theta)$ and $T_s(y) := w_s x_s T(y)$ to denote the log-partition function and the response function of the same exponential family distribution, but parametrized by $\theta$ instead of $x_s^\trp \theta$. That this is a valid parametrization can easily be seen from the likelihood definition. Indeed, denote by $p^{EF}_\beta$ the exponential family reward distribution with parameter $\beta$. Then our model \eqref{eq:glmmodel} can be seen to satisfy
$$
    p_\theta(y \sep x_s) = p_{x_s^\trp \theta}(y) = h(y)\exp(T(y) x_s^\trp \theta - A(x_s^\trp\theta)) = h(y)\exp(T_s(y)^\trp \theta - A_s(\theta)).
$$
Exponentiating the likelihood with a weighting $w_s$ gives rise to another exponential family distribution. We can see that
$$
    p_\theta^{w_s}(y \sep x_s) = {h}^{w_s}(y)\exp(w_s T(y) x_s^\trp \theta - w_s A(x_s^\trp\theta)) = {h}^{w_s}(y)\exp(T_s(y)^\trp \theta - A_s(\theta)).
$$
Note that this does not necessarily integrate to one, but it is easy to see that there is a normalization function $\tilde{h}$ that makes it integrate to one. Therefore, the following is a valid parametrization of an exponential family distribution: 
$$
\tilde{h}(y)\exp(T_s(y)^\trp \theta - A_s(\theta)).
$$
Additionally, let $A_0(\theta) = \frac{\nu}{2}\norm{\theta}_2^2$ be defined on $\R^d$ (i.e. a Legendre Function). We will also define the estimator
$$
        \ttilde_t = \left(\nabla Z^\nu_t\right)^{-1}\left(\sum_{s=1}^t T_s(y_s)\right).
$$
This is a well-defined quantity because the gradient will be invertible, by Lemma~\ref{lemma:legendre} above.

Conveniently, \cite{chowdhury2022bregman} prove the following Theorem 7 using an elegant application of the method of mixtures.
\begin{proposition}[Theorem 7\ in \cite{chowdhury2022bregman}]
\label{proposition:saux:a1}
    With probability $1-\delta$, for all $t \in \N$
    $$
        D_{Z^\nu_t}(\tstar, \ttilde_t) \leq \log(1/\delta) + A_0(\tstar) + \Gamma_t^\nu,
    $$
    where 
    $$
          \Gamma_t^\nu = \log\left(\frac{\int_{\R^d} \exp(-\frac{1}{2}\norm{\theta}^2_2)\mathrm{d}\theta }{\int_{\R^d} \exp(-D_{Z^\nu_t}(\theta, \ttilde_t))\mathrm{d}\theta}\right).
    $$
\end{proposition}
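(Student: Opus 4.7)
The plan has three steps: translate the inclusion $\theta\in\calC_t$ into a deterministic inequality featuring $D_{Z_t^\nu}(\theta,\tstar)$ and a martingale cross term; decouple $\theta$ from that cross term by Fenchel--Young duality applied to the centered Bregman function $\fullBgOffset{t}(\lambda):=D_{Z_t^\nu}(\tstar+\lambda,\tstar)$; and finally control the remaining dual quantity in probability via Proposition~\ref{proposition:saux:a1}. Starting from $R_t(\theta)\le 1/\alpha$, taking logs and using the definition of $\mathcal{R}_t$ from \eqref{eq:see-regret} yields
\[\log\frac{\lik_t(\tstar)}{\lik_t(\theta)} \;\le\; \log(1/\alpha) + \mathcal{R}_t.\]
A direct expansion via \eqref{eq:glmmodel} shows
\[\log\frac{\lik_t(\tstar)}{\lik_t(\theta)} \;=\; D_{Z_t^0}(\theta,\tstar) \;-\; (\theta-\tstar)^\top \tilde S_t,\quad \tilde S_t := \sum_{s=1}^t w_s x_s\bigl(T(y_s)-A'(x_s^\top\tstar)\bigr),\]
and the canonical mean-value identity $\mE[T(y_s)\sep\calF_{s-1}]=A'(x_s^\top\tstar)$ for exponential families makes $\tilde S_t$ a martingale adapted to $(\calF_s)$. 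Passing from $Z_t^0$ to $Z_t^\nu$ adds $\tfrac{\nu}{2}\|\theta-\tstar\|^2 \le 2\nu B^2$, which will eventually be absorbed into the $\nu B^2$ slot of $\xi_t$.

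\textbf{Decoupling $\theta$ from $\tilde S_t$.} Next I apply Fenchel--Young to $\fullBgOffset{t}$ with a scaling parameter $\eta>0$:
\[(\theta-\tstar)^\top \tilde S_t \;\le\; \eta\, D_{Z_t^\nu}(\theta,\tstar) \;+\; \eta\, \fullBgOffset{t}^*(\tilde S_t/\eta).\]
Plugging back into the likelihood-ratio inequality and choosing $\eta$ so that a constant fraction of $D_{Z_t^\nu}(\theta,\tstar)$ is absorbed on the left gives a bound of the shape $D_{Z_t^\nu}(\theta,\tstar) \;\lesssim\; \log(1/\alpha) + \mathcal{R}_t + \nu B^2 + \fullBgOffset{t}^*(\tilde S_t/\eta)$. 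The $L/\mu$ prefactor in the final statement arises here, through the strong-convexity/smoothness sandwich $\mu\VV_t^w + \nu\II \preceq \nabla^2 Z_t^\nu \preceq L\VV_t^w + \nu\II$ with $\VV_t^w=\sum_s w_s x_s x_s^\top$, which is what controls how the dual divergence $\fullBgOffset{t}^*$ compares to the conjugate of a dominating quadratic.

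\textbf{Noise control and assembly.} To match $\fullBgOffset{t}^*(\tilde S_t/\eta)$ with the object Proposition~\ref{proposition:saux:a1} controls, I use Lemma~\ref{lemma:dualitypatrick}: since $\nabla Z_t^\nu(\tstar) = W_t + \nu\tstar$ and $\nabla Z_t^\nu(\ttilde_t) = S_t$, one gets $D_{Z_t^\nu}(\tstar,\ttilde_t) = \fullBgOffset{t}^*(\tilde S_t - \nu\tstar)$. Converting $\fullBgOffset{t}^*(\tilde S_t/\eta)$ into a multiple of $D_{Z_t^\nu}(\tstar,\ttilde_t)$ therefore requires absorbing two things: the scaling by $1/\eta$ and the deterministic shift $\nu\tstar$; the former is handled via the PSD comparison $(\mu\VV_t^w + \nu\II)^{-1} \preceq \frac{L}{\mu}(L\VV_t^w + \nu\II)^{-1}$ (which compares the strong-convexity-induced dual norm at $\tstar$ with the smoothness-induced one at $\ttilde_t$), while the latter only contributes an additive $\nu B^2$ to $\xi_t$ through $\|\tstar\|\le B$. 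Proposition~\ref{proposition:saux:a1} then delivers $D_{Z_t^\nu}(\tstar,\ttilde_t) \le \log(1/\delta) + \tfrac{\nu}{2}B^2 + \Gamma_t^\nu$ uniformly in $t$ with probability $1-\delta$, and reassembling all pieces yields the stated bound.

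\textbf{Main obstacle.} The technical difficulty is entirely bookkeeping: the theorem attaches the factor $4L/\mu$ to $\log(1/\alpha)$, $\nu B^2$ and $\Gamma_t^\nu$ while leaving the clean prefactor $2$ on $\mathcal{R}_t$ and $\log(1/\delta)$, so the Fenchel--Young scaling $\eta$, the PSD comparison above, and the $\nu\tstar$ shift from Lemma~\ref{lemma:dualitypatrick} must be combined delicately to route the $L/\mu$ to only the right terms. The Legendre property of $Z_t^\nu$, which holds precisely because $\nu>0$ makes it strictly convex and coercive on all of $\R^d$, is essential for $\nabla Z_t^\nu$ to be a global bijection and for Lemma~\ref{lemma:dualitypatrick} to apply; this is also the structural reason $\nu B^2$ must appear in $\xi_t$.
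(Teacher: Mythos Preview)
Your proposal is not a proof of Proposition~\ref{proposition:saux:a1}: you \emph{invoke} Proposition~\ref{proposition:saux:a1} as a black box in your ``Noise control and assembly'' step, so as a proof of that proposition the argument is circular. What you have actually sketched is a proof of Theorem~\ref{theorem:bregmanball}, the Bregman-ball result that \emph{uses} Proposition~\ref{proposition:saux:a1}. The paper does not prove Proposition~\ref{proposition:saux:a1} either; it is quoted from \citet{chowdhury2022bregman} (their Theorem~7), where it is established by the method of mixtures applied to an exponential-family supermartingale --- an argument entirely absent from your write-up.

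If your target was in fact Theorem~\ref{theorem:bregmanball}, your skeleton matches the paper's (expand the log-likelihood ratio, isolate the martingale cross term $\langle\theta-\tstar,S_t\rangle$, connect it to $D_{Z_t^\nu}(\tstar,\ttilde_t)$ via Lemma~\ref{lemma:dualitypatrick}, then apply Proposition~\ref{proposition:saux:a1}), but the mechanism for producing the $L/\mu$ factor differs. The paper does not use Fenchel--Young with a free $\eta$ followed by a PSD sandwich on the conjugate. Instead it plugs $\lambda=\gamma(\theta-\tstar)$ directly into the identity $\sup_\lambda\{\lambda^\top\tilde S_t-\fullBgOffset{t}(\lambda)\}=D_{Z_t^\nu}(\tstar,\ttilde_t)$, divides by $\gamma$, and invokes a sub-homogeneity lemma (Lemma~\ref{lemma:sublinarity}): for $\gamma=\mu/(2L)$ and $\theta,\tstar\in\Theta$ one has $\fullBgOffset{t}(\gamma\lambda)\le\tfrac{\gamma}{2}\fullBgOffset{t}(\lambda)$, leaving exactly $\tfrac12\fullBgOffset{t}(\theta-\tstar)$ to absorb on the primal side. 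Your PSD-sandwich route is plausible in spirit but has a gap as written: the Hessian bounds $\mu\VV_t^w+\nu\II\preceq\nabla^2 Z_t^\nu\preceq L\VV_t^w+\nu\II$ are only guaranteed on the bounded region from Assumption~\ref{ass:bounded}, whereas the conjugate $\fullBgOffset{t}^*$ is a supremum over all of $\R^d$, so the passage ``$\fullBgOffset{t}^*(\tilde S_t/\eta)\le$ quadratic dual norm'' is not justified without additional argument. The paper's sub-homogeneity device avoids this because it only compares $\fullBgOffset{t}$ at two scales, both evaluated inside $\Theta$.
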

Lastly, we will need the (one-argument) function
\newcommand{\Bgoffset}{\mathcal{B}_{\tstar}^{(s:t)}(\tstar)}
$$
    \fullBgOffset{t}(\lambda) = D_{Z_t^\nu}(\theta + \lambda, \theta),
$$
i.e. a shortcut for the Bregman divergence of $Z_t^\nu$ at $\theta$. 
We use this one-argument function as we will be interested in its dual. We will also need a lemma on the sub-homogeneity properties of this object.
\begin{lemma}
    \label{lemma:sublinarity}
    Under Assumption~\ref{ass:bounded}, for $\theta \in \Theta$ and $\lambda$ such that $\theta + \lambda \in \Theta$, we have for any $\gamma \leq \frac{\mu}{2L}$
	$$
	\fullBgOffset{t}(\gamma \lambda) \leq \frac{1}{2} \gamma \fullBgOffset{t}(\lambda),
	$$
     i.e. function $g(\gamma) = \fullBgOffset{t}(\gamma \lambda)$ is sub-homogeneous with contraction parameter $\frac{1}{2}$ on $[0, \frac{\mu}{2L}]$.
\end{lemma}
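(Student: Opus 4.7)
The plan is to reduce the sub-homogeneity claim to elementary quadratic comparisons by invoking the $\mu$-strong convexity and $L$-smoothness of the log-partition function $A$. First I would expand, term by term, the divergence
\[
\fullBgOffset{t}(\lambda) = D_{Z^\nu_t}(\theta+\lambda,\theta)=\sum_{s=1}^t w_s\bigl[A(x_s^\trp(\theta+\lambda))-A(x_s^\trp\theta)-A'(x_s^\trp\theta)x_s^\trp\lambda\bigr]+\frac{\nu}{2}\|\lambda\|_2^2,
\]
noting that the quadratic regularizer contributes exactly $\tfrac{\nu}{2}\|\lambda\|_2^2$ while the first sum is a weighted sum of univariate Bregman divergences of $A$. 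Because $\theta,\theta+\lambda\in\Theta$ and $\|x_s\|_2\le 1$, the points $x_s^\trp\theta$ and $x_s^\trp(\theta+\lambda)$ (and the whole segment between them) lie in $[-B,B]$, where Assumption~\ref{ass:bounded} supplies $\mu\le A''\le L$.

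Next I would use these second-derivative bounds to sandwich $\fullBgOffset{t}$ by quadratic forms. Writing $M(\lambda):=\sum_{s=1}^t w_s(x_s^\trp\lambda)^2$, the standard Taylor-with-integral-remainder argument yields
\[
\tfrac{\mu}{2}M(\lambda)+\tfrac{\nu}{2}\|\lambda\|_2^2\;\le\;\fullBgOffset{t}(\lambda)\;\le\;\tfrac{L}{2}M(\lambda)+\tfrac{\nu}{2}\|\lambda\|_2^2.
\]
Applying the upper bound at $\gamma\lambda$ (legitimate since $\theta+\gamma\lambda\in\Theta$ by convexity for $\gamma\in[0,1]$) gives $\fullBgOffset{t}(\gamma\lambda)\le \tfrac{\gamma^2}{2}\bigl(L M(\lambda)+\nu\|\lambda\|_2^2\bigr)$, while the lower bound at $\lambda$ gives $\tfrac{\gamma}{2}\fullBgOffset{t}(\lambda)\ge \tfrac{\gamma}{2}\cdot\tfrac{1}{2}\bigl(\mu M(\lambda)+\nu\|\lambda\|_2^2\bigr)$.

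To conclude, I would match these two bounds coefficientwise: the desired inequality holds provided $\tfrac{L\gamma^2}{2}\le\tfrac{\gamma\mu}{4}$ and $\tfrac{\nu\gamma^2}{2}\le\tfrac{\gamma\nu}{4}$, i.e.\ $\gamma\le\tfrac{\mu}{2L}$ and $\gamma\le\tfrac{1}{2}$. Since $\mu\le L$ (both bound the same $A''$ on the same interval), the first condition implies the second, so $\gamma\le\tfrac{\mu}{2L}$ suffices.

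There is no real obstacle here — the statement is essentially a convexity/smoothness sandwich. The only care needed is ensuring that the interval $[-B,B]$ actually contains all inner products $x_s^\trp(\theta+\tau\lambda)$ for $\tau\in[0,1]$ so that both the $\mu$ and $L$ bounds on $A''$ may be invoked simultaneously; this follows from Assumption~\ref{ass:bounded} together with the convexity of $\Theta$ used implicitly via $\theta,\theta+\lambda\in\Theta$.
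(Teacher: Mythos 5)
Your proof is correct and follows essentially the same approach as the paper's: both reduce to the $\mu$-strong-convexity / $L$-smoothness sandwich on $A$ over $[-B,B]$, and both exploit $\mu/(2L)\le 1/2$ to absorb the quadratic regularizer term. The only difference is cosmetic: the paper first isolates a one-dimensional lemma and handles the regularizer separately, while you aggregate everything into one quadratic sandwich and compare coefficients; the underlying inequalities are identical.
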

See Appendix~\ref{sec:appendix:a:lemmaproof} for a proof.

\subsection{Proof of Theorem~\ref{theorem:bregmanball}}\label{app:proof-bregman}
	As mentioned in the main paper, our proof will show that all $\theta$ sufficiently far from $\tstar$ will be excluded from $\calC_t$ eventually. Equation~\eqref{eq:exclusion} in the main text specifies the exclusion criterion, i.e. $\theta \not \in \mathcal{C}_t$ if and only if
\begin{equation}
	\frac{1}{R_t(\theta)} = \log\frac{\lik^{(t)}(\theta)}{\lik^{(t)}(\tstar)} + \log\frac{\lik^{(t)}(\tstar)}{\lik^{(t)}(\{\that_s\}_{s=1}^t)} \leq \log(\alpha). \label{eq:fullinvertedratio}
\end{equation}
The second term is bounded by the regret of the online learner. And therefore, a sufficient condition for $\theta \not \in \calC_t$ is 
\begin{equation*}
    \log\left(\frac{\lik^{(t)}(\theta)}{\lik^{(t)}(\tstar)}\right) \leq \log(\alpha) - \mathcal{R}_t.
\end{equation*}
Henceforth, we will be interested in having an explicit set $\tilde{\calC}_t$ such that we can upper bound
\begin{equation}
    \sup_{\theta \notin \Tilde{\calC}_t} \log\left(\frac{\lik^{(t)}(\theta)}{\lik^{(t)}(\tstar)}\right). \label{eq:supremum}
\end{equation}
This will imply that that $\Tilde{\calC}_t^c \subset \calC_t^c$, or in other words, $\calC_t \subset \Tilde{\calC}_t$. Without further ado, let us derive a more convenient form of the ratio in question
\allowdisplaybreaks
	\begin{align*}
		\log\left(\frac{\lik^{(t)}(\theta)}{\lik^{(t)}(\tstar)}\right) &= \log\left( \frac{\prod_{s=1}^t h(y_s)\exp\left( w_s x_s^\trp \theta T(y_s) - w_s A(x_s^\trp \theta) \right) }{\prod_{s=1}^t h(y_s)\exp\left( w_s x_s^\trp \theta T(y_s) - w_s A(x_s^\trp \tstar) \right) } \right) \\
		&= \sum_{s=1}^t w_s x_s^\trp \theta T(y_s) - w_s A(x_s^\trp \theta)  - w_s x_s^\trp\tstar T(y_s) +  w_s A(x_s^\trp \tstar) \\
		&= \sum_{s=1}^t \inner{\theta - \tstar}{w_s T(y_s)x_s} + w_s A(x_s^\trp \tstar) - w_s A(x_s^\trp \theta) \\
		&= \sum_{s=1}^t \inner{\theta - \tstar}{ w_s T(y_s)x_s} - \big(w_s A(x_s^\trp \theta) - w_s A(x_s^\trp \tstar) - {x_s^\trp (\theta-\tstar)}{w_s A'(x_s^\trp \tstar)} \\
        &\quad + {x_s^\trp (\theta-\tstar)}{w_s A'(x_s^\trp \tstar)}\big) \\
		&= \sum_{s=1}^t \inner{\theta - \tstar}{w_s T(y_s)x_s} - w_s D_A(x_s^\trp\theta, x_s^\trp \tstar) - {x_s^\trp (\theta-\tstar)}{w_s A'(x_s^\trp \tstar)} \\
		&= -\sum_{s=1}^t w_s D_A(x_s^\trp \theta, x_s^\trp \tstar) + \sum_{s=1}^t \inner{\theta - \tstar}{w_s T(y_s)x_s - x_s w_s A'(x_s^\trp\tstar)}.
	\end{align*}
	We can switch parametrizations as described above:
	\begin{align}
		\log\left(\frac{\lik^{(t)}(\theta)}{\lik^{(t)}(\tstar)}\right) &= - D_{Z^0_t}(\theta, \tstar) + \sum_{s=1}^t \inner{\theta - \tstar}{T_s(y_s) - \nabla A_s(\tstar)} \nonumber \\
		&= - D_{Z^0_t}(\theta, \tstar) + \sum_{s=1}^t \inner{\theta - \tstar}{T_s(y_s) - \E_{\tstar}[T_s(y_s)]} \nonumber \\
        &= - D_{Z^0_t}(\theta, \tstar) + \inner{\theta-\tstar}{S_t} , \label{eq:likelihoodconcentrationpart}
	\end{align}
where we define $S_t := \sum_{s=1}^t \left({T_s(y_s) - \E_{\tstar}[T_s(y_s)]}\right)$. 

$Z_t^\nu$ is strictly convex whenever $\nu \not=0$, and convex otherwise (it might also be strictly convex otherwise, corresponding to some cases where the $x_s$ span the full $d$-dimensional Euclidean space and $w_s > 0$, which will be satisfied uniformly. We note that since $\mathrm{dom}(Z_t^\nu) = \R^d$, $Z^\nu_t$ is, therefore, Legendre, and its gradient is invertible. 
We will relate our problem to this estimator via the duality properties developed above. First, note that by the well-known fact $\E_\tstar[T_s(y_s)] = \nabla A_s(\tstar)$ and by the definition of $\ttilde_t$, we have
\begin{align}
    S_t &= \nabla Z^\nu_t\left( \left(\nabla Z^\nu_t\right)^{-1}\left(\sum_{s=1}^t T_s(y_s)\right)\right) - \nabla Z^0_t(\tstar) \nonumber \\
    &= \underbrace{\nabla Z^\nu_t\left( \ttilde_t \right) - \nabla Z^\nu_t(\tstar)}_{=: \Tilde{S}_t} + \underbrace{\nabla A_0(\tstar)}_{ =\nu\tstar} .\label{eq:stildedef}
\end{align}
Now, we leverage the duality properties: We can write
\begin{align}
 \sup_{\lambda\in \R^d}\left(\lambda^\trp \tilde{S_t} - \fullBgOffset{t}(\lambda) \right) 
 &\stackrel{(i)}{=} \left(\fullBgOffset{t}\right)^\star(\Tilde{S_t}) \nonumber \\
 &\stackrel{\eqref{eq:stildedef}}{=} \left(\fullBgOffset{t}\right)^\star(\nabla Z^\nu_t( \ttilde_t ) - \nabla Z^\nu_t(\tstar)) \nonumber \\
 &\stackrel{\text{Lemma }\ref{lemma:dualitypatrick}}{=} D_{Z^\nu_t}(\tstar, \ttilde_t), \label{eq:dualityestimator}
 \end{align}
 where $(i)$ is simply the definition of the Legendre-Fenchel transform. 
 Why did we do all this work? Well, we are interested in the supremum in Equation~\eqref{eq:supremum}. It is sufficient to bound the supremum over all $\theta \in \Theta$ of terms of the form (see Equation~\eqref{eq:likelihoodconcentrationpart})
 $$
    \inner{\theta - \tstar}{S_t}.
 $$

 While we could do a covering type argument (carefully relaxing the i.i.d. data assumptions typical in empirical process theory), it is much easier to relate this supremum to the estimator via duality.

\newcommand{\concentrateevent}{\mathcal{E}_{\text{con}}}
With probability at least $1-\delta$, Proposition~\ref{proposition:saux:a1} gives us a high-probability time-uniform bound on
$$
    D_{Z^\nu_t}(\tstar, \ttilde_t) \leq \log(1/\delta) + A_0(\tstar) + \Gamma_t^\nu,
$$
and therefore, by plugging into Equation~\eqref{eq:dualityestimator} and making the reparametrization $\lambda = \gamma(\theta - \tstar)$ for some positive $\gamma$, it gives us
\begin{align*}
	\forall t \geq 0 \; \forall \theta \in \R^d \; \forall \gamma \in \R_+ \; :\; \gamma \tilde{S}_t^\trp (\theta - \tstar) - {\fullBgOffset{t}(\gamma(\theta - \tstar))} \leq \log(1/\delta) + A_0(\tstar) + \Gamma_t^\nu.
\end{align*}
Therefore, for all $t \geq 0$ and all $\theta \in \R^d$, the following holds:
\begin{align*}
    S_t^\trp (\theta - \tstar) &= \Tilde{S}_t^\trp (\theta - \tstar) + \nabla A_0(\tstar)^\trp(\theta - \tstar) \\
    &\leq \frac{1}{\gamma}\log(1/\delta) + \frac{1}{\gamma} A_0(\tstar) + \frac{1}{\gamma}\Gamma_t^\nu + \frac{1}{\gamma} \fullBgOffset{t}(\gamma(\theta - \tstar)) + \nabla A_0(\tstar)^\trp(\theta - \tstar).
\end{align*}
Since $A_0(\theta) = \frac{\nu}{2}\norm{\theta}_2^2$, restricting our uniform bound over $\theta \in \Theta$ gives us $\forall t \geq 0 \; \forall \theta \in \Theta$:
\begin{align*}
    S_t^\trp (\theta - \tstar) &\leq \frac{1}{\gamma}\log(1/\delta) + \frac{\nu}{2\gamma} B^2 + \frac{1}{\gamma}\Gamma_t^\nu + \frac{1}{\gamma} \fullBgOffset{t}(\gamma(\theta - \tstar)) + \nu B^2.
\end{align*}
Now, we note that under Assumption~\ref{ass:bounded}, Lemma~\ref{lemma:sublinarity} kicks in and we have for any $t \geq 0, \theta \in \Theta$ and $\gamma = \frac{\mu}{2L}$
\begin{align}
    S_t^\trp (\theta - \tstar)
    &\leq \frac{1}{\gamma}\log(1/\delta) + \frac{\nu}{2\gamma} B^2 + \frac{1}{\gamma}\Gamma_t^\nu + \frac{1}{2}\fullBgOffset{t}(\theta - \tstar) + \nu B^2.\label{eq:sublinearityapplied}
\end{align}
Finally, we can use this in \eqref{eq:likelihoodconcentrationpart} to obtain
\begin{align}
    \log\left(\frac{\lik^{(t)}(\theta)}{\lik^{(t)}(\tstar)}\right)
        &\leq  -\fullBgOffset{t}(\theta - \tstar) + \inner{\theta-\tstar}{S_t} \nonumber \\
        &\stackrel{\eqref{eq:sublinearityapplied}}{\leq} -\frac{1}{2}\fullBgOffset{t}(\theta - \tstar) + \frac{1}{\gamma}\log(1/\delta) + \frac{\nu}{2\gamma} B^2 + \frac{1}{\gamma}\Gamma_t^\nu  + \nu B^2 \nonumber \\
        &\leq -\frac{1}{2}\fullBgOffset{t}(\theta - \tstar) + \frac{2L}{\mu}\left(\log(1/\delta) + \frac{\nu B^2}{2} + \Gamma_t^\nu \right) + \nu B^2. \label{eq:firstratiofinal}
\end{align}
It remains to investigate the full likelihood ratio in \eqref{eq:fullinvertedratio}:
\begin{align}
    &\frac{1}{R_t(\theta)} - \log(\alpha) \nonumber \\
    = \; &\log\frac{\lik^{(t)}(\theta)}{\lik^{(t)}(\tstar)} + \log\frac{\lik^{(t)}(\tstar)}{\lik^{(t)}(\{\that_s\}_{s=1}^t)} + \log(1/\alpha) \nonumber \\
    \stackrel{\eqref{eq:firstratiofinal}\, \& \,\eqref{eq:see-regret}}{\leq} \; &-\frac{1}{2}\fullBgOffset{t}(\theta - \tstar) + \frac{2L}{\mu}\left(\log(1/\delta) + \frac{\nu B^2}{2} + \Gamma_t^\nu \right) + \nu B^2 + \log(1/\alpha) + \mathcal{R}_t.
\end{align}
Note that crucially for $\theta \in \Theta$, we have
$$\theta \not\in \mathcal{C}_t \iff \frac{1}{R_t(\theta)} - \log(\alpha) \leq 0.$$
This is implied by
$$
    \mathcal{B}_{Z^\nu_t}(\theta, \tstar) \geq \frac{4L}{\mu}\left(\log(1/\delta) + \frac{\nu B^2}{2} + \Gamma_t^\nu \right) + 2\nu B^2 + 2\log(1/\alpha) + 2\mathcal{R}_t,
$$
 or, since $L \geq \mu$, more compactly by
$$
    \mathcal{B}_{Z^\nu_t}(\theta, \tstar) \geq \frac{4L}{\mu}\left(\log(1/\delta) + {\nu B^2} + \Gamma_t^\nu \right) + 2\log(1/\alpha) + 2\mathcal{R}_t. 
$$
\subsection{Proof of Technical Lemmas}
\label{sec:appendix:a:lemmaproof}
First, we will prove Lemma~\ref{lemma:dualitypatrick}. The proof exactly follows \cite{chowdhury2022bregman}, we include it here for convenience because it is very short. 
\begin{proof}
    By definition
    \begin{align*}
        &\quad (D_{f,x})^*(\nabla f(y) - \nabla f(x)) \\
        &= \sup_{a \in \R^d}\left(\inner{a}{\nabla f(y) - \nabla f(x)} - D_{f,x}(a) \right) \\
        &= \sup_{a \in \R^d}\left(\inner{a}{\nabla f(y) - \nabla f(x)} - D_{f}(x+a, x) \right) \\
        &= \sup_{a \in \R^d}\left(\inner{a}{\nabla f(y) - \nabla f(x)} - f(x+a) + f(x) + \inner{\nabla f(x)}{a} \right) \\
        &= \sup_{a \in \R^d}\left(\inner{a}{\nabla f(y)} - f(x+a) + f(x) \right).
    \end{align*}
    Since $f$ is strictly convex and differentiable, first-order optimality conditions imply that the optimal $a$ satisfies $\nabla f(y) - \nabla f(x+a) = 0$ ($a$ is unconstrained). Since the gradient is invertible, we must have $a = y - x$. If we plug this into the above, we have
    \begin{align*}
    \quad (D_{f,x})^*(\nabla f(y) - \nabla f(x)) 
    &= \inner{y - x}{\nabla f(y)} - f(y) + f(x) \\ 
    &= f(x) - f(y) - \inner{\nabla f(y)}{x - y} \\
    &= D_f(x, y).
    \end{align*}
\end{proof}

Now we prove Lemma~\ref{lemma:sublinarity}. To this end, we will do a reduction to the one-dimensional case, and prove the one-dimensional result below.
\begin{lemma}
    \label{lemma:onedimensional}
	Under Assumption \ref{ass:bounded}, for any $a \in [B, B]$, any $\gamma \in (0, \frac{\mu}{2L}]$ and any $\Delta$ with $a + \Delta \in [B, B]$
	$$
	A(a+\gamma\Delta) - A(a) - A'(a)\gamma \Delta \leq \frac{1}{2} \gamma \left[A(a+\Delta) - A(a) - A'(a) \Delta\right].
	$$
\end{lemma}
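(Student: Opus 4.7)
The plan is to reduce the inequality to the two-sided bound $\mu \le A''(z) \le L$ for $z \in [-B,B]$ guaranteed by Assumption~\ref{ass:bounded}, via the standard Taylor-with-integral-remainder identity
\[ A(a+s) - A(a) - A'(a)\,s \;=\; \int_0^1 (1-t)\, A''(a+ts)\, s^2 \, dt, \]
valid whenever the segment $[a, a+s]$ stays in $[-B,B]$. Both sides of the target inequality are Bregman divergences of exactly this form (with the right side scaled by $\gamma/2$), so once the integral representation is in place the result should follow by bounding $A''$ above by $L$ on the left and below by $\mu$ on the right.

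The key steps, in order. \emph{First}, I would verify the domain condition: since $\mu \le L$, the hypothesis $\gamma \le \mu/(2L)$ forces $\gamma \le 1/2 < 1$, so $a+\gamma\Delta$ is a convex combination of $a$ and $a+\Delta$ (both in $[-B,B]$ by assumption), and hence so is every $a + t\gamma\Delta$ for $t \in [0,1]$; this is what allows the pointwise bounds on $A''$ to be used inside the integrals. \emph{Second}, apply the Taylor identity with $s = \gamma\Delta$ together with the smoothness bound $A'' \le L$ to obtain
\[ A(a+\gamma\Delta) - A(a) - A'(a)\,\gamma\Delta \;\le\; \frac{L}{2}\gamma^2\Delta^2. \]
\emph{Third}, apply the identity again with $s = \Delta$ together with the strong-convexity bound $A'' \ge \mu$ to obtain
\[ A(a+\Delta) - A(a) - A'(a)\,\Delta \;\ge\; \frac{\mu}{2}\Delta^2. \]

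Combining these two estimates, the claim reduces to the elementary scalar inequality
\[ \frac{L}{2}\gamma^2 \Delta^2 \;\le\; \frac{\gamma}{2}\cdot\frac{\mu}{2}\Delta^2, \]
which simplifies precisely to $\gamma \le \mu/(2L)$, i.e.\ the standing hypothesis. There is no serious obstacle here; the argument is mostly bookkeeping of constants. The only point requiring care is confirming that the Taylor remainder is evaluated on a segment lying entirely inside $[-B,B]$ so that $\mu \le A'' \le L$ is applicable on the integrand. The factor $1/2$ in the sub-homogeneity statement is exactly what the gap between $L\gamma$ and $\mu/2$ produces over the range $\gamma \in (0,\mu/(2L)]$, which is why the allowed interval cannot be enlarged without weakening the contraction.
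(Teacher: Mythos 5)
Your proof is correct and follows essentially the same route as the paper's: bound the left-hand side above by $\tfrac{L}{2}\gamma^2\Delta^2$ via smoothness, bound the bracketed Bregman term below by $\tfrac{\mu}{2}\Delta^2$ via strong convexity, and observe that the resulting scalar inequality is exactly $\gamma \le \mu/(2L)$. The only difference is that you explicitly verify the intermediate point $a + \gamma\Delta$ lies in $[-B,B]$ (using $\gamma \le 1/2$ and convexity of the interval) so that the pointwise bounds on $A''$ from Assumption~\ref{ass:bounded} actually apply along the segment — a step the paper leaves implicit.
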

We prove that this implies the desired sublinearity of the full Bregman difference.

\begin{proof}(of Lemma~\ref{lemma:sublinarity}).
    Let $\theta$ and $\lambda$ be such that $\theta, \theta + \lambda \in \Theta$. We will first show that for any $s \in \{0,\ldots, t\}$, $B_{A_s(\theta, \theta + \cdot)}$ is sublinear, and then the result follows by the linearity of the Bregman divergence. Define $a_s = x_s^\trp \theta$ and $\Delta_s = x_s^\trp \lambda$. Then we have $\abs{\Delta_s} = \abs{x_s^\trp(\lambda)} \leq \norm{x_s}\norm{\lambda} \leq \norm{x_s}(\norm{\theta} + \norm{\theta + \lambda} \leq (B+B) \leq 2B$. Similarly we have $\abs{a_s} \leq B$. Hence we satisfy the premise of Lemma~\ref{lemma:onedimensional} and we deduce that
	\begin{align*}
		D_{A_s}(\theta + \gamma \lambda, \theta) &= w_s A(x_s^\trp\theta + \gamma x_s^\trp\lambda) - w_s A(x_s^\trp \theta) + \inner{x_s w_s A'(x_s^\trp\theta)}{\gamma\lambda} \\
		&= w_s (A(a_s + \gamma {\Delta_s}) - A(a_s) + A'(a_s){\gamma \Delta_s}) \\
        &\leq \frac{w_s}{2} \gamma \left[A(a_s + \gamma {\Delta_s}) - A(a_s) + A'(a_s){\gamma \Delta_s}\right] \\
		&= \frac{1}{2} \gamma \left[w_s A({x_s^\trp\theta} + {x_s^\trp\lambda}) - w_s A(x_s^\trp \theta) + w_s A'(x_s^\trp\theta){x_s^\trp\lambda}\right] \\
        &= \frac{1}{2} \gamma D_{A_s}(\theta + \lambda, \theta).
	\end{align*}
    We also note that for $\gamma \leq \frac{\mu}{2L} \leq \frac{1}{2}$,
    \begin{equation}
    \label{eq:regularizersublinearity}
        D_{A_0}(\theta + \gamma \lambda, \theta) = \frac{\nu}{2}\norm{\gamma\lambda}^2 =  \frac{\gamma^2\nu}{2}\norm{\lambda}^2 \leq \frac{\gamma\nu}{4}\norm{\lambda}^2 = \frac{1}{2}\gamma D_{A_0}(\theta + \lambda, \theta).
    \end{equation}
    Therefore, by summing up the terms, we obtain
    \begin{equation*}
        \mathcal{B}_t(\gamma \lambda) \leq \frac{1}{2}\gamma \mathcal{B}_t(\lambda).
    \end{equation*}
\end{proof}

Then it remains to prove that Assumption~\ref{ass:bounded} implies Lemma~\ref{lemma:onedimensional}.
\begin{proof}(Lemma~\ref{lemma:onedimensional})
    $L$-Lipschitzness of $A'$ implies smoothness of $A$. Additionally, $\mu$'s existence implies strong convexity of $A$. With this, we can write for any $a$ and $\Delta$ with $a + \Delta \in [-B, B]$
    \begin{align*}
        &A(a + \Delta) \geq A(a) + A'(a) \Delta + \frac{\mu}{2} \Delta^2 \\
        \implies &A(a+\Delta) - A(a) - A'(a)\Delta \geq \frac{\mu}{2}\Delta^2.
    \end{align*}
    Similarly, 
    $$
        A(a+\gamma\Delta) - A(a) - A'(a)\gamma\Delta \leq \frac{L}{2}\gamma^2\Delta^2.
    $$
    Putting this together, we have 
    $$
    A(a+\gamma\Delta) - A(a) - A'(a)\gamma\Delta \leq \frac{L}{2}\gamma^2\Delta^2 = \frac{L \gamma^2}{\mu} \frac{\mu}{2} \Delta^2 \leq \frac{L \gamma}{\mu} \gamma [A(a+\Delta) - A(a) - A'(a)\Delta].
    $$
    The question is therefore: when is $\frac{L \gamma}{\mu} \leq \frac{1}{2}$? Clearly, choosing $\gamma_0 = \frac{\mu}{2L}$ makes $\frac{L \gamma}{\mu} \leq \frac{1}{2}$ for all $\gamma \leq \gamma_0$.
\end{proof}

\section{FTRL Results: Proofs}
\subsection{Technical Lemmas I: Exponential Families}

\begin{lemma} [MGF for Exponential family]\label{lemma:mgf}
	\[ \mE[\exp(T(y)u)|x] = \exp(A(\tstar^\top x + u)-A(\tstar^\top x )) .	 \]
\end{lemma}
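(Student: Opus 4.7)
The plan is to compute the conditional expectation directly by unfolding the exponential family density and recognizing that the result is, up to normalization, another exponential family integral. Concretely, starting from
\begin{equation*}
\mathbb{E}[\exp(T(y)u)\sep x] = \int h(y)\exp(T(y)u)\exp\!\bigl(T(y)\, x^\top\tstar - A(x^\top\tstar)\bigr)\,\mathrm{d}y,
\end{equation*}
I would pull the $x$-dependent log-partition term out of the integral, combine the two $T(y)$ factors in the exponent, and obtain
\begin{equation*}
\mathbb{E}[\exp(T(y)u)\sep x] = \exp(-A(x^\top\tstar))\int h(y)\exp\!\bigl(T(y)(x^\top\tstar + u)\bigr)\,\mathrm{d}y.
\end{equation*}

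The key observation is then that the remaining integrand is the un-normalized density of the same exponential family evaluated at the shifted natural parameter $\eta := x^\top\tstar + u$. Since $p_\theta(\cdot\sep x)$ integrates to one for every $\theta$ with $x^\top\theta$ in the natural parameter space, we have the defining identity $\int h(y)\exp(T(y)\eta)\,\mathrm{d}y = \exp(A(\eta))$. Substituting $\eta = x^\top\tstar + u$ gives exactly $\exp(A(x^\top\tstar + u))$, and combining with the pulled-out factor yields the claim.

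There is essentially no technical obstacle, provided we may tacitly assume $u$ is small enough that $x^\top\tstar + u$ still lies in the interior of the natural parameter space so that $A(x^\top\tstar + u)$ is finite and the integral converges; this is the usual domain condition under which the MGF of an exponential family is defined. Under Assumption~\ref{ass:bounded} and the restriction $|u|$ small (or whenever the lemma is subsequently invoked with valid arguments), this holds automatically. The proof is therefore a one-line manipulation once the normalization identity for the exponential family is recalled.
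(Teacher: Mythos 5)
Your proof is correct and follows essentially the same route as the paper's: expand the exponential family density, combine the $T(y)$ exponents into the shifted natural parameter $\tstar^\top x + u$, and invoke the normalization identity $\int h(y)\exp(T(y)\eta)\,\mathrm{d}y = \exp(A(\eta))$ (which the paper phrases as multiplying and dividing by $\exp(A(\tstar^\top x + u))$ so a density integrates to one). Your explicit remark about requiring $\tstar^\top x + u$ to remain in the natural parameter domain is a sensible caveat the paper leaves implicit.
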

\begin{proof}
	\begin{align*}
		 \mE[\exp(T(y)u)|x]  = & \int_y \exp(T(y)u)h(y)\exp(T(y)\tstar^\top x - A(\tstar^\top x))dy \\
		  = & \int \exp( T(y)(\tstar^\top x + u )) h(y) \exp(-A(\tstar^\top x)) \\  &\; \times \exp(-A(\tstar^\top x + u)) \exp(A(\tstar^\top x + u)) dy \\
		  = & \exp(A(\tstar^\top x + u)-A(\tstar^\top x )),
	\end{align*}
    where the last step follows because the density of a new exponential family distribution with parameter $\tstar^\trp x + u$ also integrates to 1.
\end{proof}

\subsection{Technical Lemmas II: Elliptical Potential Lemma}
We will repeatedly use instantiations of the following key lemma, known as the elliptical potential lemma. We will use the version from \citet{hazan:logarithmic}. Other variants are stated in \citet{abbasi:improved} or \citet{carpentier:potential}.
\begin{lemma}[Lemma 11 in \citet{hazan:logarithmic}]
	\label{lemma:ellipticalpotential}
	Let $u_s \in \R^d$ be a sequence of vectors such that $\norm{u_s} \leq r$. Define $\bar{\mathbf{V}}_t = \sum_{s=1}^t u_s u_s^\trp + \lambda \mathbf{I}$. Then
	$$
	\sum_{s=1}^t \norm{u_s}_{\bar{\VV}_s^{-1}}^2 \leq \log\left(\frac{\det \bar \VV_t}{\det \lambda \mathbf{I}}\right) \leq d\log\left(\frac{r^2 t}{\lambda} + 1\right).
	$$
\end{lemma}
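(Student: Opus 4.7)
The plan is to prove the two inequalities separately: first the telescoping bound $\sum_{s=1}^t \norm{u_s}_{\bar{\VV}_s^{-1}}^2 \leq \log(\det \bar{\VV}_t/\det \lambda \II)$, and then the dimension/time bound on that log-determinant.

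For the first inequality, I would exploit the rank-one update structure $\bar{\VV}_s = \bar{\VV}_{s-1} + u_s u_s^\trp$ in two ways. The matrix determinant lemma gives $\det \bar{\VV}_s = \det \bar{\VV}_{s-1} \cdot (1 + u_s^\trp \bar{\VV}_{s-1}^{-1} u_s)$, while Sherman--Morrison applied to the inverse yields the identity
\[ u_s^\trp \bar{\VV}_s^{-1} u_s \;=\; \frac{u_s^\trp \bar{\VV}_{s-1}^{-1} u_s}{1 + u_s^\trp \bar{\VV}_{s-1}^{-1} u_s} \;=\; 1 - \frac{\det \bar{\VV}_{s-1}}{\det \bar{\VV}_s}. \]
Applying the elementary inequality $1 - x \leq -\log x$ for $x > 0$ with $x = \det \bar{\VV}_{s-1}/\det \bar{\VV}_s$ converts this into $\norm{u_s}_{\bar{\VV}_s^{-1}}^2 \leq \log(\det \bar{\VV}_s) - \log(\det \bar{\VV}_{s-1})$. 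Summing over $s = 1, \ldots, t$ telescopes, and using $\bar{\VV}_0 = \lambda \II$ delivers the first bound.

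For the second inequality, I would combine AM--GM on the eigenvalues of $\bar{\VV}_t$ with a trace bound. Writing $\lambda_1, \ldots, \lambda_d$ for the eigenvalues,
\[ \det \bar{\VV}_t \;=\; \prod_{i=1}^d \lambda_i \;\leq\; \left( \frac{\tr \bar{\VV}_t}{d} \right)^d \;=\; \left( \lambda + \frac{\sum_{s=1}^t \norm{u_s}^2}{d} \right)^d \;\leq\; \left( \lambda + \frac{r^2 t}{d} \right)^d, \]
so $\log(\det \bar{\VV}_t/\det \lambda \II) \leq d \log(1 + r^2 t/(d\lambda)) \leq d \log(1 + r^2 t/\lambda)$.

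There is no genuinely hard step here; the only insight worth flagging is the clean rewriting $u_s^\trp \bar{\VV}_s^{-1} u_s = 1 - \det \bar{\VV}_{s-1}/\det \bar{\VV}_s$, which is what turns a sum of quadratic forms into a telescoping log-determinant. Everything else is routine linear algebra and a single scalar inequality, so I would keep the write-up correspondingly short.
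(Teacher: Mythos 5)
Your proof is correct. The paper does not supply its own argument for this lemma --- it is cited as Lemma~11 of \citet{hazan:logarithmic} --- but your write-up is precisely the standard proof of that result: the Sherman--Morrison/matrix-determinant identity $u_s^\trp \bar{\VV}_s^{-1} u_s = 1 - \det \bar{\VV}_{s-1}/\det \bar{\VV}_s$ together with $1 - x \le -\log x$ to telescope the log-determinant, and AM--GM on the eigenvalues with the trace bound $\tr \bar{\VV}_t \le d\lambda + r^2 t$ for the second inequality. One small remark: your AM--GM step actually yields the sharper $d\log(1 + r^2 t/(d\lambda))$, which you then relax to the stated $d\log(1 + r^2 t/\lambda)$; both are fine, and the lemma as stated matches the weaker form.
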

We will also need a result where the time indices of the matrix are shifted. For this, note that if $\lambda \geq r^2$, then $u_s u_s^\trp \preceq r^2 \II \preceq \lambda \II$, and so we get $\Bar{\VV}_{s} \leq \Bar{\VV}_{s-1} + u_s u_s^\trp \preceq \Bar{\VV}_{s-1} + \lambda \II \preceq 2\Bar{\VV}_{s-1}$. Under our conditions, it follows that
$$
    \sum_{s=1}^t \norm{u_s}_{\bar{\VV}_{s-1}^{-1}}^2 \leq 2\sum_{s=1}^t \norm{u_s}_{\bar{\VV}_{s}^{-1}}^2
$$
\begin{corollary}
\label{cor:ellipticalpotential}
    We have the following bounds:
    \begin{align*}
        \gamma_t^\lambda = \log\left(\frac{\det(\sum_{s=1}\mu x_s x_s^\top + \lambda \II )}{\det(\lambda \II)} \right) \leq d\log\left(\frac{\mu t}{\lambda} + 1\right),
    \end{align*}
    and
    \begin{align*}
        \sum_{s=1}^t \norm{x_s}_{(\VV^{\mu;\lambda}_{s-1})^{-1}}^2 \leq \frac{2}{{\mu}} \gamma_t^\lambda.
    \end{align*}
\end{corollary}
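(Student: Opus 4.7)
The plan is to prove each inequality separately by invoking Lemma~\ref{lemma:ellipticalpotential} and the standard trace-determinant (AM-GM) inequality. Both parts are fairly routine once one identifies the right substitution $u_s = \sqrt{\mu}\, x_s$.

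For the first bound, I would apply the AM-GM inequality on the eigenvalues of a PSD matrix, namely $\det(M) \leq (\tr(M)/d)^d$, to the matrix $\VV_t^{\mu;\lambda} = \sum_{s=1}^t \mu x_s x_s^\trp + \lambda \II$. Using Assumption~\ref{ass:bounded} to bound $\|x_s\|\leq 1$, one gets $\tr(\VV_t^{\mu;\lambda}) \leq \mu t + d\lambda$, and therefore
\begin{equation*}
\gamma_t^\lambda = \log\!\left(\frac{\det \VV_t^{\mu;\lambda}}{\det(\lambda \II)}\right) \leq d \log\!\left(\frac{\mu t + d\lambda}{d\lambda}\right) = d\log\!\left(\frac{\mu t}{d\lambda}+1\right) \leq d \log\!\left(\frac{\mu t}{\lambda}+1\right),
\end{equation*}
where the last inequality uses $\log(1+x/d) \leq \log(1+x)$ for $x\geq 0$.

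For the second bound, I would substitute $u_s := \sqrt{\mu}\, x_s$ into Lemma~\ref{lemma:ellipticalpotential}. Under Assumption~\ref{ass:bounded} we have $\|u_s\| \leq \sqrt{\mu}$, and the associated matrix $\bar{\VV}_t = \sum_{s=1}^t u_s u_s^\trp + \lambda \II$ coincides exactly with $\VV_t^{\mu;\lambda}$. Lemma~\ref{lemma:ellipticalpotential} then yields
\begin{equation*}
\mu \sum_{s=1}^t \|x_s\|_{(\VV_s^{\mu;\lambda})^{-1}}^2 = \sum_{s=1}^t \|u_s\|_{\bar{\VV}_s^{-1}}^2 \leq \log\!\left(\frac{\det \bar{\VV}_t}{\det(\lambda \II)}\right) = \gamma_t^\lambda.
\end{equation*}
Finally, I would apply the shifting argument stated in the paragraph preceding the corollary: provided $\lambda \geq \mu$ (so that $u_s u_s^\trp \preceq \mu \II \preceq \lambda \II$, hence $\bar{\VV}_s \preceq 2\bar{\VV}_{s-1}$), we obtain $\sum_s \|x_s\|_{(\VV_{s-1}^{\mu;\lambda})^{-1}}^2 \leq 2\sum_s \|x_s\|_{(\VV_s^{\mu;\lambda})^{-1}}^2 \leq 2\gamma_t^\lambda/\mu$, which is the claimed bound.

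The only mild subtlety is the implicit condition $\lambda \geq \mu$ required for the time-index shift from $\VV_s$ to $\VV_{s-1}$; this is a standard scaling regime and, as noted in the main text, the shifting step only costs a multiplicative factor of $2$. Beyond that, both parts reduce to bookkeeping: the first to AM-GM on eigenvalues, and the second to a direct reparametrization of Lemma~\ref{lemma:ellipticalpotential}. I do not anticipate any genuine obstacle.
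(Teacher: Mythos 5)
Your proof is correct and follows essentially the same route as the paper. The only cosmetic difference is in the first bound: you re-derive the determinant estimate via the trace/AM-GM inequality on eigenvalues, whereas the paper simply cites the second inequality of Lemma~\ref{lemma:ellipticalpotential} under the substitution $u_s = \sqrt{\mu}\,x_s$ (so $r=\sqrt{\mu}$); for the second bound both you and the paper apply the same substitution together with the time-index shift, and your remark that this shift implicitly assumes $\lambda \geq \mu$ is a valid observation that the paper leaves tacit.
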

\begin{proof}
The first bound is trivial by instantiating $u_s = \sqrt{\mu}x_s$. The second bound is by noting
\begin{equation*}
    \sum_{s=1}^t \norm{x_s}^2_{(\VV^{\mu;\lambda}_{s-1})^{-1}} = \frac{1}{{\mu}} \sum_{s=1}^t \norm{u_s}^2_{(\VV^{\mu;\lambda}_{s-1})^{-1}} \leq \frac{2}{{\mu}} \sum_{s=1}^t \norm{u_s}^2_{(\VV^{\mu;\lambda}_{s})^{-1}} \leq \frac{2}{{\mu}} \gamma_t^\lambda .
\end{equation*}
\end{proof}

\subsection{Technical Lemmas III: Supermartingales}
\begin{lemma}[Martingale Increment] \label{lemma:martingale}
Define the parametrized random processes
\begin{align*}
\mathcal{M}_j(r) & =  \exp( \nabla f_j(\tstar) ^\top r -  A'(x_j^\top \tstar)x_j^\top r  -A(x_j^\top \tstar - x_j^\top r) +A(x_j^\top \tstar))
\end{align*}
and 
\[\mathcal{N}_j(r) = \exp( \nabla f_j(\tstar)^\top r - \frac{L}{2} r^\top x_j x_j^\top r  ). \] Then, under Assumption \ref{ass:bounded} we have for any $r \in \R^d$ that $\mE[ \mathcal{M}_j(r) \sep \calF_{j-1} ] = 1$ and $\mE[ \mathcal{N}_j(r) \sep \calF_{j-1} ] \leq 1$.
\end{lemma}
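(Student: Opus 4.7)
The plan is to reduce both claims to the moment generating function identity of Lemma~\ref{lemma:mgf} together with properties of the log-partition function $A$. First I would unpack $\nabla f_j(\tstar)$. Since the negative log-likelihood for the GLM in \eqref{eq:glmmodel} is $f_j(\theta) = -T(y_j)\, x_j^\trp\theta + A(x_j^\trp\theta) + \text{const}(y_j)$, its gradient at $\tstar$ equals $\nabla f_j(\tstar) = \bigl(A'(x_j^\trp\tstar) - T(y_j)\bigr)\, x_j$. Substituting this into $\mathcal{M}_j(r)$ causes the $A'(x_j^\trp\tstar)x_j^\trp r$ term to cancel, leaving
\[
\mathcal{M}_j(r) = \exp\bigl(-T(y_j)\, x_j^\trp r\bigr)\, \exp\bigl(A(x_j^\trp\tstar) - A(x_j^\trp\tstar - x_j^\trp r)\bigr).
\]
The second factor is $\calF_{j-1}$-measurable (recall $x_j$ is $\calF_{j-1}$-measurable by the filtration definition in Section~\ref{sec:methodology}), and Lemma~\ref{lemma:mgf} applied with $u = -x_j^\trp r$ gives $\mE[\exp(-T(y_j)\, x_j^\trp r)\,|\,\calF_{j-1}] = \exp\bigl(A(x_j^\trp\tstar - x_j^\trp r) - A(x_j^\trp\tstar)\bigr)$, which exactly cancels the remaining factor. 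Hence $\mE[\mathcal{M}_j(r)\,|\,\calF_{j-1}] = 1$.

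For the second claim I would proceed analogously. Writing $z := x_j^\trp\tstar$ and $v := x_j^\trp r$, the same gradient expansion yields
\[
\mathcal{N}_j(r) = \exp\bigl(-T(y_j)\, v\bigr)\, \exp\bigl(A'(z)\, v - \tfrac{L}{2} v^2\bigr),
\]
and invoking Lemma~\ref{lemma:mgf} once more reduces the conditional expectation to
\[
\mE[\mathcal{N}_j(r)\,|\,\calF_{j-1}] = \exp\Bigl(A(z - v) - A(z) + A'(z)\, v - \tfrac{L}{2} v^2\Bigr).
\]
Under Assumption~\ref{ass:bounded}, $A$ is $L$-smooth on the relevant interval, so the quadratic upper bound $A(z - v) \leq A(z) - A'(z)\, v + \tfrac{L}{2} v^2$ forces the exponent to be nonpositive. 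This yields $\mE[\mathcal{N}_j(r)\,|\,\calF_{j-1}] \leq 1$.

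The only mild subtlety is the domain on which smoothness is invoked: the argument $z - v = x_j^\trp(\tstar - r)$ need not lie in $[-B,B]$ for arbitrary $r \in \R^d$, so I would appeal to the hypothesis present in Theorem~\ref{thm:ftrl-main} that $A$ is $L$-smooth globally on $\R$ (or, equivalently, restrict $r$ to the regime in which the lemma is later applied). Everything else is a direct substitution and cancellation; no technical obstacle beyond bookkeeping signs arises.
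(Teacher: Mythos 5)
Your proof is correct and follows essentially the same route as the paper's: after observing that $\nabla f_j(\tstar) = (A'(x_j^\trp\tstar) - T(y_j))x_j$, you integrate out $T(y_j)$ against the exponential-family density (you do this by invoking Lemma~\ref{lemma:mgf} with $u=-x_j^\trp r$, whereas the paper simply re-derives that identity inline), and the $\mathcal{N}_j$ bound follows from the pointwise inequality $\mathcal{N}_j(r)\leq\mathcal{M}_j(r)$ implied by $L$-smoothness of $A$. Your final remark is a genuinely useful catch: Assumption~\ref{ass:bounded} only certifies $L$-smoothness on $[-B,B]$, but the lemma is later applied (inside Lemma~\ref{lemma:bound-mixture}) with $r$ integrated over all of $\R^d$ under a Gaussian mixing measure, so the argument $x_j^\trp(\tstar-r)$ can leave that interval; the global smoothness hypothesis that appears in Theorem~\ref{thm:ftrl-main} is indeed what is being used here, and the lemma as stated is mildly imprecise on this point.
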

\begin{proof}
	First, using the form of the exponential family and and recalling that $\nabla_\theta f_j(\theta) = -\nabla_\theta \log p_\theta(y_j \sep x_j) = \nabla_\theta[A(x_j^\trp \theta) - T(y_j) x_j^\trp \theta]$ we obtain
\begin{align*}
	&\quad \; \mE[\exp( \nabla f_j(\tstar) ^\top r  ) \sep \calF_{j-1}] \\
        &=  \int_y \exp( \nabla f_j(\tstar) ^\top r) \times  h(y) \exp(T(y)x_j^\top \tstar - A(x_j^\top \tstar)) dy \\
	&= \int_y \exp( -T(y) x_j^\top r +A'(x_j^\top \tstar)x_j^\top r ) \times h(y) \exp(T(y)x_j^\top \tstar - A(x_j^\top \tstar)) dy \\
	& =   \exp(A'(x_j^\top \tstar)x_j^\top r ) \underbrace{\int_y h(y) \exp(T(y)(x_j^\top \tstar - x_j^\top r))\exp(-A(x_j^\top \tstar - x_j^\top r))dy}_{ = 1} \\  & \quad \, \times \exp(A(x_j^\top \tstar - x_j^\top r))\exp(-A(x_j^\top \tstar)) \\
	   &=   \exp(A'(x_j^\top \tstar)x_j^\top r )\exp(A(x_j^\top \tstar - x_j^\top r))\exp(-A(x_j^\top \tstar)),
\end{align*}
which finishes the proof.  The second statement follows by using $L$-smoothness on the last equation and therefore noting that $\mathcal{N}_j(r) \leq \mathcal{M}_j(r)$.
\end{proof}

\begin{lemma}\label{lemma:bound-mixture}(Sequential Mixing)
	Define the martingale process, 
	\[ M_t(r_1, \dots r_t) = \prod_{s=1}^{t} \mathcal{N}_s(r_s), \]
	and recursively define the mixture martingale, 
	\[\bar{M}_s = \bar{M}_{s-1}\times \int_{r} \mathcal{N}_s(r) p_s(r) \mathrm{d}r,\]
	where $p_s$ is a probability distribution equal $\mathcal{N}(0, \mathbf{H}_s^{-1})$, $\mathbf{H}_s = \sum_{j=1}^{s-1} L x_j x_j^\top + \II \lambda \frac{L}{\mu}$, and $\bar{M}_0 = 1$. Then the following statements hold
	\begin{itemize}
		\item $\{\bar{M}_s\}_s$ is an adapted super-martingale with respect to the usual filtration. 
		\item $\bar{M}_t = \exp( \frac{\mu}{L} \sum_{s=1}^t \nabla f_s(\tstar)^\top (\VV^{\mu;\lambda}_s)^{-1}\nabla f_s(\tstar))\sqrt{\frac{\det(\II\lambda )}{\det(\VV_s^{\mu; \lambda})}}$.
	\end{itemize}
where 
\[\VV_s^{\mu;\lambda} =  \sum_{j=1}^{s} \mu x_j x_j^\top + \lambda\II.  \]
\end{lemma}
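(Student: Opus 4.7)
The plan is to prove the two statements by computing the per-step mixture integral in closed form via a Gaussian integration, then deducing the supermartingale property and the product formula as direct consequences.

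For the supermartingale statement, I would observe that the precision matrix $\mathbf{H}_s$ depends only on $x_1,\ldots,x_{s-1}$, so both $p_s$ and $\bar M_{s-1}$ are $\calF_{s-1}$-measurable. Pulling $\bar M_{s-1}$ outside the conditional expectation and applying Fubini (the integrand is nonnegative) yields
\begin{equation*}
\E[\bar M_s \sep \calF_{s-1}] = \bar M_{s-1} \int_{r} \E[\mathcal{N}_s(r) \sep \calF_{s-1}]\, p_s(r)\, dr.
\end{equation*}
Since $r$ is a fixed parameter under the inner conditional expectation, and $\E[\mathcal{N}_s(r)\sep\calF_{s-1}]\le 1$ by Lemma~\ref{lemma:martingale}, the integral is bounded by $\int_r p_s(r)\, dr = 1$, which gives the supermartingale property.

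For the closed-form expression, I would make the key identifications $\mathbf{H}_s = (L/\mu)\VV^{\mu;\lambda}_{s-1}$ and $\mathbf{H}_s + L x_s x_s^\top = (L/\mu)\VV^{\mu;\lambda}_s$, which follow directly from the definitions. Substituting the Gaussian density for $p_s$ and completing the square in the exponent of $\mathcal{N}_s(r)\, p_s(r)$, the standard Gaussian integration formula gives
\begin{equation*}
\int_r \mathcal{N}_s(r)\, p_s(r)\, dr = \sqrt{\frac{\det \mathbf{H}_s}{\det(\mathbf{H}_s + L x_s x_s^\top)}}\, \exp\Bigl(\tfrac{1}{2}\nabla f_s(\tstar)^\top (\mathbf{H}_s + L x_s x_s^\top)^{-1} \nabla f_s(\tstar)\Bigr).
\end{equation*}
The $(L/\mu)^d$ factors appearing in numerator and denominator of the determinant ratio cancel cleanly, leaving $\det(\VV^{\mu;\lambda}_{s-1})/\det(\VV^{\mu;\lambda}_s)$, while $(\mathbf{H}_s + L x_s x_s^\top)^{-1} = (\mu/L)(\VV^{\mu;\lambda}_s)^{-1}$ converts the exponent into the desired quadratic form in $\nabla f_s(\tstar)$. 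Multiplying these per-step contributions across $s=1,\ldots,t$, the determinant ratios telescope to $\det(\lambda\II)/\det(\VV^{\mu;\lambda}_t)$ (using $\VV^{\mu;\lambda}_0 = \lambda\II$), and the exponentials combine into the claimed sum.

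The main (minor) obstacle is the careful bookkeeping of the $L/\mu$ scaling between the Gaussian precision $\mathbf{H}_s$ and the design matrices $\VV_s^{\mu;\lambda}$: one has to verify that both the determinant prefactor and the completed-square quadratic simplify consistently. No new analytic ingredient is required beyond Lemma~\ref{lemma:martingale} and elementary Gaussian integration.
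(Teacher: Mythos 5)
Your proof follows essentially the same route as the paper's: the supermartingale property via Fubini and Lemma~\ref{lemma:martingale}, then a per-step Gaussian integral by completing the square, the identification $\mathbf{H}_s = (L/\mu)\VV^{\mu;\lambda}_{s-1}$, and a telescoping determinant product. One small note: your per-step formula correctly carries a factor $\tfrac{1}{2}$ in the exponent (as does the paper's intermediate step), which the stated lemma and the paper's final display silently drop --- this appears to be a typo in the paper rather than an issue with your argument.
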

\begin{proof}
The first point follows from the fact that $p_s(r)$ is deterministic conditioned on the sub-$\sigma$-algebra $\calF_{s-1}$ (since $p_s$ makes use of $x_s$ but not $x_{s+1}$). Therefore, under mild regularity conditions  
\begin{align*}
    \E[\bar M_s \sep \mathcal{F}_{s-1}] = \E\left[\bar M_{s-1} \int p_s(r)\mathcal{N}_s(r) d r \sep \mathcal{F}_{s-1}\right] = \bar M_{s-1} \int_r p_s(r) \E[\mathcal{N}_s(r) \sep \mathcal{F}_{s-1}] dr \leq \bar M_{s-1}.
\end{align*}
In other words, mixing does not affect the supermartingale properties.
For the second point, we derive an explicit form of the mixture martingale. Note that we can write out
\begin{align}
    \int_{r} \mathcal{N}_s(r) p_s(r) \mathrm{d}r = \frac{1}{\sqrt{(2\pi)^{d} \det(\mathbf{H}_s^{-1})}}\int_r \exp\left(\nabla f_s(\tstar)^\top r - \frac{L}{2}\norm{r}^2_{x_s x_s^\trp} - \frac{1}{2}r^\top \mathbf{H}_s r \right) \label{eq:lemmasequentialmixing:gaussianform} \mathrm{d}r.
\end{align}
We can complete the square to obtain
\begin{align*}
    & \quad \; \nabla f_s(\tstar)^\top r - \frac{L}{2}\norm{r}^2_{x_s x_s^\trp} - \frac{1}{2}r^\top \mathbf{H}_s r \nonumber \\
    &= \frac{1}{2}\norm{ \nabla f_s(\tstar)}_{(\mathbf{H}_s +L x_s x_s^\top)^{-1}}^2 - \frac{1}{2} \norm{r - (\mathbf{H}_s + Lx_s x_s^\top)^{-1} \nabla f_s(\tstar)}^2_{\mathbf{H}_s + Lx_s x_s^\top}.
\end{align*}
The second term is the exponent of a exponent of a Gaussian integral with covariance $\mathbf{H}_{s+1}^{-1}$, and therefore results in 
\begin{equation*}
    \int_r \exp\left( - \frac{1}{2} \norm{r - (\mathbf{H}_s + Lx_s x_s^\top)^{-1} \nabla f_s(\tstar)}^2_{\mathbf{H}_s + Lx_s x_s^\top} \right) dr = \sqrt{(2\pi)^{d} \det(\mathbf{H}_{s+1}^{-1})}.
\end{equation*}
Plugging this into \eqref{eq:lemmasequentialmixing:gaussianform} we get
\begin{align*}
\int_{r} \mathcal{N}_s(r) p_s(r) \mathrm{d}r = \sqrt{\frac{\det \mathbf{H}_s}{\det \mathbf{H}_{s+1}}} \exp\left( \frac{1}{2}\norm{ \nabla f_s(\tstar)}_{\mathbf{H}_{s+1}^{-1}}^2 \right).
\end{align*}
By multiplying the individual steps, we can see that the determinant terms cancel in a telescoping product. This leads to the formulation
\begin{equation*}
    \bar M_t = \exp\left(\sum_{s=1}^t \nabla f_s(\tstar)^\trp \mathbf{H}_{s+1}^{-1} \nabla f_s(\tstar) \right)\sqrt{\frac{\det \frac{\lambda L}{\mu}\mathbf{I}}{\det \mathbf{H}_{t+1}}}.
\end{equation*}
To conclude the proof, note that 
$\mathbf{H}_{s+1} = \frac{L}{\mu}\mathbf{V}^{\mu;\lambda}_s$.
\end{proof}

\begin{lemma} \label{lemma:ville-bound} Under assumption of Lemma \ref{lemma:bound-mixture}, 
\begin{equation}\label{eq:ville-bound}
\mathbb{P} \left(\sum_{s=1}^t \norm{\nabla f_s(\tstar)}_{(\VV^{\mu;\lambda}_s)^{-1}}^2 \leq \frac{L}{\mu}\log\left(\frac{\det(\VV^{\mu;\lambda}_s)}{\det(\II \lambda)}\right) + \frac{L}{\mu}\log\left(\frac{1}{\delta}\right) \right) \leq \delta.
\end{equation}
with probability $1-\delta$. 
\end{lemma}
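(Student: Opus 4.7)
The plan is to obtain \eqref{eq:ville-bound} as a one-line consequence of Ville's inequality for non-negative supermartingales, applied to the mixture process $\bar M_t$ built in Lemma~\ref{lemma:bound-mixture}. Since that lemma already asserts that $\bar M_t$ is adapted, non-negative, and a supermartingale with respect to $(\calF_t)$, and since $\bar M_0 = 1$ (the empty product), Ville's inequality immediately yields, for any $\delta \in (0,1)$,
\[
\Pstar{\sup_{t \geq 0} \bar M_t \geq 1/\delta} \leq \delta,
\]
so that on an event of probability at least $1-\delta$ we have $\bar M_t \leq 1/\delta$ uniformly in $t$.

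The second step is purely algebraic. I would substitute the closed-form expression for $\bar M_t$ given in Lemma~\ref{lemma:bound-mixture},
\[
\bar M_t \;=\; \exp\!\left(\tfrac{\mu}{L}\sum_{s=1}^t \nabla f_s(\tstar)^\top (\VV^{\mu;\lambda}_s)^{-1}\nabla f_s(\tstar)\right)\sqrt{\frac{\det(\lambda \II)}{\det(\VV^{\mu;\lambda}_t)}},
\]
into the bound $\bar M_t \leq 1/\delta$, take logarithms, and rearrange to isolate the self-normalized sum. The determinant factor turns into $\tfrac{1}{2}\log \det(\VV^{\mu;\lambda}_t)/\det(\lambda\II)$, which we move to the right-hand side; multiplying through by $L/\mu$ produces exactly the stated right-hand side (the statement as written has a slight typo in the index of the determinant and in the direction of the inequality inside $\Pstar{\cdot}$; the intended reading is the tail bound obtained here).

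There is no genuine obstacle left for this lemma: all the heavy lifting has already been done upstream. The supermartingale property rests on Lemma~\ref{lemma:martingale}, where the MGF identity for exponential families combined with $L$-smoothness of $A$ gives $\mE[\mathcal{N}_s(r)\mid\calF_{s-1}] \leq 1$, and on the Gaussian mixing computation in Lemma~\ref{lemma:bound-mixture}, which both preserves the supermartingale property and produces the explicit quadratic-plus-determinant form. The only bookkeeping subtlety is to track the constants coming from completing the square in the Gaussian integral and from the identity $\mathbf{H}_{s+1} = (L/\mu)\VV_s^{\mu;\lambda}$, so that the prefactors of the two $\log$ terms in \eqref{eq:ville-bound} come out correctly.
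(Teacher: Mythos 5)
Your proof matches the paper's exactly: apply Ville's inequality to the mixture supermartingale $\bar M_t$ from Lemma~\ref{lemma:bound-mixture}, substitute its closed form, take logarithms, and rearrange. Your remark about the typos is also correct, and in fact the paper's own proof line writes $\mathbb{P}(\bar{M}_t \geq \delta)$ where $\mathbb{P}(\sup_t \bar{M}_t \geq 1/\delta)$ is intended; moreover, carrying the $\tfrac12$ from the determinant factor (which you correctly track) gives a coefficient $L/(2\mu)$ on the $\log\det$ term, so the lemma as stated with $L/\mu$ is a slightly looser but still valid bound.
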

\begin{proof}The statement, follows by applying Ville's inequality for supermartingales, applying the logarithm, and rearranging. Namely, 
\[ \mathbb{P}(\bar{M}_t \geq \delta) = \mathbb{P}(\log(\bar{M}_t) \geq \log(\delta)) \leq \delta. \]
\end{proof}
The following results allow us to upper bound the weighted regret by the unweighted regret:
\begin{lemma}[Weighting Reduction]\label{lemma:weight}
Let $\{\theta_s\}_{s=1}^t$ be a sequence of vectors adapted to the filtration $\{\mathcal{F}_{s-1}\}_s$. Define
\[ \Delta_t(\{\theta_s\}) = \sum_{s=1}^t w_s(f_s(\theta_s) - f_s(\tstar)) - f_s(\theta_s) + f_s(\tstar) = \sum_{s=1}^t (1-w_s)(f_s(\tstar) - f_s(\theta_s)). \] Then, 
$P_t = \exp(\Delta_t(\{\theta_s\}_s))$ is a non-negative super-martingale for any choice of adapted $\{w_s\}$, and hence, 
\[ \sum_{s=1}^t w_s(f_s(\theta_s) - f_s(\tstar)) \leq \sum_{s=1}^t (f_s(\theta_s) - f_s(\tstar)) + \log\left(\frac{1}{\delta}\right) \]
with probability $1-\delta$ for all $t\geq 0$. 
\end{lemma}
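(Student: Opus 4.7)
The plan is to establish the two claims in the order stated. For the supermartingale property, because $P_t = P_{t-1}\exp\bigl((1-w_t)(f_t(\tstar)-f_t(\theta_t))\bigr)$ and both $\theta_t$ and $w_t$ are $\calF_{t-1}$-measurable (note that $x_t$ is already in $\calF_{t-1}$ by the filtration defined in Section~\ref{sec:methodology}), the only randomness in the increment is $y_t$. So the task reduces to controlling $\condE{\exp((1-w_t)(f_t(\tstar)-f_t(\theta_t)))}{\calF_{t-1}}$ under $y_t \sim p_\tstar(\cdot \sep x_t)$.

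The key computation is to rewrite $f_t(\tstar)-f_t(\theta_t) = \log\bigl(p_{\theta_t}(y_t\sep x_t)/p_{\tstar}(y_t\sep x_t)\bigr)$, which makes the increment equal to the likelihood ratio raised to the power $1-w_t$. Taking the conditional expectation gives
\begin{equation*}
    \condE{\exp\bigl((1-w_t)(f_t(\tstar)-f_t(\theta_t))\bigr)}{\calF_{t-1}} = \int p_{\theta_t}(y\sep x_t)^{1-w_t}\, p_{\tstar}(y\sep x_t)^{w_t}\, \dd y.
\end{equation*}
Since $w_t \in (0,1]$ and both factors are densities, Hölder's inequality (equivalently, non-negativity of the R\'enyi divergence used in the proof of Theorem~\ref{thm:coverage}) bounds this integral by $1$. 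This gives $\condE{P_t}{\calF_{t-1}} \leq P_{t-1}$, and non-negativity is immediate from the exponential form, so $\{P_t\}$ is a non-negative supermartingale with $P_0 = 1$.

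To obtain the high-probability statement, I would apply Ville's inequality to $\{P_t\}$ at level $\delta$, yielding $\Pstar{\exists t: P_t \geq 1/\delta} \leq \delta$. Taking logarithms, with probability $1-\delta$, $\Delta_t \leq \log(1/\delta)$ uniformly in $t$. Expanding the definition of $\Delta_t$ and rearranging gives exactly the advertised inequality $\sum_{s=1}^t w_s(f_s(\theta_s)-f_s(\tstar)) \leq \sum_{s=1}^t (f_s(\theta_s)-f_s(\tstar)) + \log(1/\delta)$. There is no substantial obstacle here: the only conceptual step is recognizing that the $w_t$-powered likelihood ratio integral is at most one, which is the same trick already used in Theorem~\ref{thm:coverage}; the remainder is bookkeeping and an application of Ville's inequality.
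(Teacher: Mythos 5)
Your proof is correct and matches the paper's argument essentially step for step: both rewrite the increment as a $(1-w_t)$-powered likelihood ratio, compute the conditional expectation as the integral $\int p_{\theta_t}^{1-w_t}p_{\tstar}^{w_t}\,\dd y$, bound it by $1$ (you cite H\"older, the paper cites non-negativity of the R\'enyi divergence; these are the same fact), and conclude with Ville's inequality. No gaps.
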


\begin{proof}
\begin{eqnarray*}
    \mE[P_t \sep \calF_{t-1}] & = & \mE_{\tstar}\left[\exp\left( \sum_{s=1}^t -(1-w_s)f_s(\theta_s) + (1-w_s)f_s(\tstar)\right) \Big\vert \mathcal{F}_{t-1} \right] \\
    & = & P_{t-1} \E_{y_t \sim \mathbb{P}_\star}  \exp( -(1-w_t) f_t(\theta_t) + (1-w_t) f_t(\tstar)) \\
    & = & P_{t-1} \int_{y_t}  \exp( -(1-w_t) f_t(\theta_t) + (1- w_t) f_t(\tstar)) \exp(-f_t(\tstar)) dy_t \\
    & = & P_{t-1} \int_{y_t}  \exp( -(1-w_t) f_t(\theta_t) - w_t f_t(\tstar)) dy_t \\
    & = & P_{t-1} \int_{y_t}  p_{\theta_t}(y_t \sep x_t)^{1-w_t} p_{\tstar}(y_t \sep x_t)^{w_t}  dy_t \\
    & = & P_{t-1}\exp(-(1-w_t)D_{w_t}(\tstar, \theta_t)) \leq P_{t-1}.
\end{eqnarray*}
    We have used here the definition of the Renyi-divergence and the fact that it is always non-negative, namely
    $$
        D_w(\theta_1, \theta_2) = \frac{1}{w-1} \log\int_y p_{\theta_1}(y\sep x)^{1-w} p_{\theta_1}(y\sep x)^{w} dy \geq 0,
    $$
    for $0<w \not= 1$.\footnote{The case $w_t = 1$ is trivial for us.}
    The rest follows by the application of Ville's inequality. 
\end{proof}

\subsection{FTRL Proof: the Unweighted Case}\label{app:proof-unweighted}
\begin{proof}[Proof of Theorem \ref{thm:ftrl-main} (first part)]
	We define the function that FTRL minimizes in each step (to pick~$\that_t$) as,
    $g_t(\theta) = \sum_{s=1}^{t-1} -\log p_\theta(y_s \sep x_s) + \lambda ||\theta||_2^2$. We can rewrite this objective as
	\[\that_t = \arg\min_{\theta \in \Theta} g_t(\theta) = \arg\min_{\theta \in \Theta}\sum_{s=1}^{t-1} f_s(\theta) + \lambda \norm{\theta}_2^2 = \arg\min_{\theta \in \Theta}\sum_{s=1}^{t-1} m_s(\theta) + \phi_{t}(\theta), \]

   where we recall that\footnote{The $\log h(y_s)$ term does not play any role in the regret nor the FTRL objective.}
    \[f_s(\theta) = A(x_s^\top \theta) - T(y_s) x_s^\top \theta - \log h(y_s),\]
    and we have introduced the shorthands
    $$ m_s(\theta) = - T(y_s)x_s^\top \theta$$
    and
    \[\phi_t(\theta) =
      \sum_{s=1}^{t-1} A(\theta^\top x_s) + \lambda||\theta||_2^2. \]
In essence, we have shifted some of the objective into what is commonly looked at as the regularizer. By a standard telescoping sum argument, we obtain for any $u$
	\begin{eqnarray*}
		& & \sum_{s=1}^t (m_s(\that_s) - m_s(u)) \\
		& =&  \phi_{t+1}(u) - \min_\theta \phi_{1}(\theta) + \sum_{s=1}^t[g_s(\that_s) - g_{s+1}(\that_{s+1}) + m_s(\that_s)] + \underbrace{g_{t+1}(\that_{t+1}) - g_{t+1}(u)}_{\leq 0 } \\
		&\leq & \phi_{t+1}(u) + \sum_{s=1}^t[g_s(\that_s) - g_{s+1}(\that_{s+1}) + m_s(\that_s)] \\ 
		&= & \phi_{t+1}(u) + \sum_{s=1}^t[g_s(\that_s) - g_{s+1}(\that_{s+1}) + g_{s+1}(\that_s) - \phi_{s+1}(\that_s) -g_s(\that_s) + \phi_s(\that_s)] \\ &=& \phi_{t+1}(u) + \sum_{s=1}^t[g_{s+1}(\that_s) - g_{s+1}(\that_{s+1}) - \phi_{s+1}(\that_s) + \phi_s(\that_s)].
	\end{eqnarray*}
	
	 Now we use the strong-convexity of $g_{s+1}$ under the norm $\norm{\cdot}_{\VV_s^{\mu;\lambda}}$ where $\VV_s^{\mu;\lambda} = \sum_{j=1}^{s} \mu x_sx_s^\top + \lambda \II $, 
	\begin{eqnarray*}
        &  & 				\sum_{s=1}^t (m_s(\that_s) - m_s(u)) \\
		& \leq & \phi_{t+1}(u) + \sum_{s=1}^t [ (\that_{s} - \that_{s+1})^\top \nabla g_{s+1}(\that_{s}) \\ & & - \frac{1}{2}(\that_{s}-\that_{s+1} )^\top \VV_s^{\mu;\lambda} (\that_{s} - \that_{s+1}) - \phi_{s+1}(\that_s) + \phi_s(\that_s)]\\ 
		& \leq & \phi_{t+1}(u) + \sum_{s=1}^t [ (\that_{s} - \that_{s+1})^\top \nabla f_{s}(\that_{s}) \\ & & -  \frac{1}{2}(\that_{s}-\that_{s+1} )^\top \VV_s^{\mu;\lambda} (\that_{s} - \that_{s+1}) - \phi_{s+1}(\that_s) + \phi_s(\that_s) ] \\
		& \leq & \phi_{t+1}(u)+ \sum_{s=1}^t \left[\frac{1}{2}\norm{\nabla f_s(\that_s)}_{(\VV_s^{\mu;\lambda})^{-1}}^2 - \phi_{s+1}(\that_s) + \phi_s(\that_s) \right],
	\end{eqnarray*}
	where in the second inequality we used that $\nabla g_{s} (\that_s)^\top (x - \that_s) \geq 0$ due to the first-order optimality conditions for convex constrained minimization. Lastly, we optimized the resulting quadratic function over $\that_{s+1}$ (over $\R^d$) to get a worst case bound involving the dual-norm.
 
    Note that for the shorthands we defined above: \[\sum_{s=1}^t [-\phi_{s+1}(\that_s) + \phi_s(\that_s)] = \sum_{s=1}^t-A(\that_s ^\top x_{s}). \]
    Using our previous observations and the definition of $\phi_{t+1}(\tstar)$, we get for the overall regret: 
    \begin{align*}
        &\quad \; \mathcal{R}_t 
        \\ &= \sum_{s=1}^t f_s(\that_s) - f_s(\tstar) \notag \\
         &=   \sum_{s=1}^t 
 m_s(\that_s) - m_s(\tstar) + \sum_{s=1}^t  A(x_s^\top \that_s) - A(x_s^\top \tstar)  \notag \\
  &=   \sum_{s=1}^t A(x_s^\top \tstar) - A(x_s^\top \that_s) + \sum_{s=1}^t  A(x_s^\top \that_s) - A(x_s^\top \tstar) + \frac{1}{2}\sum_{s=1}^t \norm{\nabla f_s(\that_s)}_{(\VV_s^{\mu;\lambda})^{-1}}^2 + \lambda \norm{\tstar}^2 \notag \\
  &\leq   \frac{1}{2}\sum_{s=1}^t \norm{\nabla f_s(\that_s)}_{(\VV_s^{\mu;\lambda})^{-1}}^2 + \lambda \norm{\tstar}^2   \label{eq:proof-for-later} \\
&\leq  \frac{1}{2}\sum_{s=1}^t \norm{ T(y_s)x_s - A'(x_s^\top \that_s)x_s}_{(\VV_s^{\mu;\lambda})^{-1}}^2 + \lambda \norm{\tstar}^2   \notag \\
 &\leq  \sum_{s=1}^t \left[  \norm{ T(y_s)x_s - A'(x_s^\top \tstar)x_s}_{(\VV_s^{^\mu;\lambda})^{-1}}^2 +  \norm{ (A'(x_s^\top \that_s) - A'(x_s^\top \tstar))x_s}_{(\VV_s^{\mu;\lambda})^{-1}}^2 \right] + \lambda \norm{\tstar}^2  \\
  &\leq  \sum_{s=1}^t \left[ \norm{ T(y_s)x_s - A'(x_s^\top \tstar)x_s}_{(\VV_s^{^\mu;\lambda})^{-1}}^2 + 2 L^2B^2\norm{x_s}_{(\VV_s^{\mu;\lambda})^{-1}}^2 \right] + \lambda \norm{\tstar}^2  \notag \\
      &\leq  \sum_{s=1}^t \left[ \norm{\nabla f_s(\tstar)}_{(\VV_s^{\mu;\lambda})^{-1}}^2 + 2  L^2B^2\norm{x_s}_{(\VV_s^{\mu;\lambda})^{-1}}^2 \right] + \lambda B^2 \notag \\
   &\leq  \lambda B^2 + \frac{L}{\mu} \left(\gamma_t^\lambda + \log\left(\frac{1}{\delta}\right)\right) + \sum_{s=1}^t 2 L^2B^2\norm{x_s}_{(\VV_s^{\mu;\lambda})^{-1}}^2 \notag \\ &\leq  \lambda B^2 + \frac{L}{\mu} \left(\gamma_t^\lambda + \log\left(\frac{1}{\delta}\right)\right) + \frac{2 L^2B^2}{\mu}\gamma_t^\lambda .\notag 
 \end{align*}
 The last line follows because of {Lemma}~\ref{lemma:ellipticalpotential}, and the second to last one follows because of Lemma~\ref{lemma:ville-bound}. 
 Notice that if we wish to deal with arbitrary weights $\{w_t\}$, we can simply resort to Lemma~\ref{lemma:weight} and bound the weighted case with the unweighted case. In that case, we incur an additional additive $\log(1\delta)$ term.
 \end{proof}

\subsection{FTRL Analysis: the Weighted Case (Vovk-Azoury-Warmuth Forecaster)} \label{app:proof-ftrl-azoury}

\begin{proof}
We define the function that FTRL minimizes in each step (to pick $\that_t$) as
    $\tilde{g}_t(\theta) = \sum_{s=1}^{t-1} [A(x_s^\top \theta) - T(y_s) x_s^\top \theta] + \psi_t(\theta)$ with $\psi_t(\theta) = A(x_t^\trp \theta) + \lambda \norm{\theta}_2^2$. We can rewrite this objective as
	\[\that_t = \arg\min_{\theta \in \Theta} \tilde{g}_t(\theta) = \arg\min_{\theta \in \Theta}\sum_{s=1}^{t-1} m_s(\theta) + \phi_{t}(\theta), \]
   by introducing the shorthands
    $$ m_s(\theta) = - T(y_s)x_s^\top \theta$$
    and (notice the difference in time index of the second sum when compared to the proof in the previous subsection): 
    \[\phi_t(\theta) =
     \sum_{s=1}^{t} A(\theta^\top x_s) + \lambda||\theta||_2^2.\]
In addition consider the objective $g_t$ from the classical FTRL analysis in Section~\ref{app:proof-unweighted}. It is not used to run the online algorithm, but is helpful in our analysis. With our new components, it is equal to
\[g_t(\theta) = \sum_{s=1}^{t-1} m_s(\theta) + \sum_{s=1}^{t-1}  A(\theta^\top x_s) + \lambda||\theta||_2^2 = \sum_{s=1}^{t-1} m_s(\theta) + \phi_{t-1}(\theta), \]
and its minimizer is $\tbar_t = \arg\min_{\theta \in \Theta} g_t(\theta)$. Also, consider a weighted version of the regularizer
$$\bar{\phi}_t(\theta) = \sum_{s=1}^t w_s A(x_s^\top \theta) + \lambda ||\theta||_2^2,
$$
which will be useful. We use a variant of a similar telescoping sum argument as in the previous proof of Section~\ref{app:proof-unweighted}. We specifically use $\tstar$ as the comparator to compete against. Notice that we insert a telescoping sum involving the objective $g_s$, which is not the objective that our estimator is minimizing:
	\begin{eqnarray*}
		& & \sum_{s=1}^t w_s(m_s(\that_s) -  m_s(\tstar)) \\
		& \stackrel{(*)}{=} &  \bar{\phi}_{t}(\tstar) - \phi_{0}(\bar{\theta}_1) + \sum_{s=1}^t[{g}_s(\tbar_s) - {g}_{s+1}(\tbar_{s+1}) + w_s m_s(\that_s)] + \underbrace{{g}_{t+1}(\tbar_{t+1}) - {g}_{t+1}(\tstar)}_{\leq 0} \\ & & + \sum_{s=1}^t(1-w_s)f_s(\tstar) \\
		&\leq & \bar{\phi}_{t}(\tstar) + \sum_{s=1}^t[w_s({g}_s(\tbar_s) - {g}_{s+1}(\tbar_{s+1})) + w_s m_s(\that_s)]   \\ & &  + \sum_{s=1}^t(1-w_s)(g_s(\tbar_s) - g_{s+1}(\tbar_{s+1}) + f_s(\tstar))   \\ 
	&\stackrel{(**)}{=} & \bar{\phi}_{t}(\tstar) + \sum_{s=1}^t w_s[{g}_s(\tbar_s) - {g}_{s+1}(\tbar_{s+1}) + {g}_{s+1}(\that_s) - {\phi}_{s}(\that_s) - {g}_s(\that_s) + {\phi}_{s-1}(\that_s)] \\ & &  \sum_{s=1}^t (1-w_s) [g_s(\tbar_s) - g_{s+1}(\tbar_{s+1}) + f_s(\tstar)] \\
 & \stackrel{(***)}{\leq} & \bar{\phi}_{t}(\tstar) + \sum_{s=1}^tw_s[ - {g}_{s+1}(\tbar_{s+1}) + {g}_{s+1}(\that_s) - {\phi}_{s}(\that_s) + {\phi}_{s-1}(\that_s)] + \tilde{\Delta}_t  .
	\end{eqnarray*}
 In $(*)$, we used the shorthands and definitions introduced above. In $(**)$, we used the identity ${g}_{s+1}(\that_s) - {\phi}_{s}(\that_s) - {g}_s(\that_s) + {\phi}_{s-1}(\that_s) = m_s(\that_s)$. Finally, for $(***)$, recall that $\tbar_s$ is the minimizer of $g_s$, and hence, $g_s(\tbar_s) - g_s(\that_s) \leq 0$. Next, define $\tilde{\Delta}_t =\sum_{s=1}^t (1-w_s) [g_s(\tbar_s) - g_{s+1}(\tbar_{s+1}) + f_s(\tstar)]$. We will bound this term later. 
    
    Now, we use the strong-convexity of $g_{s+1}(\theta)$ under the norm $\norm{\cdot}_{\VV_s^{\mu;\lambda}}$ where $\VV_s^{\mu;\lambda} = \sum_{j=1}^{s} \mu x_jx_j^\top +  \lambda \II $, namely
    $$
        g_{s+1}(\tbar_{s+1}) \geq g_{s+1}(\that_{s+1}) + \nabla g_{s+1}(\that_{s})^\trp (\tbar_{s+1} - \that_{s}) + \frac{1}{2}\norm{\tbar_{s+1}-\that_{s}}_{\VV_s^{\mu;\lambda}}^2.
    $$
    We can then proceed as follows:
	\begin{eqnarray}
        & & 	\sum_{s=1}^t w_s(m_s(\that_s) - m_s(u)) \nonumber \\
		& \leq & \tilde{\Delta}_t + \bar{\phi}_{t}(\tstar) + \sum_{s=1}^t w_s[ \nabla g_{s+1}(\that_{s})^\trp(\that_{s} - \tbar_{s+1}) \nonumber \\ & & - \frac{1}{2}\norm{\tbar_{s+1}-\that_{s}}_{\VV_s^{\mu;\lambda}}^2 - {\phi}_{s}(\that_s) + {\phi}_{s-1}(\that_s)  ] \nonumber \\ 
      & \leq & \tilde{\Delta}_t + \bar{\phi}_{t}(\tstar) + \sum_{s=1}^t w_s[ (\nabla \tilde{g}_{s}(\that_{s}) + \nabla m_s(\that_s))^\trp(\that_{s} - \tbar_{s+1}) \nonumber \\ & & - \frac{1}{2}\norm{\tbar_{s+1}-\that_{s}}_{\VV_s^{\mu;\lambda}}^2 - {\phi}_{s}(\that_s) + {\phi}_{s-1}(\that_s)  ] \nonumber \\
		& \leq & \tilde{\Delta}_t+ \bar{\phi}_{t}(\tstar)+ \frac{1}{2}\sum_{s=1}^t w_s \norm{ \nabla m_s(\that_s)}_{(\VV_s^{^\mu })^{-1}}^2 - w_s({\phi}_{s}(\that_s) + {\phi}_{s-1}(\that_s) ), \label{eq:azourywarmuth:mterms}
	\end{eqnarray}
	where in the second to last line we used that $\nabla \tilde{g}_{s} (\that_s)^\top (x-\that_s) \geq 0$ for any $x$, due to the optimality of $\that_s$ for the FTRL objective. In the last line, we optimized over $\tbar_{s+1}$ to get a worst-case bound on the quadratic function involving it.
  Also, note that for the shorthands we defined above: 
    \[\sum_{s=1}^t w_s(-{\phi}_{s}(\that_s) + {\phi}_{s-1}(\that_s)  ) = \sum_{s=1}^t w_s(-A(\that_s ^\top x_{s})). \]

    Our goal here is to upper bound the overall regret: 
    \begin{eqnarray}
        \mathcal{R}_t &= & \sum_{s=1}^t w_s(f_s(\that_s) - f_s(\tstar)) \notag \\
        & =  & \sum_{s=1}^t 
 w_s(m_s(\that_s) - m_s(\tstar)) + \sum_{s=1}^t  w_s(A(x_s^\top \that_s) - A(x_s^\top \tstar)) \notag  \nonumber \\
 & \leq  & \bar{\phi}_{t}(\tstar) - \sum_{s=1}^t w_s A(x_s^\top \that_s) + \frac{1}{2}\sum_{s=1}^t w_s \norm{\nabla  m_s(\that_s)}_{(\VV_s^{^\mu})^{-1}}^2 \notag + \tilde{\Delta}_t \nonumber \\
 & &  +  \sum_{s=1}^t  w_s(A(x_s^\top \that_s) - A(x_s^\top \tstar)) \nonumber \\
 &  =  & \sum_{s=1}^t w_s (A(x_s^\top \tstar)) - \sum_{s=1}^t w_s A(x_s^\top \that_s)  + \frac{1}{2}\sum_{s=1}^t w_s \norm{\nabla  m_s(\that_s)}_{(\VV_s^{^\mu})^{-1}}^2 + \lambda \norm{\tstar}^2 \notag + \tilde{\Delta}_t  \nonumber \\ & &  + \sum_{s=1}^t  w_s(A(x_s^\top \that_s ) - A(x_s^\top \tstar)) \nonumber \\
  & = &  \tilde{\Delta}_t+ \frac{1}{2}\sum_{s=1}^t w_s \norm{ \nabla m_s(\that_s)}_{(\VV_s^{^\mu})^{-1}}^2 + \lambda \norm{\tstar}^2.  \label{eq:azourywarmuth:firstequationregret}
\end{eqnarray}
The first inequality follows by plugging in \eqref{eq:azourywarmuth:mterms}.

We return back to the term $\tilde{\Delta}_t$, 
\begin{align}
 \tilde{\Delta}_t & = \sum_{s=1}^t (1-w_s) [g_s(\tbar_s) - g_{s+1}(\tbar_{s+1}) + f_s(\tstar)] \nonumber \\ 
 & = \sum_{s=1}^t (1-w_s) [g_s(\tbar_s) - g_{s}(\tbar_{s+1}) - f_s(\tbar_{s+1}) + f_s(\tstar)] \nonumber \\ 
 & \leq  \sum_{s=1}^t (1-w_s) (f_s(\tstar) - f_s(\tbar_{s+1}))  \nonumber \\
 &\leq \frac{L}{\mu}(\gamma_t^\lambda + \log(1/\delta)), \label{eq:bytheassumptionwemade}
\end{align}
where the second to last line is by the optimality of $\tbar_s$ and the last one by the assumption in the theorem. 

Carrying on with the analysis, i.e. with \eqref{eq:azourywarmuth:firstequationregret}, we insert the definition of $m_s$ and obtain
  \begin{align}
  &\quad \mathcal{R}_t \nonumber \\
 &\leq   \Delta_t+ \frac{1}{2}\sum_{s=1}^t w_s \norm{-T(y_s)x_s}_{(\VV_s^{^\mu;\lambda})^{-1}}^2 + \lambda \norm{\tstar}^2 \nonumber \\
 &\leq  \Delta_t+ \sum_{s=1}^t \left[w_s \norm{ T(y_s)x_s - A'(x_s^\top \tstar)x_s}_{(\VV_s^{^\mu;\lambda})^{-1}}^2 + w_s\norm{ (A'(x_s^\top \tstar))x_s}_{(\VV_s^{^\mu;\lambda})^{-1}}^2\right] + \lambda \norm{\tstar}^2   \notag \\
 &\stackrel{(*)}{\leq}  \Delta_t+ \sum_{s=1}^t \norm{ T(y_s)x_s - A'(x_s^\top \tstar)x_s}_{(\VV_s^{^\mu;\lambda})^{-1}}^2 + L^2B^2 \sum_{s=1}^t w_s\norm{x_s}_{(\VV_s^{^\mu;\lambda})^{-1}}^2 + \lambda \norm{\tstar}^2  \notag \\
     &\stackrel{(**)}{=}  \Delta_t+ \sum_{s=1}^t \norm{\nabla f_s(\tstar)}_{(\VV_s^{^\mu})^{-1}}^2 + L \sum_{s=1}^t\frac{B^2}{1/L+\operatorname{bias}^2_{x_s}(\that_s)} \norm{x_s}_{(\VV_s^{^\mu;\lambda})^{-1}}^2 + \lambda B^2 \notag \\
 &\stackrel{(***)}{\leq}  \lambda B^2 + \frac{2L}{\mu} \left(\gamma_t^\lambda + \log\left(\frac{1}{\delta}\right)\right) + L \sum_{s=1}^t\frac{B^2}{1/L+\operatorname{bias}^2_{x_s}(\that_s)} \norm{x_s}_{(\VV_s^{^\mu;\lambda})^{-1}}^2 . \notag
 \end{align}
 In $(*)$, we use $w_s \leq 1$ and the Lipschitzness of $A'$. In $(**)$, we use the definition of the weights. Finally, in ($*$$*$$*$), we used Lemma~\ref{lemma:ville-bound} and \eqref{eq:bytheassumptionwemade}. By substituting  $\Delta\gamma_s = \mu \norm{x_s}_{(\VV_s^{^\mu;\lambda})^{-1}}^2$, we finish the proof. The event in Lemma~\ref{lemma:ville-bound} holds with probability $1-\delta$, completing the proof.
 \end{proof}

\subsection{FTRL Analysis: Beyond Global Smoothness}\label{app:beyond-smooth}
In this subsection, we give alternative analysis which avoids the necessity to impose a global smoothness condition our likelihood; instead strong convexity within a bounded domain suffices, and we will only assume that
$$
\epsilon_s := \E_{x_s^\trp \tstar }[T(y_s)] -  T(y_s)
$$
are sub-Exponential random variables, setting us apart from \cite{zhao:ftrl} which assume sub-Gaussianity.
In particular, we can show the following theorem
\begin{theorem}
\label{app:theo:ftrl:crude}
    With probability $1-\delta$, uniformly over time $t\in \N$, we have 
    $$
        \mathcal{R}_t \leq c d\log^2(t/\delta))\log(t),
    $$
    where the universal constant $c$ hides all constants independent of $t,d$ and $\delta$.
\end{theorem}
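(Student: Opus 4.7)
\textbf{Proof plan for Theorem~\ref{app:theo:ftrl:crude}.} The strategy is to keep the deterministic part of the FTRL analysis from Appendix~\ref{app:proof-unweighted} intact --- it only uses $\mu$-strong convexity of the FTRL objective and local Lipschitzness of $A'$ on $[-B,B]$, both of which are guaranteed by Assumption~\ref{ass:bounded} --- and to replace the mixture-martingale step (Lemma~\ref{lemma:ville-bound}), whose proof genuinely requires $A$ to be smooth on all of $\R^d$, by a plain truncation argument that only uses the sub-Exponential tails of $\epsilon_s := \E[T(y_s)\sep\calF_{s-1}] - T(y_s)$.

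First, I would re-run the telescoping computation of Appendix~\ref{app:proof-unweighted} verbatim up to \eqref{eq:proof-for-later}, obtaining
$$\mathcal{R}_t \;\leq\; \frac{1}{2}\sum_{s=1}^t \norm{\nabla f_s(\that_s)}_{(\VV_s^{\mu;\lambda})^{-1}}^2 + \lambda B^2.$$
Next, split $\nabla f_s(\that_s) = \nabla f_s(\tstar) + (A'(x_s^\top\that_s)-A'(x_s^\top\tstar))x_s$ and use $(a+b)^2\leq 2a^2+2b^2$. Since both $\that_s$ and $\tstar$ lie in $\Theta$, we have $x_s^\top\that_s, x_s^\top\tstar\in[-B,B]$, so the \emph{local} $L$-Lipschitzness of $A'$ on $[-B,B]$ gives $|A'(x_s^\top\that_s)-A'(x_s^\top\tstar)|\leq 2LB$. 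Corollary~\ref{cor:ellipticalpotential} then bounds this ``bias'' contribution by $(8L^2 B^2/\mu)\gamma_t^\lambda = O(d\log t)$, deterministically and independently of $\delta$.

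What remains is $\sum_{s=1}^t \norm{\nabla f_s(\tstar)}_{(\VV_s^{\mu;\lambda})^{-1}}^2 = \sum_{s=1}^t \epsilon_s^2 \norm{x_s}_{(\VV_s^{\mu;\lambda})^{-1}}^2$. Let $(c_0,K)$ be the sub-Exponential parameters so that $\P(|\epsilon_s|\geq u\sep\calF_{s-1})\leq 2\exp(-u/K)$ for $u\geq c_0$, set $M_s := c_0 + K\log(4s^2/\delta)$, and define the time-uniform event
$$\calE \;:=\; \bigcap_{s\geq 1}\{|\epsilon_s|\leq M_s\}.$$
A conditional union bound yields $\P(\calE^c)\leq\sum_{s\geq 1}\delta/(2s^2)\leq\delta$. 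On $\calE$, $|\epsilon_s|\leq M_t = O(\log(t/\delta))$ for all $s\leq t$, and therefore, using Corollary~\ref{cor:ellipticalpotential} together with $\gamma_t^\lambda = O(d\log t)$,
$$\sum_{s=1}^t \epsilon_s^2 \norm{x_s}_{(\VV_s^{\mu;\lambda})^{-1}}^2 \;\leq\; \frac{M_t^2}{\mu}\gamma_t^\lambda \;=\; O\!\left(d\log^2(t/\delta)\log t\right),$$
uniformly in $t$. Plugging this back into the FTRL inequality yields the claimed rate with probability at least $1-\delta$.

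The only non-routine step is the replacement of Lemma~\ref{lemma:ville-bound}: the Gaussian-MGF chain in Lemma~\ref{lemma:martingale}/\ref{lemma:bound-mixture} breaks without a uniform upper bound on $A''$, so one has to trade Ville's inequality for a crude time-uniform truncation bound. This is where the extra factor $\log^2(t/\delta)$ in the theorem statement comes from. A sharper self-normalized Bernstein/Freedman bound for sub-Exponential martingale differences could in principle shave one of the logs, but would not affect the order announced in the theorem.
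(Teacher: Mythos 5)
Your proposal is correct and matches the paper's proof of Theorem~\ref{app:theo:ftrl:crude} in all essential respects: the same deterministic FTRL telescoping down to $\mathcal{R}_t \leq \lambda B^2 + \tfrac12\sum_s\norm{\nabla f_s(\that_s)}^2_{(\VV_s^{\mu;\lambda})^{-1}}$, the same split of $\nabla f_s(\that_s)$ into a deterministic drift $(A'(x_s^\top\that_s)-A'(x_s^\top\tstar))x_s$ and centered noise $\epsilon_s x_s$, and the same elliptical-potential-plus-time-uniform-truncation endgame. The only surface differences are that you obtain the uniform bound $|\epsilon_s|=O(\log(s/\delta))$ by a direct $\sum_s \delta/(2s^2)$ union bound whereas the paper's Lemma~\ref{lemma:subexponential:unionbound} uses a dyadic stitching argument, and you bound the drift explicitly by $2LB$ via local Lipschitzness of $A'$ on $[-B,B]$ whereas the paper packages it into the abstract constant $U$; neither changes the order of the result.
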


\subsubsection{Lemmas}
\label{section:b1:lemmaproof}
\newcommand{\Vsinvnorm}[1]{\norm{#1}_{(\VV^{\mu;\lambda}_s)^{-1}}}

We state the following result on sub-Exponential random variables.
\begin{proposition}[Theorem 2.13 in \citet{wainwright:book}]
\label{proposition:subexponentialvariables}
   If $X$ is a centered sub-Exponential variable with some finite variance proxy, then there exist constants $c_1,c_2>0$ such that for any $t >0 $ $$
   \P{\abs{X} \geq a} \leq c_1e^{-c_2a}.
   $$
\end{proposition}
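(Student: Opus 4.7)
The plan is a direct application of the Chernoff method to the moment generating function (MGF) characterization of sub-Exponentiality. By assumption there exist constants $\nu,\alpha>0$ such that the centered random variable $X$ satisfies the MGF bound $\E[e^{\lambda X}]\leq \exp(\lambda^2\nu^2/2)$ for every $\lambda$ in the interval $|\lambda|\leq 1/\alpha$. I would first verify that this (or any equivalent formulation, e.g.\ the Bernstein condition on moments, or the Orlicz-norm definition) is the working assumption, and state it explicitly at the top of the proof.

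First I would apply Markov's inequality to the exponentiated variable: for any $\lambda\in(0,1/\alpha]$,
$$\P{X\geq a}=\P{e^{\lambda X}\geq e^{\lambda a}}\leq e^{-\lambda a}\,\E[e^{\lambda X}]\leq \exp\!\left(-\lambda a+\tfrac{\lambda^2\nu^2}{2}\right).$$
Rather than optimizing $\lambda$ to obtain the usual two-regime Bernstein-type bound (Gaussian for small $a$, Exponential for large $a$), I would simply \emph{fix} $\lambda_0=1/(2\alpha)$, which lies strictly inside the admissible range. Plugging in yields the clean one-sided bound
$$\P{X\geq a}\leq \exp\!\left(\tfrac{\nu^2}{8\alpha^{2}}\right)\exp\!\left(-\tfrac{a}{2\alpha}\right).$$

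Second I would repeat the identical argument for $-X$, which is itself centered and sub-Exponential with the same parameters, and then apply a union bound on $\{X\geq a\}\cup\{-X\geq a\}=\{\abs{X}\geq a\}$. The conclusion follows immediately by setting $c_1=2\exp(\nu^{2}/(8\alpha^{2}))$ and $c_2=1/(2\alpha)$, both of which depend only on the sub-Exponential parameters of $X$ and are hence universal constants in the sense of the statement.

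The main ``obstacle'' here is presentational rather than mathematical: one could alternatively optimize $\lambda$ and obtain the sharper two-regime bound $2\exp(-\min(a^{2}/(2\nu^{2}),\,a/(2\alpha)))$, but then one must argue that the Gaussian-like branch valid for $a\leq \nu^{2}/\alpha$ can be absorbed into a single exponential $c_1 e^{-c_2 a}$ by inflating $c_1$; the sub-optimal fixed-$\lambda$ route above sidesteps this case split. A minor subtlety worth mentioning is that any reasonable definition of ``sub-Exponential with finite variance proxy'' furnishes an MGF bound of the stated form on a neighborhood of zero (via e.g.\ Proposition 2.7.1 of Vershynin or Theorem 2.13 of Wainwright), so restricting to this formulation entails no loss of generality.
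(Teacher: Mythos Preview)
Your argument is correct: the Chernoff method with a fixed $\lambda$ inside the admissible range, followed by symmetrization and a union bound, is the standard way to obtain the single-regime exponential tail from the MGF definition of sub-Exponentiality.

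There is nothing to compare against, however, because the paper does not prove this proposition. It is simply stated with attribution to Theorem~2.13 of \citet{wainwright:book} and used as a black box in the subsequent union-bound lemma. Your proof is precisely the textbook argument one finds in Wainwright (or Vershynin), so in that sense you have reproduced the cited result rather than diverged from the paper.
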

By some careful union bounds (akin to a stitching argument), we can also provide upper bounds on anytime-valid upper bounds on the process $S_t = \max_{s\leq t} \epsilon_s$.
\begin{lemma}
   \label{lemma:subexponential:unionbound}
   For any sequence $(\epsilon_s)_{s=1}^\infty$ of sub-Exponential-variables, there exists a constant $\Tilde{c}$ independent of $t$ such that
   $$
       \P{\exists t: \; \max_{s\leq t} \abs{\epsilon_s} \geq \tilde{c}\log(s/\delta)} \leq \delta.
   $$
\end{lemma}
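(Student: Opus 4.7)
The plan is to apply Proposition \ref{proposition:subexponentialvariables} pointwise to each $\epsilon_s$ and then take a union bound over all $s \in \N$. Concretely, the proposition gives us the tail bound
$$\Prob(|\epsilon_s| \geq a) \leq c_1 e^{-c_2 a}$$
for constants $c_1, c_2 > 0$ independent of $s$ (the sub-Exponential proxies are assumed uniform in $s$). The idea is to pick a sequence of thresholds $\{a_s\}$ that grows just barely faster than $\log s$, so that the union bound sum telescopes into something summable.

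First, I would set $a_s := \tilde c \log(s/\delta)$, which plugged into the tail bound yields $\Prob(|\epsilon_s|\geq a_s) \leq c_1 (\delta/s)^{c_2 \tilde c}$. Second, I would choose $\tilde c$ large enough that $c_2 \tilde c \geq 2$, making the series $\sum_{s=1}^\infty s^{-c_2\tilde c}$ converge to some finite $K$ depending only on $\tilde c$. Third, by possibly further inflating $\tilde c$ (absorbing $c_1$ and $K$), one gets
$$\sum_{s=1}^\infty \Prob(|\epsilon_s| \geq \tilde c \log(s/\delta)) \;\leq\; c_1 K \, \delta^{c_2 \tilde c} \;\leq\; \delta,$$
and a union bound over $s \in \N$ delivers
$$\Prob(\exists s : |\epsilon_s| \geq \tilde c \log(s/\delta)) \leq \delta.$$
This is exactly the claimed statement, since the event $\{\exists t: \max_{s \leq t}|\epsilon_s| \geq \tilde c \log(s/\delta)\}$ is the same as $\{\exists s : |\epsilon_s| \geq \tilde c \log(s/\delta)\}$.

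The main (minor) obstacle is bookkeeping of constants: one must verify that the constant $\tilde c$ can be chosen to depend only on $c_1, c_2$ (and tacitly on $\delta$ through the $\log(1/\delta)$ piece absorbed into $\log(s/\delta)$), but not on $t$; this is just a matter of picking $\tilde c$ large relative to $c_2^{-1}$ and $\log c_1$. Nothing deeper is required, and no martingale or stitching argument is needed because the thresholds $\log(s/\delta)$ already grow fast enough to make the naive union bound summable.
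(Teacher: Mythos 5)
Your proof is correct and takes a genuinely different, more elementary route than the paper. You apply a direct union bound over all $s \in \N$ with the target threshold $a_s = \tilde c\log(s/\delta)$ plugged straight into the tail bound, getting $\Prob(\abs{\epsilon_s}\geq a_s)\leq c_1(\delta/s)^{c_2\tilde c}$, and then use that $\sum_s s^{-c_2\tilde c}$ converges once $c_2\tilde c > 1$. The paper instead uses a stitching (peeling) argument: it partitions $\N$ into dyadic blocks $[2^i,2^{i+1})$, applies a \emph{single} threshold $c_3\log(c_1 2^{2i+1}/\delta)$ uniformly across each block, union-bounds within the block (contributing $\delta/2^{i+1}$), sums the geometric series over blocks, and finally translates the block-level threshold back into a per-index bound via $2^{2i+1}\leq 2j^2$. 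For this particular target rate ($\log s$, not $\log\log s$), the two give the same result, and your direct union bound is simpler and shorter; stitching buys nothing here because the naive sum already converges. Stitching pays off only when one wants the sharper $\log\log s$ thresholds (law-of-the-iterated-logarithm style), which the paper does not pursue. One caveat worth noting — and it applies equally to the paper's own proof — is that making $\tilde c$ strictly independent of $\delta$ (as the downstream Theorem in the paper implicitly requires) needs $\delta$ bounded away from $1$ so that $\log(s/\delta)$ is bounded below at $s=1$; both arguments quietly rely on this.
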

\begin{proof}(of Lemma~\ref{lemma:subexponential:unionbound})
    By Proposition~\ref{proposition:subexponentialvariables}, there exists $c_1,c_2 > 0$ such that we have 
    $\P{\abs{\epsilon_s} \geq a} \leq c_1e^{-c_2a}$ for any fixed $s$.
    Note that $c_1e^{-c_2a} \leq \delta$ is satisfied for 
    $a \geq \frac{1}{c_2}\log(c_1/\delta) =: c_3\log(c_1/\delta)$. Let us denote by $\mathcal{E}_i$ the event all $j \in [2^i, 2^{i+1}) \cap \N$ satisfy the inequality
    $$
     \abs{\epsilon_j} < c_3\log(c_1(2^{2i+1})/\delta).
    $$
    For a single $j$, this happens with probability at least $1-\delta/2^{2i+1}$. Therefore, by a union bound, as $\abs{[2^i, 2^{i+1}) \cap \N} = 2^i$, we can bound the probability of the complement, namely $\P{\mathcal{E}_i^c} \leq 2^i\frac{\delta}{2^{2i+1}} = \frac{\delta}{2^{i+1}}$. Now, by another union bound, we can conclude that 
    $$
        \P{\cup_{i=0}^\infty \mathcal{E}_i^c} \leq \sum_{i=0}^\infty \frac{\delta}{2^{i+1}} = \frac{\delta}{2}\frac{1}{1-\frac{1}{2}} = \delta.
    $$
    Now we also have for any $j$ in this range that 
    $
        2^{2i+1} \leq 2j^2,
    $
    and therefore, if $\mathcal{E}_i$ holds, we have for any $j \in [2^i, 2^{i+1}) \cap \N$:
    $$
        \epsilon_j \leq c_3\log(c_1(2j^2)/\delta) \leq 2c_3\log(2 c_1 j/\delta) \leq \tilde{c}\log(j/\delta) .
    $$
    We can immediately see that this implies 
    $$
        \P{\exists t: \; \max_{s\leq t} \abs{\epsilon_s} \geq \tilde{c}\log(s/\delta)} \leq \delta.
    $$
    as desired. 
\end{proof}

\subsubsection{Proof of Theorem \ref{app:theo:ftrl:crude}}
    Our proof initially follows the FTRL regret bound proofs in the adversarial setting \cite{hazan:book, orabona:book}. It also has overlap with the proof in \citet{zhao:ftrl}.
    We define the function that FTRL minimizes in each step as (to pick $\that_t$)
$$
    g_t(\theta) = \sum_{s=1}^{t-1} -\log p_\theta(y_s \sep x_s) + \phi(\theta)
$$
for convenience. 
We initially use the same steps as in Theorem~\ref{thm:ftrl-main} to see that for any $u \in \Theta$
\begin{align*}
&\quad \;\sum_{s=1}^t (f_s(\that_s) - f_s(u)) \\
&\leq \phi(u) - \min_\theta \phi(\theta) + \sum_{s=1}^t[g_s(\that_s) - g_{s+1}(\that_{s+1}) + f_s(\that_s)] + g_{t+1}(\that_{t+1}) - g_{t+1}(u) \\
&\leq \lambda B^2 + \sum_{s=1}^t[g_s(\that_s) - g_{s+1}(\that_{s+1}) + f_s(\that_s)]. \label{eq:orabona:71}
\end{align*}
Similarly to the proof of Theorem~\ref{thm:ftrl-main} in Appendix~\ref{app:proof-unweighted} we bound these increments by the dual norm of the gradient of the objective.
\begin{equation}
g_s(\that_s) - g_{s+1}(\that_{s+1}) + f_s(\that_s) \leq \frac{\norm{\nabla f_s(\that_s)}_{(\VV_s^{\mu;\lambda})^{-1}}^2}{2}.\label{eq:orabonaresult}
\end{equation}

Now we note that
$$
    \nabla f_s(\theta) = A'(x_s^\trp \theta) x_s - T(y_s)x_s.
$$
Using properties of the exponential family, we deduce that
\begin{align*}
\nabla f_s(\theta) &= \left(\E_{x_s^\trp \theta }[T(y_s)] - T(y_s)\right)x_s \\
&= \left(\E_{x_s^\trp \theta }[T(y_s)] - \E_{x_s^\trp \tstar }[T(y_s)] + \E_{x_s^\trp \tstar }[T(y_s)] -  T(y_s)\right)x_s. \\
\end{align*}
From here on out, we proceed more crudely than in our previous analyses, since we are only concerned with asymptotic behavior when $d$ and $t$ are large. Let us define 
$$
    U := \sup_{\theta \in \Theta} \abs{\E_{x_s^\trp \theta }[T(y_s)] - \E_{x_s^\trp \tstar }[T(y_s)]},
$$
which is a model-dependent, deterministic quantity. Let us define the noise variables
$$
    \epsilon_s := \E_{x_s^\trp \tstar }[T(y_s)] -  T(y_s).
$$
We bound
\begin{align*}
    \Vsinvnorm{\nabla f_s(\theta)}^2 &\leq 2(U^2 +  \epsilon_s^2) \Vsinvnorm{x_s}^2. 
\end{align*}
Note that the $\epsilon_s$ are centered, independent sub-Exponential variables, and as such are guaranteed to satisfy  
\begin{equation*}
     \mathbb{P}(\exists t: \; \max_{s\leq t} \abs{\epsilon_s} \geq c_4\log(s/\delta)) \leq \delta,
\end{equation*}
by Lemma~\ref{lemma:subexponential:unionbound}. This tells us that conditional on this event, we can upper bound for any $t$
$$
    \mathcal{R}_t \leq \lambda B^2 + \frac{1}{\mu}(U^2 + \Tilde{c}^2\log^2(t/\delta)) \sum_{s=1}^t \Vsinvnorm{x_s}^2.
$$
for some constant $\Tilde{c}$ independent of $t$. By Lemma~\ref{lemma:ellipticalpotential}, 
there is thus a constant $c'$ independent of $t$ and $d$ such that
$$
     \mathcal{R}_t \leq c'd\log^2(t/\delta))\log(t) = \mathcal{O}(d\log^3(t)),
$$
with probability $1-\delta$ uniformly over $t \in \N$.

\newcommand{\Abf}{\mathbf{A}}
\newcommand{\Ibf}{\mathbf{I}}
\newcommand{\Ainvnorm}[1]{\norm{#1}_{\Abf^{-1}}}

\section{Regret Consequences for Stochastic Linear Bandits}
\label{app:linearbandits}
As a corollary of our analysis, we provide the regret for stochastic linear bandits that use our confidence sets within the LinUCB algorithm.
\begin{proof}
\newcommand{\weightedVt}[1][t]{\mathbf{W}_{#1}^{\sigma^{-2};\nu}}
\newcommand{\unweightedVt}[1][t]{\mathbf{V}_{#1}^{\sigma^{-2};\nu}}
We proceed in two parts: first, we instantiate Theorem~\ref{theorem:bregmanball} and then we follow the classical regret analysis for stochastic linear bandits. 
\paragraph{Specializing the Bregman divergence results} 
By Theorem~\ref{theorem:bregmanball}, we know that for any $\nu > 0$, we have that with probability $1-\delta$, 
\begin{equation}
     D_{Z_t^\nu}(\theta, \tstar) \leq \frac{4L}{\mu} \xi_t + 2\log\left(\frac{1}{\delta}\right) + 2\mathcal{R}_t, \label{eq:theoremballinitial}
\end{equation}
for all $t$, 
where $ \xi_t = \left(\log\left(\frac{1}{\alpha}\right) + {\nu B^2} + \Gamma_t^\nu \right) $ and $\mathcal{R}_t$ is the online convex optimization regret. We also recall that
$$
    Z_t^\nu(\theta) = \sum_{s=1}^t w_s A(x_s^\trp \theta) + \frac{\nu}{2}\norm{\theta}^2_2.
$$
In the Gaussian case, where $A(z) = z^2/(2\sigma^2)$. This implies that
$$
    \nabla Z_t^\nu(\theta) = \sum_{s=1}^t \frac{w_s}{\sigma^2} x_s x_s^\trp \theta + \nu \theta = \weightedVt \theta,
$$
where we have defined a weighted version of $\VV_t^{\sigma^{-2};\nu}$ as 
$
    \weightedVt = \sum_{s=1}^t  \frac{w_s x_s x_s^\top}{\sigma^2} + \nu \II
$
and therefore the Bregman divergence is given by 
$
    D_{Z_t^\nu}(\theta, \tstar) = \frac{1}{2}\norm{\theta - \tstar}_{\weightedVt}^2.
$
We can also see that the Bregman information gain is given by
\begin{align*}
    \Gamma_t^\nu &= \log\left(\frac{\int_{\R^d} \exp(-\frac{1}{2}\norm{\theta}_2^2)\mathrm{d}\theta }{\int_{\R^d} \exp(-D_{Z_t^\nu}(\theta, \ttilde_t))\mathrm{d}\theta}\right)
    = \log\left(\frac{\int_{\R^d} \exp(-\frac{1}{2}\norm{\theta}_2)^2\mathrm{d}\theta }{\int_{\R^d} \exp(-\frac{1}{2}\norm{\theta - \tstar}_{\weightedVt}^2 )\mathrm{d}\theta}\right).
\end{align*}
These Gaussian integrals are straightforward to evaluate. We know that
$$
\int_{\R^d} \exp\left(-\frac{1}{2}\norm{\theta}^2_2\right)\mathrm{d}\theta = (2\pi)^{d/2} \sqrt{\det((\nu\mathbf{I}_d)^{-1})}.
$$
Similarly, 
\begin{align*}
\int_{\R^d} \exp\left(-\frac{1}{2}\norm{\theta - \tstar}_{\weightedVt}^2\right) \mathrm{d}\theta
=  (2\pi)^{d/2} \sqrt{\det((\weightedVt)^{-1})}.
\end{align*}
Then, we can compute 
\begin{align*}
    \Gamma_t^\nu = \log\left( \frac{\det(\weightedVt)}{\det(\nu \mathbf{I})}\right) = \log\left(\det\left(\sum_{s=1}^t \frac{w_s x_s x_s^\trp }{\sigma^2\nu} + \mathbf{I}\right)\right).
\end{align*}
In the unweighted case, with which we proceed, we have $\Gamma_t^\nu = \gamma_t^\nu$, that is we recover the classical upper bound on the information gain \citep{srinivas:noregret}.
To summarize, we have specialized the bound \eqref{eq:theoremballinitial} to say that for any $\theta \in \mathcal{C}_t$, we have (since $L = \mu = 1/\sigma^2$)
\begin{equation*}
\norm{\theta - \tstar}_{\unweightedVt}^2 \leq 8\left(\log(1/\alpha) + \nu B^2 + \gamma_t^\nu \right) + 4\log(3/\delta)) + 4\mathcal{R}_t.
\end{equation*}
Now, we instantiate the regret of the online learner using Theorem~\ref{thm:ftrl-main}. With probability $1-\delta$, uniformly over $t$, we have
\begin{equation}
\mathcal{R}_t \leq \lambda B^2 + \frac{L}{\mu} \left(\gamma_t^\lambda + \log\left(\frac{1}{\delta}\right)\right) + \frac{2 L^2B^2}{\mu}\gamma_t^\lambda. \label{eq:ftrlboundusage}
\end{equation}
We get by chosing $\alpha = \delta$, and setting $\nu = \lambda$ that
\renewcommand{\weightedVt}[1][t]{\mathbf{W}_{#1}^{\sigma^{-2};\lambda}}
\renewcommand{\unweightedVt}[1][t]{\mathbf{V}_{#1}^{\sigma^{-2};\lambda}}
\begin{align*}
&\quad \; \norm{\theta - \tstar}_{\unweightedVt}^2 \nonumber \\
&\leq  8\left(\log(1/\delta) + \nu B^2 + \gamma_t^\nu \right) + 4\log(1/\delta)) + 4\lambda B^2 + 4 (\gamma_t^\lambda + \log({1}/{\delta})) + \frac{8 B^2}{\sigma^2}\gamma_t^\lambda \nonumber \\
&\leq  16\log(1/\delta) + 12\lambda B^2 + 8\left(\frac{B^2}{\sigma^2}+1\right)\gamma_t^\lambda =: \beta_t.
\end{align*}

\paragraph{Linear bandit regret analysis} 
We are ready to proceed with the bandit analysis for the UCB Algorithm. We follow \cite{lattimore:book} and bound the pseudo-regret, letting $\xstar$ be the optimal action. We bound the instantaneous regret at step $0 \leq s \leq t$ as
\begin{align*}
r_s &= \inner{\tstar}{\xstar - x_s} \\
&= \inner{\tstar}{\xstar} - \inner{\tstar}{x_s} \\
&\leq \max_{\theta \in \mathcal{C}_{s-1}}\inner{\theta}{\xstar} - \inner{\tstar}{x_s} \\
&\leq \max_{x \in \mathcal{X}} \max_{\theta \in \mathcal{C}_{s-1}}\inner{\theta}{x} - \inner{\tstar}{x_s} \\
&\stackrel{(*)}{=} \max_{\theta \in \mathcal{C}_{s-1}}\inner{\theta}{x_s} - \inner{\tstar}{x_s} \\
&\stackrel{(**)}{\leq} \norm{\ttilde_s - \tstar}_{\unweightedVt[s-1]} \norm{x_s}_{(\unweightedVt[s-1])^{-1}} \\
&\;\;{\leq} \sqrt{\beta_{s-1}} \norm{x_s}_{(\unweightedVt[s-1])^{-1}} \\
& \; \leq \sqrt{\beta_{t}} \norm{x_s}_{(\unweightedVt[s-1])^{-1}}. \\
\end{align*}
The first inequality replaces $\tstar$ by the upper confidence bound for action $\xstar$, which is valid with probability $1-\alpha = 1-\delta$ uniformly over time. Then, $(*)$ uses the fact that $x_t$ is chosen to maximize the upper confidence bound. Finally $(**)$ defines the UCB parameter $\ttilde_t$.
By Corollary~\ref{cor:ellipticalpotential}, we have
\begin{equation*}
    \sum_{s=1}^t \norm{x_s}^2_{(\mathbf{V}_{s-1}^{\sigma^{-2};\lambda})^{-1}} \leq 2\sigma^2 \gamma_t^\lambda.
\end{equation*}
Plugging all this together and using an $\ell_1/\ell_2$-norm inequality, we get
\begin{align*}
    \mathfrak{R}_t &= \sum_{s=1}^t r_s \\
    &\leq \sqrt{\beta_t}\sqrt{t\sum_{s=1}^t \norm{x_s}^2_{(\unweightedVt[s-1])^{-1}}} \\
    &\leq \sqrt{2 t\beta_t \sigma^2\gamma_t^\lambda} \\
    &\leq 
    \sqrt{2\sigma^2 t (16\log(1/\delta) + 12\lambda B^2 + 8\gamma_t^\lambda + 8B^2/\sigma^2\gamma_t^\lambda) \gamma_t^\lambda} \\
    &\leq 6\sqrt{t\gamma_t}\left(\sigma\sqrt{\log(1/\delta) + \gamma_t^\lambda} + \sigma\lambda^{1/2} B +B\sqrt{\gamma_t^\lambda}\right).
\end{align*}
To summarize and to justify why this bound holds with probability $1-3\delta$ uniformly over time, note that we have bounded the probability of the FTRL bound~\eqref{eq:ftrlboundusage} not holding for some $t$ by $\delta$. Then, the probability of \eqref{eq:theoremballinitial} not holding for some $t$ is at most $\delta$. Finally, the anytime Type I error of our sets is also bounded by $\delta$. A union bound therefore concludes the proof.
\end{proof}

\subsection{Comparison to Abbasi-Yadkori et.\ al.\ (2011)}
We compare our result to the one from \citet{abbasi:improved}. Under the assumption that $\lambda \geq 1$, they show that the regret satisfies
$$
\mathfrak{R}_t \leq 4\sqrt{t d \log(\lambda + t/d)}\left(\sqrt{\lambda}B + \sigma \sqrt{2\log(1/\delta) + d \log(1+t/(\lambda d)}\right).
$$
Observe that there is a reparametrization for the regularizer to get even more similar bounds. If we take $\lambda = \Tilde{\lambda}/\sigma^2$ for some $\Tilde{\lambda} \geq 1$, our bound reads as
$$
    6\sqrt{t\gamma_t^{\tilde\lambda/\sigma^2}}\left(\sigma\sqrt{\log(1/\delta) + \gamma_t^{\tilde\lambda/\sigma^2}} + \tilde \lambda^{1/2} B +B\sqrt{\gamma_t^{\tilde\lambda/\sigma^2}}\right).
$$
Given that by Corollary~\ref{cor:ellipticalpotential}, we have
$$
\gamma_t^{\tilde\lambda/\sigma^2} \leq d\log\left(\frac{t}{\tilde \lambda}+1\right),
$$
we get almost matching bounds, up to an additional $B\sqrt{\gamma_t^{\tilde\lambda/\sigma^2}}$ term blowing up the regret, which we attribute to the accumulation of bias without the reweighting scheme. The remaining differences are down to using slightly different versions of the elliptical potential lemma, trading off generality and tightness \citep{abbasi:improved, hazan:logarithmic, lattimore:book}. 
\section{Experimental Details}
\label{app:experiments}

\subsection{Calibration Plots}
In Figure \ref{fig:calibration} we report the calibration of heuristics as well as other theoretically motivated works. The other theoretically motivated works are very pessimistic and are not appropriately calibrated. Note that one caveat of reporting calibration is that it is very much influenced by the data collection scheme in the sequential regime. In our case we use a bandit algorithm to collect the data. Arguably, in this setting, regret might be a better measure rather than looking at the calibration of the confidence sets. Additionally, the calibration depends on the true value $\tstar$. We report the results for zero parameter and a random parameter from a unit ball. We also report results for i.i.d. data.
 
\begin{figure}[ht!]
	\centering   
	\begin{subfigure}[b]{0.45\textwidth}
		\includegraphics[width=\textwidth]{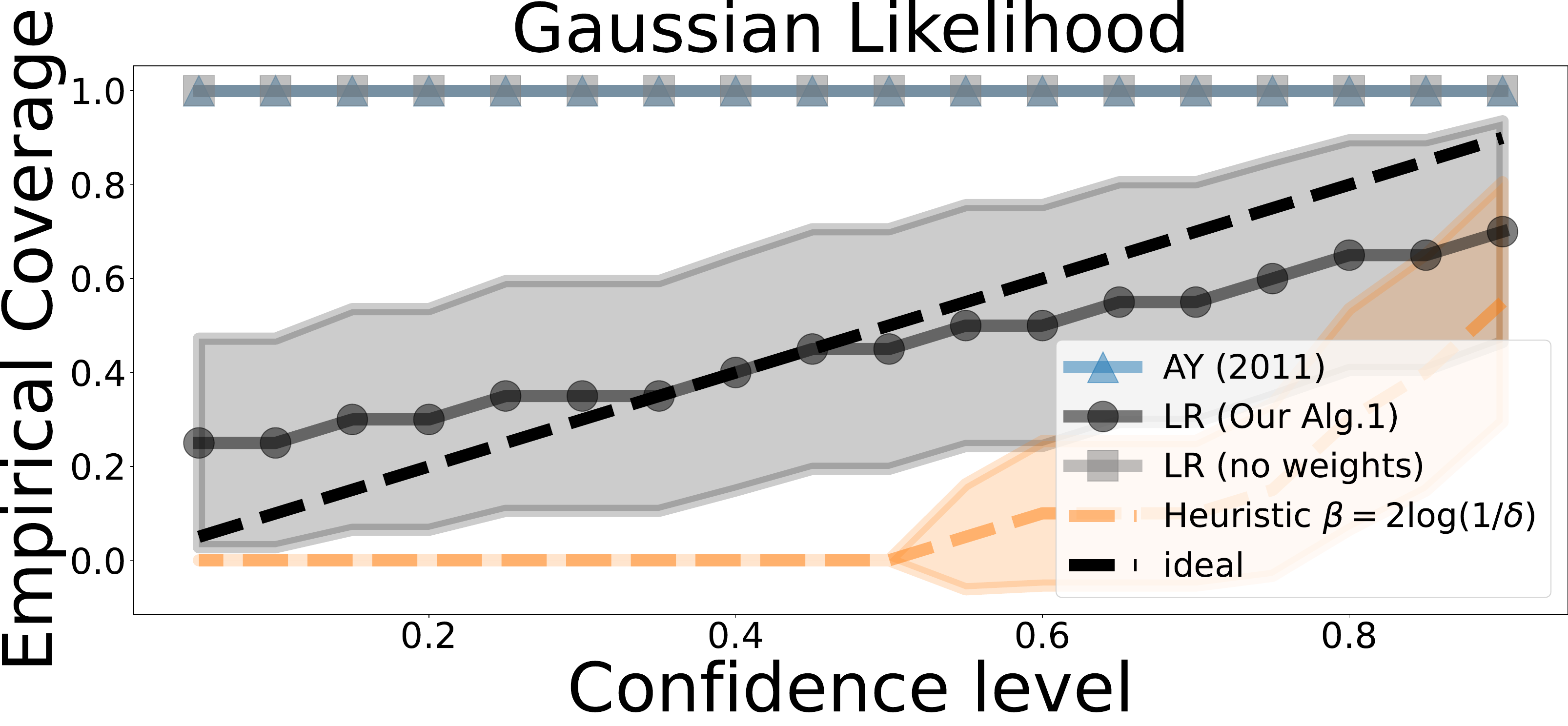}
		\caption{\textbf{ADAPTIVE} Bandit sequence, random $||\theta_\star||_2 =1$, $\sigma = 0.1$}
	\end{subfigure}
	\begin{subfigure}[b]{0.45\textwidth}
		\includegraphics[width=\textwidth]{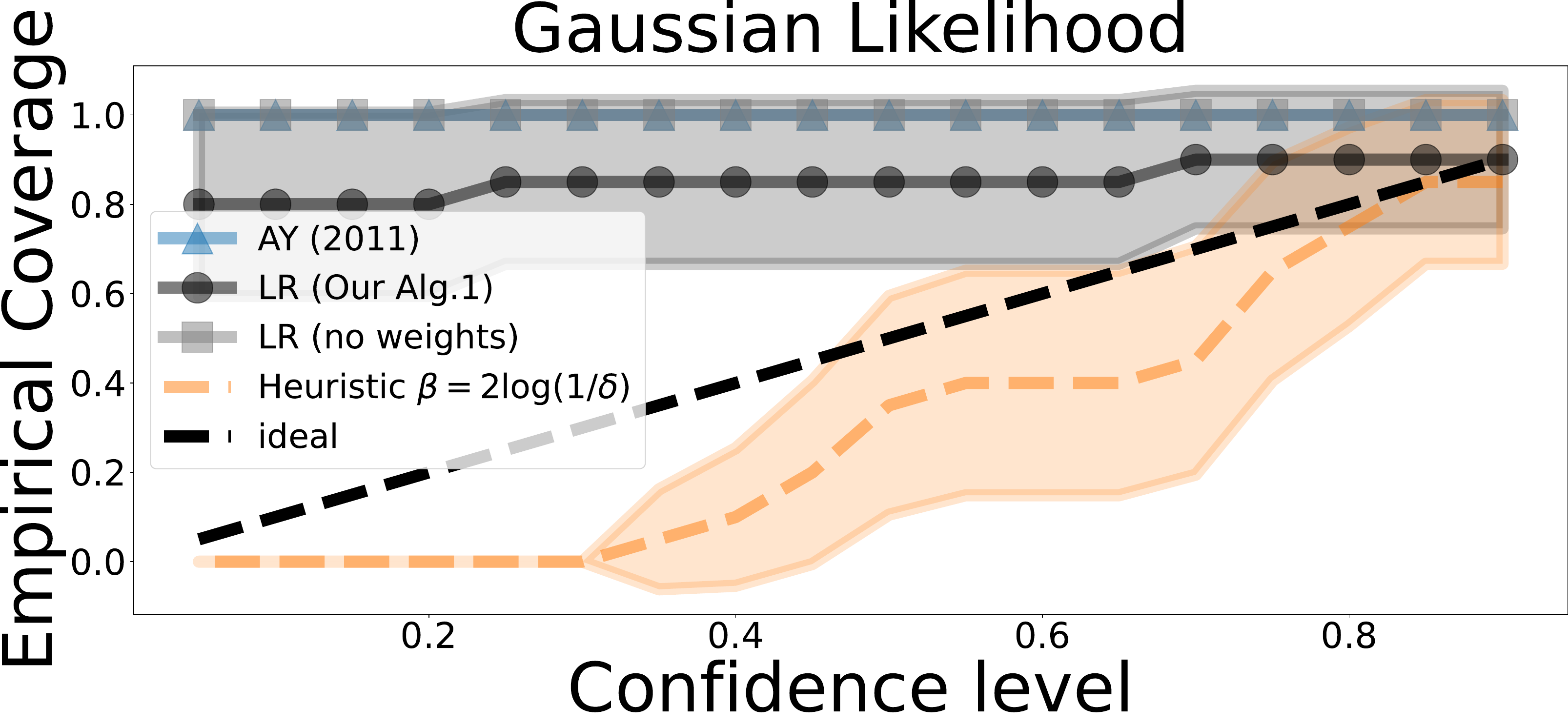}
		\caption{\textbf{ADAPTIVE} Bandit sequence, random $||\theta_\star||_2 =1$, $\sigma = 0.01$}
		
	\end{subfigure}
	
	\begin{subfigure}[b]{0.45\textwidth}
		\includegraphics[width=\textwidth]{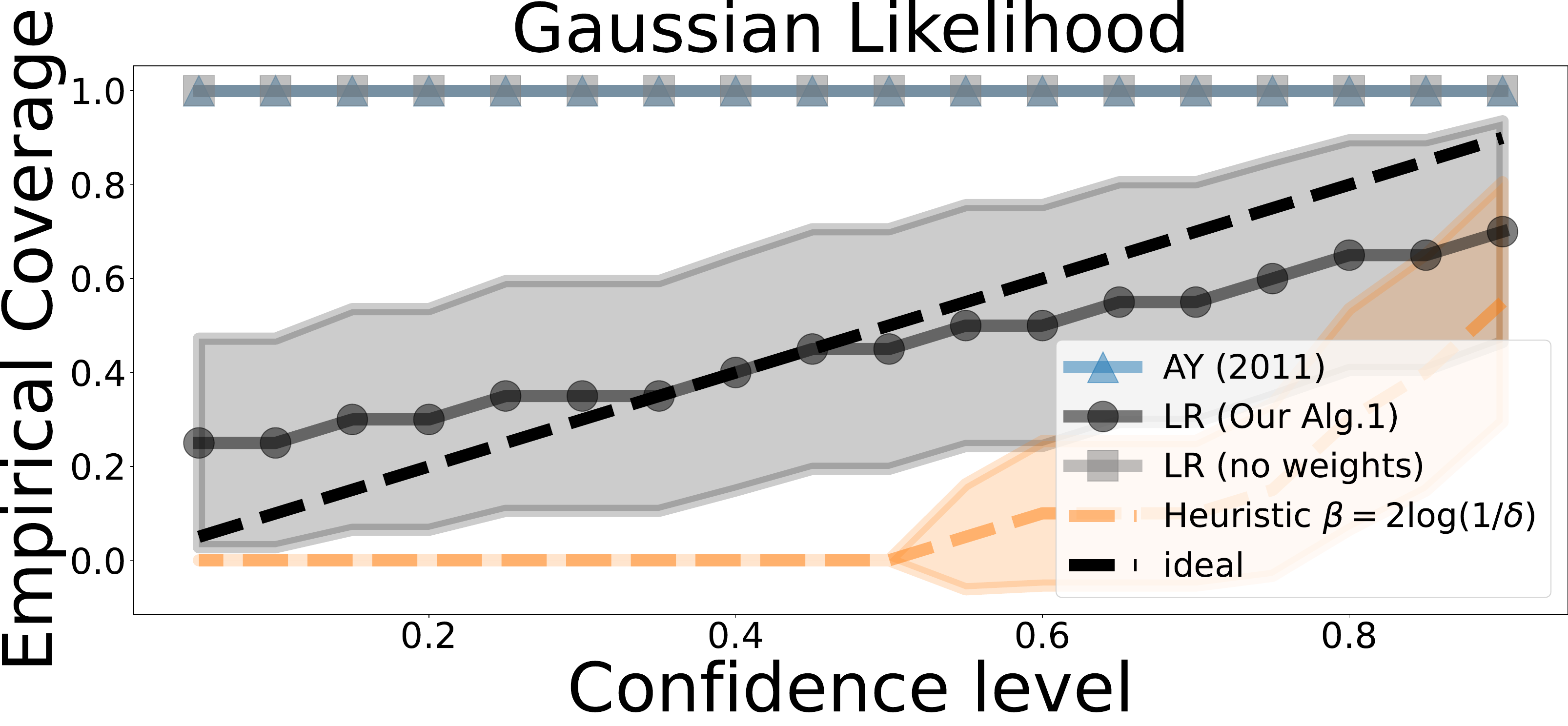}
		\caption{\textbf{ADAPTIVE} Bandit sequence, $\theta_\star = 0$, $\sigma = 0.1$}
		
	\end{subfigure}
	\begin{subfigure}[b]{0.45\textwidth}
		\includegraphics[width=\textwidth]{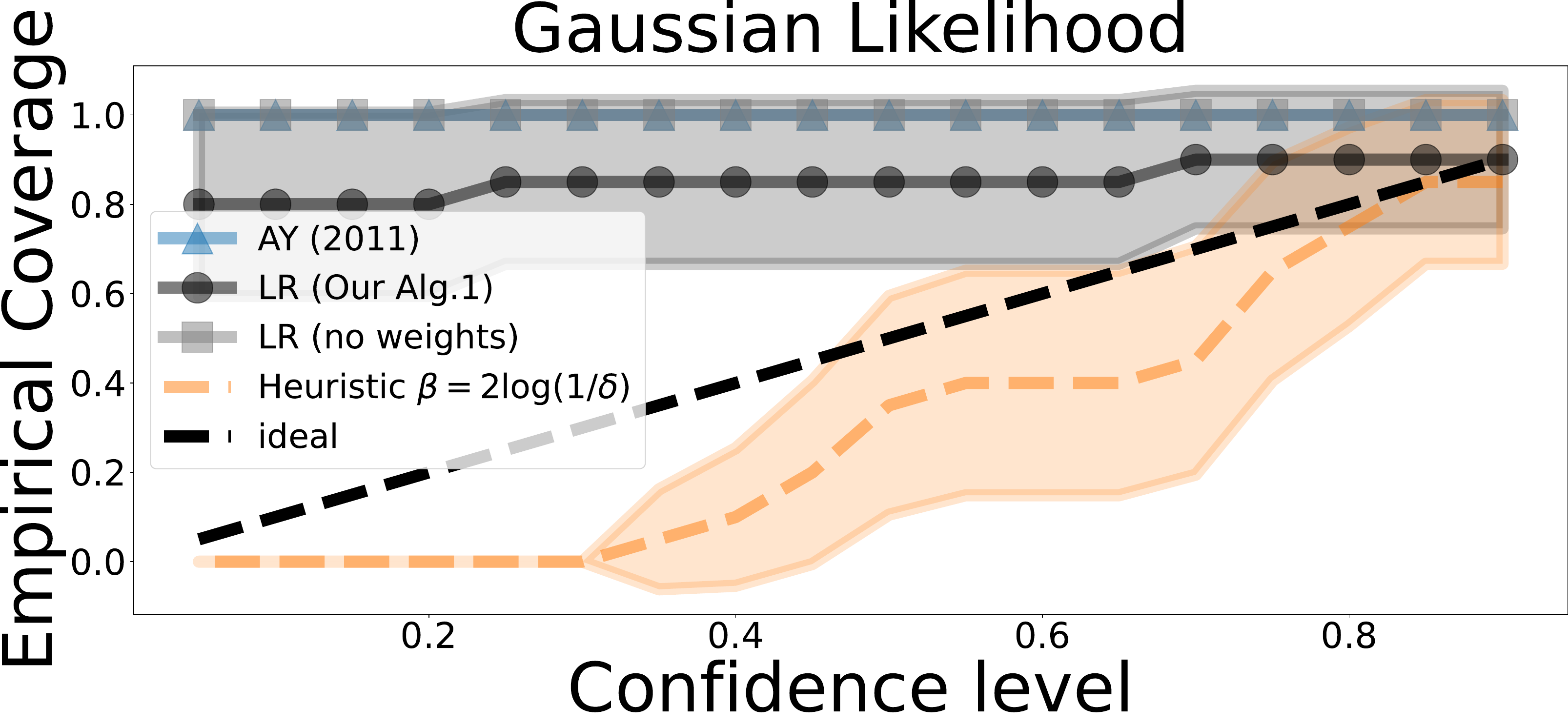}
		\caption{\textbf{ADAPTIVE} Bandit sequence $\theta_\star = 0$, $\sigma = 0.01$}
		
	\end{subfigure}

	\begin{subfigure}[b]{0.45\textwidth}
		\includegraphics[width=\textwidth]{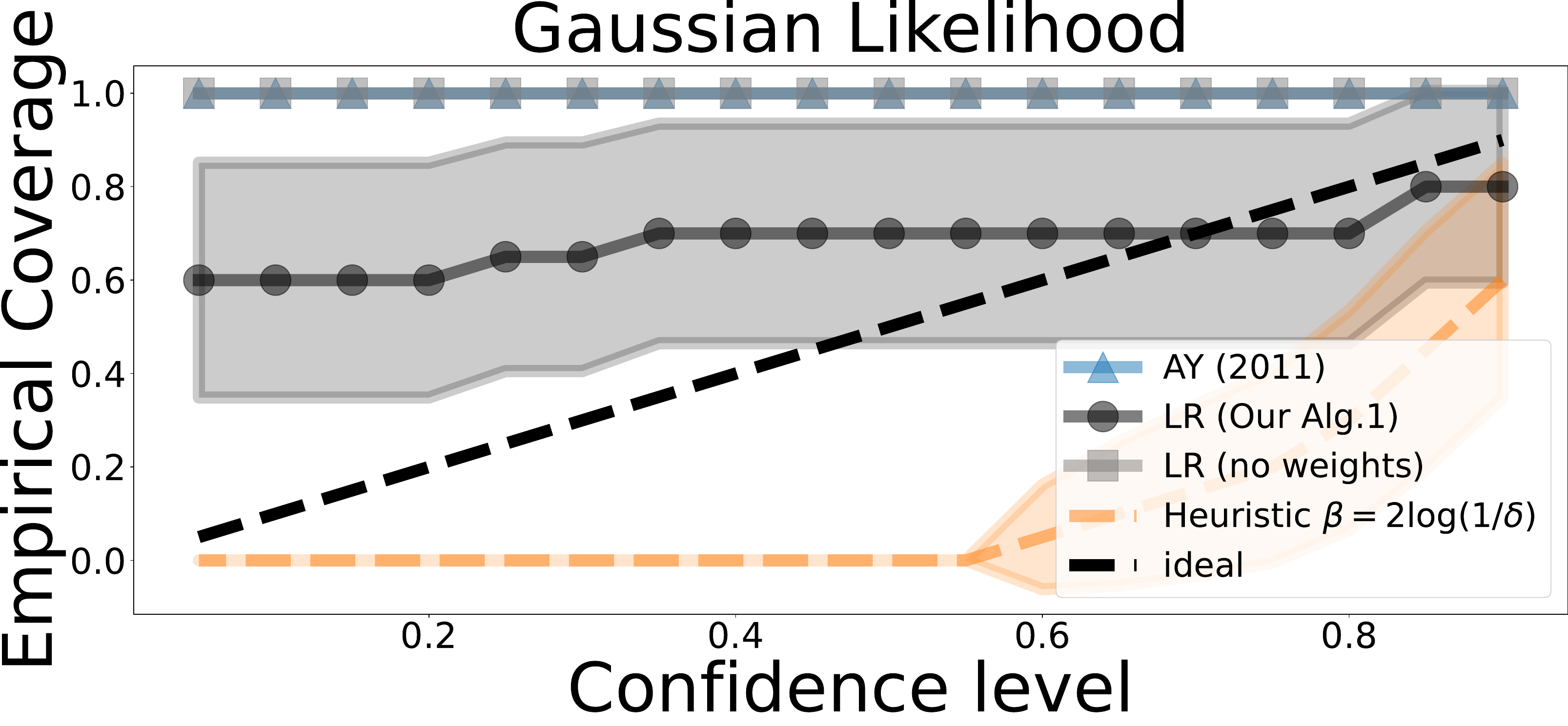}
		\caption{\textbf{IID} sequence, random $||\theta_\star||_2 =1$, $\sigma = 0.1$}
		
	\end{subfigure}
	\begin{subfigure}[b]{0.45\textwidth}
		\includegraphics[width=\textwidth]{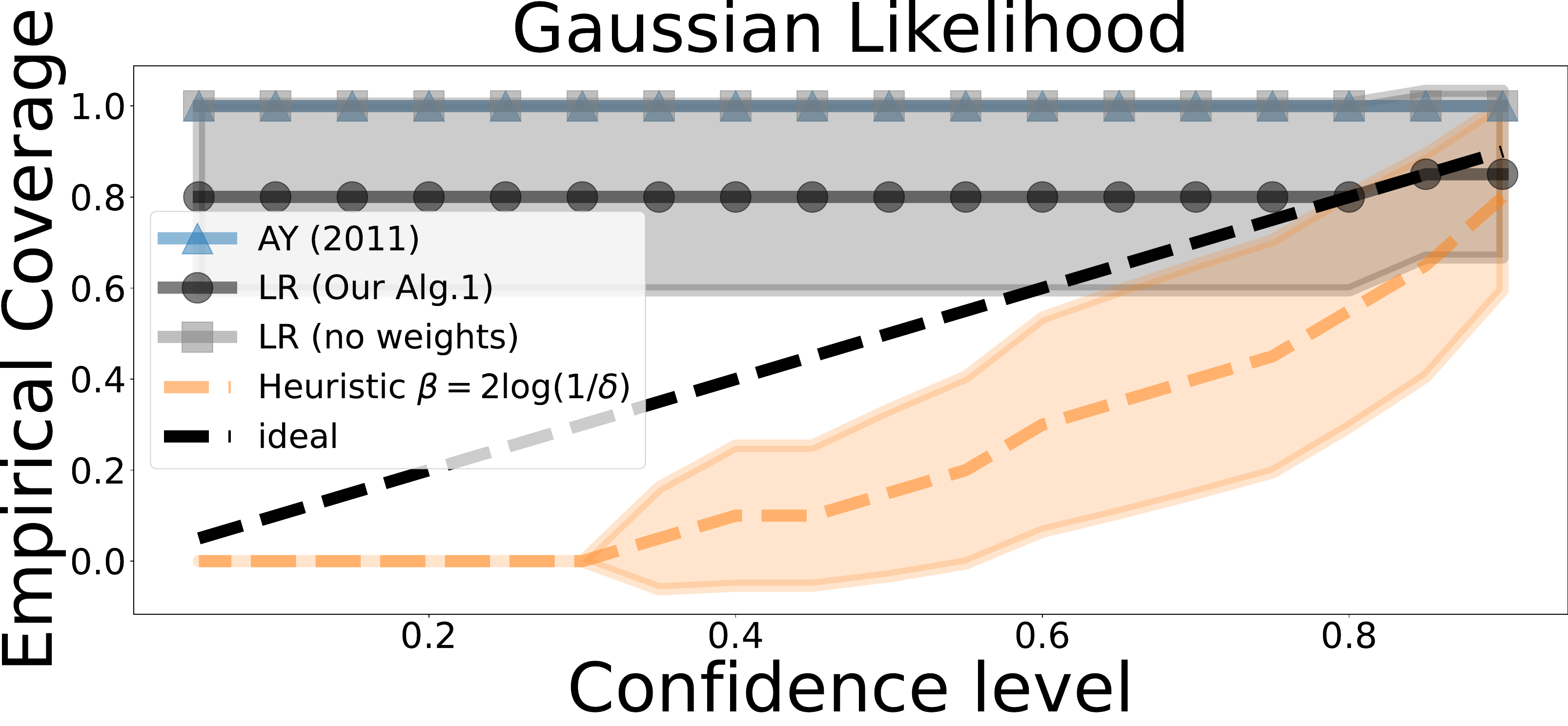}
		\caption{\textbf{IID} sequence, random $||\theta_\star||_2 =1$, $\sigma = 0.01$}
		
	\end{subfigure}
	
	\begin{subfigure}[b]{0.45\textwidth}
		\includegraphics[width=\textwidth]{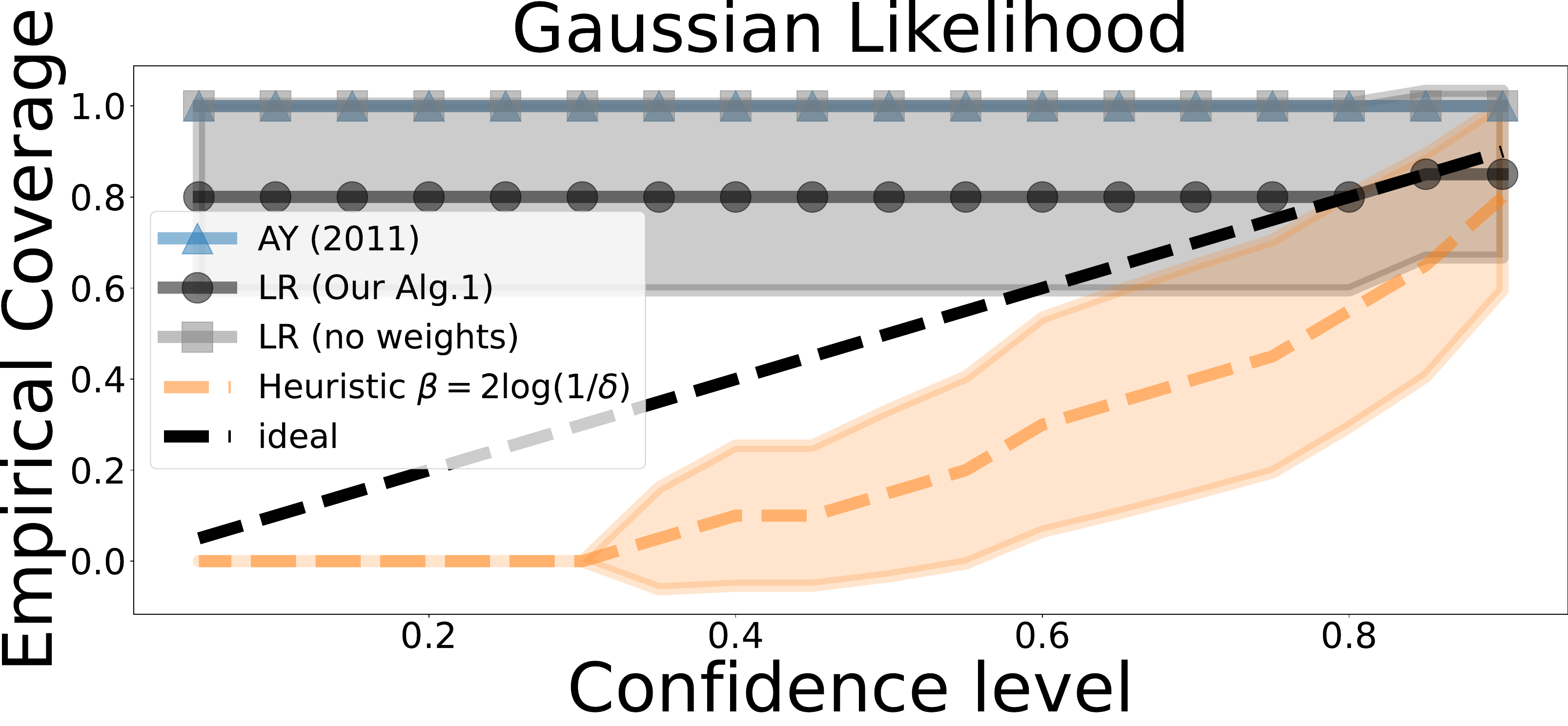}
		\caption{\textbf{IID} sequence, $\theta_\star = 0$, $\sigma = 0.01$}
		
	\end{subfigure}
	\begin{subfigure}[b]{0.45\textwidth}
		\includegraphics[width=\textwidth]{figures/calibration_zero_iidplot0.01.pdf}
		\caption{\textbf{IID} sequence $\theta_\star = 0$, $\sigma = 0.01$}
		
	\end{subfigure}
	
	\caption{We plot the calibration diagram for data collected from a bandit game trying to optimize a ground truth function using the same model as in Fig. 2a). Instead of the test function, we use an explicit member of the confidence set to avoid a potential mismatch between models. We check after $T=15$ whether $\theta_\star \in C_t$ and average over 200 runs. We see that the (LR) are more conservative than the ideal calibration, however, they are provably valid and substantially better than any theoretically motivated confidence sets. We also see that the heuristic is not calibrated and fails many times. We see that for i.i.d. data, our sets are somewhat conservative since the data is not adapted, and our approach is not necessary. We note that the results depend on $\theta_\star$.}
	\label{fig:calibration}
\end{figure}

\subsection{Baselines and Details}
In the following section, we describe the baseline we used in the comparison. The details of parameters used in the experiments can be found in Table \ref{tab:experiments}. As there is no explicit statement for sub-exponential variables, we give a formal derivation in Appendix~\ref{app:subexponentialsection}. 

\subsubsection{Sub-Exponential Random Variables: Confidence Sets}
For this baseline, we will assume a linear model with additive sub-exponential noise, namely
that there is $\theta_* \in \Theta$ such that $y_t = \inner{\theta_*}{x_t} + \eta_t$, where $\eta_t$ is $(\nu,\gamma)$-conditionally sub-exponential \citep{wainwright:book}.
We let as usual
$$
    \VV_t^{\nu^{-2};\lambda} = \sum_{s=1}^t \frac{ x_s x_s^\trp }{\nu^2} \lambda \II.
$$
With this, one can prove the following time-uniform concentration result:
\begin{proposition}
\label{app:confsubexp}
For any $k \in (0,1)$, the following holds:
   $$ 
    \mathbb{P}\left(\exists t \, : \, \norm{\that_t - \theta^*}_{\VV_t^{\nu^{-2};\lambda}} \geq \sqrt{\beta_{SE}}\right)\leq \delta,
    $$
    where 
    $$
        \sqrt{\beta_{SE}} = \sqrt{\lambda}\norm{\tstar}_2 + \sqrt{\lambda}kB + \frac{d}{\sqrt{\lambda}kB}\log\left(\frac{1}{1-k}\right) + \frac{1}{\sqrt{\lambda}kB}  \log\left(\frac{(\det (\VV_t^{\nu^{-2};\lambda}))^{1/2}}{\delta \det(\sqrt{\lambda}\II)}\right).
    $$
\end{proposition}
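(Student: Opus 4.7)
The plan is to adapt the classical method-of-mixtures argument of \cite{abbasi:improved} to sub-exponential noise, in the style of \cite{faury:logistic}. The key obstruction compared with the sub-Gaussian case is that the sub-exponential MGF is only bounded on an interval, so the mixing distribution must be supported on a bounded set; this restriction is precisely what produces the additional $\sqrt{\lambda}kB$ and $\log(1/(1-k))$ terms in $\sqrt{\beta_{SE}}$.

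First, I would use the closed form of the ridge estimator to decompose
\[
\that_t - \theta^* \;=\; (\VV_t^{\nu^{-2};\lambda})^{-1}\Big[\nu^{-2}S_t - \lambda \theta^*\Big],\qquad S_t := \sum_{s=1}^t x_s \eta_s,
\]
and apply the triangle inequality in the norm $\|\cdot\|_{\VV_t^{\nu^{-2};\lambda}}$. The regularization part is bounded directly by $\sqrt{\lambda}\|\theta^*\|_2$, accounting for the first term in $\sqrt{\beta_{SE}}$. It remains to control the stochastic contribution $\|S_t\|_{(\VV_t^{\nu^{-2};\lambda})^{-1}}$ time-uniformly.

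Second, for $\rho\in\R^d$ satisfying $|\rho^\top x_s|\leq 1/\gamma$ for all $s$, the $(\nu,\gamma)$-sub-exponential hypothesis implies, by the same conditional-MGF argument as in Lemma \ref{lemma:martingale}, that
\[
M_t(\rho) \;=\; \exp\!\Big(\rho^\top S_t - \tfrac{\nu^2}{2}\, \rho^\top \big(\textstyle\sum_{s\leq t} x_s x_s^\top\big)\rho\Big)
\]
is a non-negative $(\calF_t)$-supermartingale with $M_0(\rho)=1$. Under Assumption \ref{ass:bounded}, the constraint $|\rho^\top x_s|\leq 1/\gamma$ is ensured by $\|\rho\|_2\leq 1/\gamma$, so $M_t$ is valid on a whole ball.

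Third, I would mix $M_t$ against a uniform density $h$ supported on a ball of radius proportional to $\sqrt{\lambda}kB$ with $k\in(0,1)$. The mixture $\bar M_t := \int M_t(\rho)\,h(\rho)\,d\rho$ remains a non-negative supermartingale, so Ville's inequality gives $\mathbb{P}(\exists t:\bar M_t \geq 1/\delta)\leq\delta$. Completing the square,
\[
\rho^\top S_t - \tfrac{\nu^2}{2}\rho^\top V_t \rho \;=\; \tfrac{1}{2\nu^2}\|S_t\|_{V_t^{-1}}^2 - \tfrac{\nu^2}{2}\|\rho-\rho^*\|_{V_t}^2,
\]
with $V_t = \sum_s x_s x_s^\top$ and $\rho^* = V_t^{-1}S_t/\nu^2$, reduces $\bar M_t$ to a truncated Gaussian integral over the ball. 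The standard Gaussian-normalizer computation produces the $\log(\det(\VV_t^{\nu^{-2};\lambda})^{1/2}/\det(\sqrt{\lambda}\II))$ term, while the truncation of the Gaussian to a ball of radius $\sqrt{\lambda}kB$ is what introduces the factor $\log(1/(1-k))$ (with a factor of $d$ in front from a $(1-k)^d$-type volume comparison). Solving the resulting quadratic inequality in $\|S_t\|_{(\VV_t^{\nu^{-2};\lambda})^{-1}}$, and combining with the deterministic bias from step one, yields the four terms of $\sqrt{\beta_{SE}}$.

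The hard part is step three: with a uniform prior on a ball, one cannot blindly complete the square and integrate over $\R^d$ as in the sub-Gaussian case, because the minimizer $\rho^*$ need not lie inside the ball, and even when it does the truncation removes Gaussian mass. The cleanest resolution is a two-step contraction: shrink the center from $\rho^*$ to $k\rho^*$ (which by choice of the radius is guaranteed to lie inside the ball, up to a $\sqrt{\lambda}$-rescaling), then lower-bound the integrand on the complementary smaller ball where the Gaussian factor is still close to its maximum. The multiplicative loss this incurs is what manifests as the $(1-k)^{-d}$-type factor and ultimately as the two $\frac{1}{\sqrt{\lambda}kB}$ denominators. The parameter $k$ thus trades off truncation loss ($k\to 1$) against ball size ($k\to 0$), and is left free so the user can tune it.
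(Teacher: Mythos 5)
Your overall strategy --- decompose $\norm{\that_t-\theta^*}_{\VV_t^{\nu^{-2};\lambda}}$ into regularization bias plus the self-normalized noise $\norm{S_t}_{(\VV_t^{\nu^{-2};\lambda})^{-1}}$, build a supermartingale in a mixing parameter forced to live on a bounded set by the sub-exponential MGF condition, mix, and apply Ville's inequality --- is indeed the paper's approach, and your diagnosis that the bounded support of the mixing distribution is what creates the $k$-dependent terms is on target. However, your proposal mixes the \emph{unregularized} martingale $M_t(\rho) = \exp(\rho^\top S_t - \tfrac{\nu^2}{2}\rho^\top(\sum_{s\leq t}x_sx_s^\top)\rho)$ against a \emph{uniform} density on a ball, whereas the paper mixes against a truncated $\calN(0,\lambda^{-1}\II)$. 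This is not cosmetic. With uniform mixing nothing injects the $\lambda\II$ into the quadratic form, so completing the square produces $\norm{S_t}_{(\sum_s x_s x_s^\top)^{-1}}$ and $\log\det(\sum_s x_sx_s^\top)$, not the regularized quantities $\norm{S_t}_{(\VV_t^{\nu^{-2};\lambda})^{-1}}$ and $\log\det(\VV_t^{\nu^{-2};\lambda})$ that the Proposition asserts --- and the former are ill-defined when the design matrix is singular. The truncated-Gaussian mixing is precisely what supplies the $\lambda\II$; equivalently, you could keep uniform mixing but add $-\tfrac{\lambda}{2}\norm{\rho}^2$ to the exponent of the martingale (noting only $M_0\le1$, which Ville's inequality tolerates), but you do neither.

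A second, smaller issue is your ``shrink the center from $\rho^*$ to $k\rho^*$'' step: $\rho^*$ is proportional to the very quantity $\norm{S_t}_{(\VV_t^{\nu^{-2};\lambda})^{-1}}$ you are trying to control, so $k\rho^*$ need not lie in the ball for any fixed $k<1$. The paper instead optimizes $f_t(x)=x^\top S_t - \tfrac12\norm{x}_{\VV_t}^2$ over the smaller ball of radius $kK$, Taylor-expands around that constrained maximizer $x^*$, restricts the integration to a further ball of radius $(1-k)K$ centered at $x^*$, and applies Jensen's inequality. The explicit lower bound $f_t(x^*)\geq \norm{S_t}_{\VV_t^{-1}}\sqrt{\lambda}kK - \lambda k^2K^2$ is then obtained by evaluating at the feasible point $\sqrt{\lambda}kK\,\VV_t^{-1}S_t/\norm{S_t}_{\VV_t^{-1}}$, which lies in the $kK$-ball because $\VV_t\succeq\lambda\II$ implies $\norm{\VV_t^{-1}S_t}_2\le\lambda^{-1/2}\norm{S_t}_{\VV_t^{-1}}$. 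This is where the $\sqrt{\lambda}kB$ term and the two $1/(\sqrt{\lambda}kB)$ prefactors in $\sqrt{\beta_{SE}}$ actually arise; they do not come from the radius of the mixing ball being $\sqrt{\lambda}kB$ as your sketch suggests.
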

The proof is very similar to prior work \cite{faury:logistic, mutny:poisson}, and can be found in Appendix \ref{app:subexponentialsection}. This can readily be applied in the survival analysis (after a suitable transformation explained in the main paper) and Laplace noise experiments.

\subsubsection{Poisson Heuristics}
We implement two heuristics. One is a Bayesian formulation due to the Laplace method, and one is due to considerations in \citet{mutny:poisson} of how to construct a valid confidence set using the Fisher information. The Laplace method creates an ellipsoidal confidence set using the second-order information evaluated at the penalized maximum likelihood estimator. Namely, the second derivative of the likelihood evaluated at the maximum penalized likelihood $\that_t$ is
\[ \VV_{\text{laplace}} = \sum_{s=1}^t \exp(\that_t^\top x_s)x_sx_s^\top.\]
We use this to define ellipsoidal confidence sets as $||\that_t - \theta||_{\VV_{\text{laplace}}}^2 \leq 2 \log(1/\delta)$. The other heuristic suggests using the worst-case parameter instead, namely 
\[ \VV_{\text{mutny}} = \sum_{s=1}^t \exp(B)x_sx_s^\top.\]
This method would have provable coverage with a proper confidence parameter. Its derivation is beyond the scope of this paper. 
\subsubsection{NR (2021)}
This method follows from \citet{Neiswanger21a}. Per se, this method was not developed to be versatile in terms of likelihood but instead provides a confidence set on $f$, even if it originates from a misspecified prior. Nevertheless, it provides a likelihood-aware confidence sequence that is anytime-valid and doesn't employ worst-case parameters, and hence is a good benchmark for our analysis. The confidence sets are of the form 
\[  \{  \theta ~ | ~ \log \mathcal{L}(\theta) \leq \log(1/\delta) + \log(p(\mathcal{D}))\}, \]
where $\log(p(\mathcal{D}))$ is the current log-evidence of the data given the Gaussian prior. For more information, see \cite{Neiswanger21a}. 

\begin{table}
    \centering
    \begin{tabular}{c|c|c|c|c|c|c}
         Benchmark function & dim & $|\mathcal{X}|$&  $\gamma$ & $B$ & $\lambda$ & Gaussian/Laplace $\sigma$/$b$ \\ \hline
         1D & 1 & $2^6$ & 0.06 & 4& 1 & 0.15 \\
         Camelback & 2& $10^2$ & 0.2& 2 & 1 & 0.10
    \end{tabular}
    \caption{Summary of experimental parameters}
    \label{tab:experiments}
\end{table}

\subsection{Additive Models}
We implemented two likelihoods, namely Gaussian and Laplace. We implemented the discretization of the domain $|\mathcal{X}|$, and in the implementation we used Nystrom features defined on $|\mathcal{X}|$ providing the exact representation of the RKHS on the $\mathcal{X}$, The points were chosen to be on a uniform grid. Notice that for the regularized estimator, we chose the rule of thumb $\lambda = 1/B$ as is motivated in \citep{Mutny2022b}.

The laplace parameter was picked as $b = 0.15$ likewise. Note that Laplace distribution is sub-exponential with parameters $(b, 2b^2)$. We use $1/\sigma^2$ or $1/b$ respectively for the value $L$. Strictly speaking, the Laplace likelihood is not smooth, but a smoothed objective would most likely inherit a value depending on $b$. As we maintain coverage with any choice of weighting, we do not risk invalidity by using a heuristic choice for $L$. 

\subsection{Survival Analysis}
We implemented the method exactly as specified, using the Weibull likelihood with parameter $p=2$. Upon log-transformation, the Gumbel distribution is sub-exponential. To determine the parameter, consider the moment-generating function of the Gumbel distribution ($\beta = 1/p$ in the canonical parameterization): 
\[ \E[e^Xt] = \Gamma(1-t/2)\exp(t) \leq \exp(t^2/2) \quad \text{for} \quad t<1/2,\]
hence, the sub-exponentiality parameter is $1$, and we can use the above sub-exponential confidence sets with value $b =1 $. For the likelihood ratio code, we used $L = \exp(B)$, as this is the leading term of the Hessian of log-likelihood. The function is not smooth everywhere, but on a bounded domain, this turns out to be an appropriate scaling. 

\subsection{Poisson Bandits}
In this case, we implemented a bandit game, where we used the parametrization $r_\theta(x) \sim \text{Poisson}(\exp(-\theta^\top \Phi(x)))$, where $\Phi(x)$ is the RKHS evaluation operator, and $\theta$ is the unknown value. 

We used $L = \exp(B)$, as this is the leading term of the Hessian of log-likelihood in this parametrization.  The function is not smooth everywhere but on a bounded domain this is an appropriate scaling. 
\subsection{Additional Benchmark Functions}
We focus on an additional baseline function: Camelback in 2D, a standard BO benchmark function. The results can be see in Figure~\ref{fig:camelback}.

\begin{figure}
    \centering
    \includegraphics[width=\textwidth]{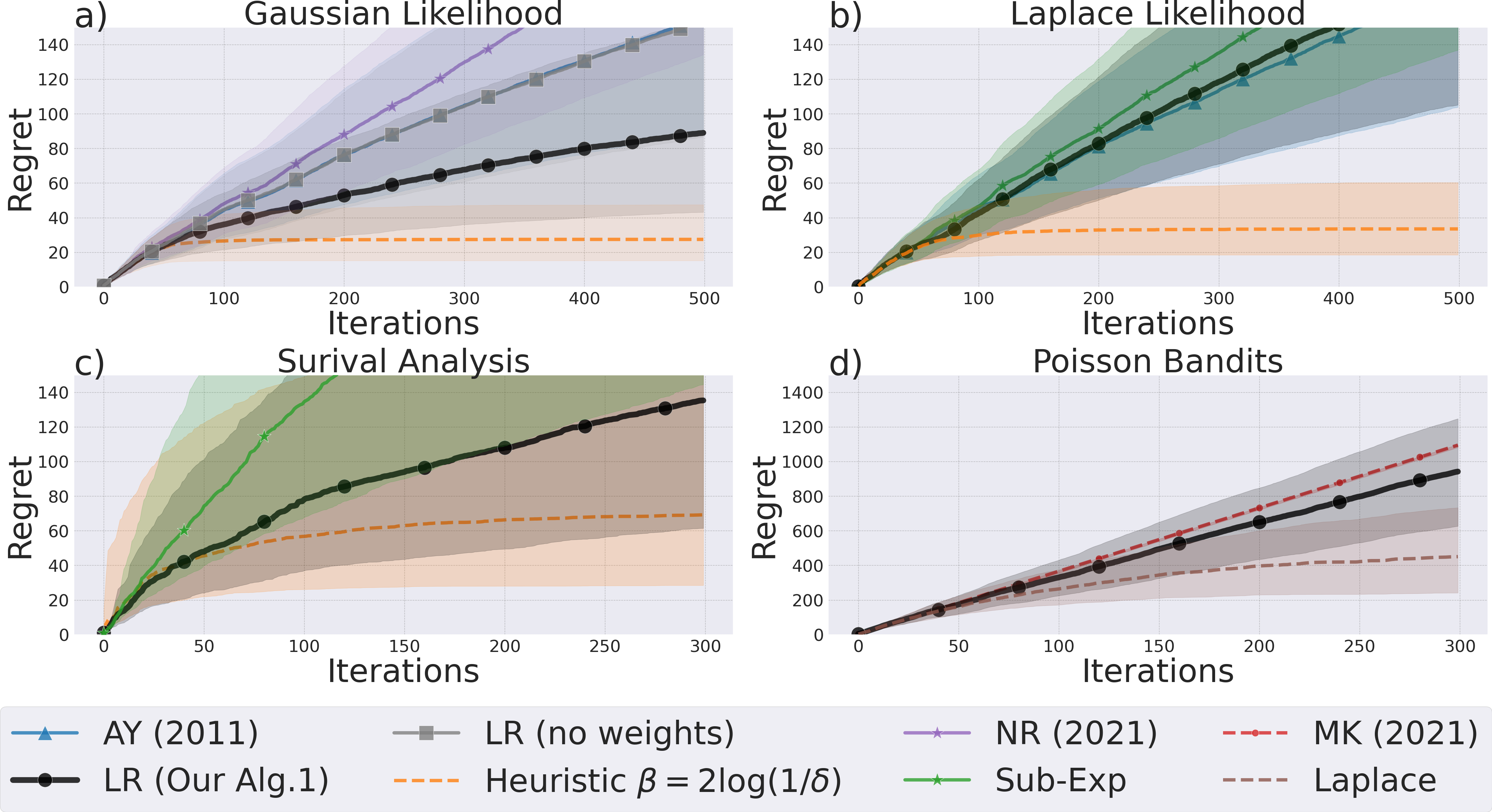}
    \caption{Camelback function. 10 repeats with median and standard quantiles plotted. Note that our method is the best method with provable coverage.}
    \label{fig:camelback}
\end{figure}

\section{Proof of Proposition \ref{app:confsubexp}}
\label{app:subexponentialsection}
Define $S_t$ and the shorthand $\VV_t$
$$
    S_t = \sum_{s=1}^t \eta_s \frac{x_s}{\nu^2} \quad \text{and} \quad \VV_t := \VV^{\nu^{-2};\lambda}_t = \sum_{s=1}^t \frac{x_s x_s^\trp}{\nu^2} + \lambda \II.
$$
and the parametrized process
$$
    \mathcal{M}_t(x) = \exp(\inner{x}{S_t}-\frac{1}{2}\norm{x}_{\VV_t}^2).
$$
\begin{lemma}
If $\eta_t$ is $(\nu,\gamma)$-conditionally sub-Exponential, then $\mathcal{M}_t(x)$ is a super-martingale on the ball $\{x \in \R^d \, \vert \, \norm{x}_2 \leq \frac{\nu^2}{\gamma} \}$ with $\mathcal{M}_0(x) \leq 1.$
\end{lemma}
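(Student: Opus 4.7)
The plan is to verify the two parts separately: first the initialization $\mathcal{M}_0(x) \leq 1$, then the supermartingale increment condition. The initialization is immediate once we recall that $S_0 = 0$ and $\VV_0 = \lambda \mathbf{I}$, so $\mathcal{M}_0(x) = \exp(-\tfrac{\lambda}{2}\|x\|_2^2) \leq 1$. The real content is the increment step, for which the key algebraic observation is that $S_t - S_{t-1} = \eta_t x_t/\nu^2$ and $\|x\|_{\VV_t}^2 - \|x\|_{\VV_{t-1}}^2 = \inner{x}{x_t}^2/\nu^2$, so
\begin{equation*}
\frac{\mathcal{M}_t(x)}{\mathcal{M}_{t-1}(x)} = \exp\!\left(\frac{\eta_t \inner{x}{x_t}}{\nu^2} - \frac{1}{2}\frac{\inner{x}{x_t}^2}{\nu^2}\right).
\end{equation*}

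I would then take conditional expectation given $\mathcal{F}_{t-1}$. Because $x_t$ is $\mathcal{F}_{t-1}$-measurable while $\eta_t$ is conditionally $(\nu,\gamma)$-sub-Exponential, the inner expectation is an MGF of $\eta_t$ evaluated at $\lambda_\star = \inner{x}{x_t}/\nu^2$. The defining property of sub-Exponential variables gives $\mathbb{E}[\exp(\lambda_\star \eta_t)\mid \mathcal{F}_{t-1}] \leq \exp(\lambda_\star^2 \nu^2/2) = \exp(\inner{x}{x_t}^2/(2\nu^2))$, which exactly cancels the deterministic quadratic term and yields $\mathbb{E}[\mathcal{M}_t(x)\mid \mathcal{F}_{t-1}] \leq \mathcal{M}_{t-1}(x)$.

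The only subtle point — and the main obstacle to watch for — is verifying that the MGF bound is actually valid at $\lambda_\star$, since sub-Exponential MGF control only holds for $|\lambda| \leq 1/\gamma$. This is where the restriction $\|x\|_2 \leq \nu^2/\gamma$ in the statement becomes essential: combined with $\|x_t\|_2 \leq 1$ from Assumption~\ref{ass:bounded}, Cauchy--Schwarz gives $|\inner{x}{x_t}| \leq \nu^2/\gamma$, hence $|\lambda_\star| = |\inner{x}{x_t}|/\nu^2 \leq 1/\gamma$, exactly as required. Non-negativity of $\mathcal{M}_t(x)$ is automatic from the exponential form, so together with the above we obtain the claimed non-negative supermartingale property on the prescribed ball.
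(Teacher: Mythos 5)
Your proof is correct and matches the paper's argument essentially step for step: verify $\mathcal{M}_0(x)\leq 1$ from $S_0=0$ and $\VV_0=\lambda\II$, factor out the one-step increment, apply the conditional sub-Exponential MGF bound at $\lambda_\star=\inner{x}{x_t}/\nu^2$ to cancel the quadratic compensator, and use Cauchy--Schwarz with $\norm{x_t}_2\leq 1$ to verify $\abs{\lambda_\star}\leq 1/\gamma$ on the prescribed ball. Your note that the MGF bound actually holds with equality in the cancellation (rather than the paper's extra ``$\leq$'' step, since $\norm{x}_{x_tx_t^\trp}^2 = \inner{x}{x_t}^2$) is a small but accurate refinement of the same argument.
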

Note that for $\gamma \rightarrow 0$ (sub-Gaussian case), this recovers Lemma 20.2. in \citet{lattimore:book}.

\begin{proof}
It is easy to observe that for any $x$, we have
$$
    \exp\left(S_0^\trp x - \frac{1}{2}\norm{x}_{\VV_0}^2\right) = \exp\left(- \frac{1}{2}\norm{x}_{\lambda \II}^2\right) \leq 1.
$$
For the first part, we can write
\newcommand{\prevf}{\mathcal{F}_{t-1}}
\begin{align*}
    \E[\mathcal{M}_t(x) | \prevf] &= \E[\exp(\inner{x}{S_t}-\frac{1}{2}\norm{x}_{\VV_t}^2) \sep \prevf] \\
    &= \E[\exp(\inner{x}{S_{t-1}}-\frac{1}{2}\norm{x}_{\VV_{t-1}}^2)\exp(\frac{1}{\nu^2}\inner{x}{\eta_t x_t} - \frac{1}{2\nu^2}\norm{x}^2_{x_t x_t^\trp}) \sep \prevf] \\
    &= \mathcal{M}_{t-1}(x) \E[\exp(\frac{1}{\nu^2}\eta_t\inner{x}{x_t})\sep \mathcal{F}_{t-1}]\exp(-\frac{1}{2\nu^2}\norm{x}_{x_t x_t^\trp}^2),
\end{align*}
where in the last step we use that $x_t$ is $\mathcal{F}_{t-1}$-measurable. Now, as long as $\frac{1}{\nu^2}\abs{\inner{x}{x_s}} \leq \frac{1}{\gamma}$, we can apply our definition of conditional sub-Exponential noise to bound
$$
    \E[\exp(\eta_t \frac{1}{\nu^2} \inner{x}{x_t})] \leq \exp\left(\frac{(\inner{x}{x_t})^2\nu^2}{2\nu^4}\right) \leq \exp\left(\frac{\norm{x}_{x_tx_t^\trp}^2}{2\nu^2}\right).
$$
From this we directly conclude 
$$
    \E[\mathcal{M}_t(x) | \prevf] \leq \mathcal{M}_{t-1}(x).
$$
By Cauchy-Schwarz, a sufficient condition is $\norm{x}_2 \leq \frac{\nu^2}{\gamma}$ as this implies (with our assumptions on the actions) $$
\abs{\inner{x}{x_t}} \leq \norm{x}_2\norm{x_t}_2 \leq \norm{x}_2 \leq \frac{\nu^2}{\gamma}.
$$
\end{proof}
In the following, will use this result to prove any-time confidence estimates for the parameter $\theta$ using the technique of pseudo-maximization, closely following \cite{mutny:poisson}.

Recall that $\mathcal{M}_t(x)$ is defined on the ball of radius $\frac{\nu^2}{\gamma L }$. This allows us some freedom in choosing the radius of the ball on which we integrate. In particular, let $K$ be this radius. While we need $K \leq \frac{\nu^2}{\gamma L }$, we can make $K$ larger by choosing larger $\nu^2$ (increasing $\nu^2$ only makes the set of noise distributions larger).

Ultimately, we wish to bound (following \citet{lattimore:book})
\begin{align*}
    \norm{\that_t - \theta^*}_{\VV_t} 
    &= \norm{\VV_t^{-1}X_{1:t}(X_{1:t}\tstar + \eta_{1:t}) - \tstar}_{\VV_t} \\
    &=  \norm{\VV_t^{-1}X_{1:t}X_{1:t}\tstar - \tstar + \VV_t^{-1}S_t}_{\VV_t} \\
    &\leq  \norm{S_t}_{\VV_t^{-1}} + \sqrt{\lambda}\norm{\tstar}_2.
\end{align*}

\newcommand{\Vtil}{\tilde{\VV}_t}
We can not control the second term, so we focus on the first: the self-normalized residuals. Via fenchel duality, one can motivate that the right object to study is the supremum of the martingale $\mathcal{M}_t(x)$ over all $x\in \R$.\footnote{But that is in our case ill-defined}. Define $\tilde{M}_t$ to be the martingale $\mathcal{M}_t$ from above but with $\lambda = 0$, i.e. no regularisation term. Similarly, let $\Vtil = \VV_t - \lambda \II$ be the design matrix without the regularisation term. Slightly counterintuitively, we will study 
$$
    \bar{M}_t = \int_{\norm{x}_2 \leq K} \tilde{M}_t(x)\dd h(x),
$$
where $h$ is the probability density function of a truncated normal distribution with inverse variance $\lambda$, that is with covariance matrix $\frac{1}{\lambda}\II$. By Lemma 20.3 in \cite{lattimore:book}, $\bar{M}_t$ is also a super-martingale with $\bar{M}_0 \leq 1$.
Then we have
\begin{align*}
    \bar{M}_t &= \frac{1}{N(h)}\int_{\norm{x}_2 \leq K} \exp\left(x^\trp S_t - \frac{1}{2} \norm{x}_{\Vtil}^2\right)\exp\left(-\frac{1}{2} x^\trp\lambda \II x  \right) \dd x \\
    &= \frac{1}{N(h)}\int_{\norm{x}_2 \leq K} \exp\left(x^\trp S_t - \frac{1}{2}\norm{x}_{\VV_t}^2\right) \dd x.
\end{align*}
We will define the shorthand $f_t(x) = x^\trp S_t - \frac{1}{2}x^\trp \VV_t x = f_t(x^*) + \nabla f_t(x^*)^\trp (x-x^*) - \frac{1}{2}(x-x^*)^\trp \VV_t (x-x^*)$ (by Taylor's theorem), where $x^* = \arg \max_{\norm{x}\leq kK} f_t(x)$, $k \in (0,1)$ will be chosen later. We can lower bound $\bar{M}_t$ by
\begin{align}
    \bar{M}_t &= \frac{1}{N(h)}\int_{\norm{x}_2 \leq K} \exp\left(x^\trp S_t - \frac{1}{2}\norm{x}_{\VV_t}^2\right) \dd x \nonumber \\
    &= \frac{\exp(f_t(x^*))}{N(h)}\int_{\norm{x}_2 \leq K} \exp\left( \nabla f_t(x^*)^\trp (x-x^*) - \frac{1}{2}(x-x^*)^\trp \VV_t (x-x^*) \right) \dd x \nonumber \\
    &= \frac{\exp(f_t(x^*))}{N(h)}\int_{\norm{y + x^*}_2 \leq K} \exp\left( \nabla f_t(x^*)^\trp y - \frac{1}{2}y^\trp \VV_t y \right) \dd y \label{eq:pseudo:changeofvar} \\
    &\geq \frac{\exp(f_t(x^*))}{N(h)}\int_{\norm{y}_2 \leq (1-k)K} \exp\left( \nabla f_t(x^*)^\trp y - \frac{1}{2}y^\trp \VV_t y \right) \dd y  \label{eq:pseudo:volumechange} \\
    &= \frac{\exp(f_t(x^*))}{N(h)}\int_{\norm{y}_2 \leq (1-k)K} \exp\left( \nabla f_t(x^*)^\trp y\right)\exp\left( - \frac{1}{2}y^\trp \VV_t y \right) \dd y  \nonumber \\
    &= \frac{\exp(f_t(x^*))N(g)}{N(h)} \E_{y \sim g}\left[\exp\left( \nabla f_t(x^*)^\trp y\right) \right] \nonumber \\
    &\geq \frac{\exp(f_t(x^*))N(g)}{N(h)} \exp\left(\E_{y \sim g}[\nabla f_t(x^*)^\trp y] \right) \label{eq:pseudo:jensen}\\
    &= \frac{\exp(f_t(x^*))N(g)}{N(h)}. \nonumber
\end{align}
where in step \eqref{eq:pseudo:changeofvar} we used the change of variables $x=y+x^*$. In \eqref{eq:pseudo:volumechange} we use that if $\norm{y}_2 \leq (1-k)K$, then $\norm{x^*+y}_2 \leq \norm{x^*}_2 + \norm{y}_2 \leq (1-k)K + kK = K$. Finally, in \eqref{eq:pseudo:jensen}, we used Jensen's inequality. The last inequality follows from symmetry. Note that we implicitly defined $g$ to be a truncated normal distribution with covariance matrix $\VV_t^{-1}$ on the ball of radius $(1-k)K$. \\

This puts us in a position to put Ville's inequality to good use:
\begin{align*}
    \delta &\geq \mathbb{P}\left( \exists t \,:\, \log(\bar{M_t}) \geq \log(1/\delta) \right) \\
    &\geq \mathbb{P}\left( \exists t \,:\, f_t(x^*) + \log\left(\frac{N(g)}{N(h)}\right) 
    \geq \log(1/\delta) \right) \\
    &\geq \mathbb{P}\left( \exists t \,:\, f_t(x^*)
    \geq \log\left(\frac{N(h)}{N(g)\delta}\right) \right).
\end{align*}

We now wish to recover $\norm{S_t}_{\VV_t}$. Recall the definition of $f_t(x^*)$ as the maximum over all $x$ in a ball of radius $kK$. Consequently, we can choose $x = \frac{\VV_t^{-1}S_t}{\norm{S_t}_{\VV_t^{-1}}}\sqrt{\lambda}kK$, which has norm bounded by $kK$. We have
$$
    f_t(x^*) \geq f_t\left(\frac{\VV_t^{-1}S_t}{\norm{S_t}_{\VV_t^{-1}}}\sqrt{\lambda}kK\right) = \norm{S_t}_{\VV_t^{-1}}\sqrt{\lambda}kK - \lambda k^2K^2,
$$
which immediately yields
\begin{align*}
    \mathbb{P}\left( \norm{S_t}_{\VV_t^{-1}} \geq \sqrt{\lambda}kK + \frac{1}{\sqrt{\lambda}kK} \log\left(\frac{N(h)}{N(g)\delta}\right) \right) \leq \delta.
\end{align*}
The only thing that remains is bounding $\log\left(\frac{N(h)}{N(g)}\right)$.

We give the following Lemma that is a slightly generalized version of \citet{mutny:poisson} and originally inspired by \citet{faury:logistic}. 
\begin{lemma}
\label{lemma:faury}
The normalizing constants satisfy
$$
\log\left(\frac{N(h)}{N(g)}\right) \leq d\log\left(\frac{1}{1-k}\right) + \log\left(\frac{(\det (\VV_t))^{1/2}}{\det( \sqrt{\lambda}\II)}\right).
$$
\end{lemma}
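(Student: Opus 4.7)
}
The plan is to express both normalizing constants as integrals, reduce them to integrals over the common ball of radius $K$ via a simple rescaling, and then compare the two truncated Gaussian masses using the fact that $\VV_t \succeq \lambda\II$. Concretely, the starting point is
\begin{equation*}
N(h) = \int_{\norm{x}_2 \leq K} \exp\left(-\tfrac{\lambda}{2}\norm{x}_2^2\right)\dd x, \qquad N(g) = \int_{\norm{x}_2 \leq (1-k)K} \exp\left(-\tfrac{1}{2}x^\trp \VV_t x\right)\dd x.
\end{equation*}

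First, I would apply the change of variables $x = (1-k)u$ in $N(g)$ to pull the truncation back to the ball of radius $K$, producing the Jacobian factor $(1-k)^d$ together with an exponent rescaled by $(1-k)^2$. Since $(1-k)^2 \leq 1$, replacing $(1-k)^2 u^\trp \VV_t u$ by $u^\trp \VV_t u$ only makes the exponent more negative, so
\begin{equation*}
N(g) \;\geq\; (1-k)^d \int_{\norm{u}_2 \leq K} \exp\left(-\tfrac{1}{2}u^\trp \VV_t u\right)\dd u.
\end{equation*}
This handles the $d\log(1/(1-k))$ factor and reduces the task to comparing the two Gaussian masses on the same ball of radius $K$, one with covariance $\lambda^{-1}\II$ and the other with covariance $\VV_t^{-1}$.

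Next, I would normalize each of those integrals by the corresponding full Gaussian mass, i.e.\ rewrite
\begin{equation*}
\int_{\norm{u}_2\leq K} \exp\left(-\tfrac{1}{2}u^\trp \VV_t u\right)\dd u = (2\pi)^{d/2}(\det \VV_t)^{-1/2}\cdot \Prob(\norm{X_2}_2 \leq K),
\end{equation*}
with $X_2 \sim \calN(0,\VV_t^{-1})$, and similarly for $N(h)$ with $X_1 \sim \calN(0,\lambda^{-1}\II)$. Taking the ratio yields exactly the factor $(\det\VV_t / \lambda^d)^{1/2}$ we need, multiplied by $\Prob(\norm{X_1}_2\leq K)/\Prob(\norm{X_2}_2\leq K)$. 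Because $\VV_t \succeq \lambda\II$ (from the $\lambda\II$ regularizer in $\VV_t$), the covariances satisfy $\VV_t^{-1} \preceq \lambda^{-1}\II$, and Anderson's inequality for centered Gaussians and symmetric convex bodies guarantees that the Gaussian with the smaller covariance assigns more mass to the symmetric convex set $\{\norm{u}_2\leq K\}$. Hence $\Prob(\norm{X_2}_2\leq K) \geq \Prob(\norm{X_1}_2\leq K)$ and this extra ratio is $\leq 1$.

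Assembling the two steps gives
\begin{equation*}
\frac{N(h)}{N(g)} \;\leq\; \frac{1}{(1-k)^d}\cdot \frac{(\det \VV_t)^{1/2}}{\det(\sqrt{\lambda}\II)},
\end{equation*}
and taking logarithms yields the claim. The only non-mechanical step is the Anderson comparison; if one prefers a self-contained alternative, one can instead write $X_1 \stackrel{d}{=} X_2 + W$ with an independent Gaussian $W \sim \calN(0,\lambda^{-1}\II - \VV_t^{-1})$ and apply the standard ``shift decreases symmetric convex mass'' fact conditionally on $W$, which is really just Anderson's inequality in disguise. I expect this Gaussian-measure comparison to be the only subtle point; everything else is bookkeeping with determinants and change of variables.
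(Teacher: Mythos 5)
Your proof is correct but takes a genuinely different route from the paper. Both arguments peel off the $(1-k)^{-d}$ factor and the determinant factor $(\det\VV_t)^{1/2}/\lambda^{d/2}$, but they differ in how they dispose of the residual ratio of truncated Gaussian masses. The paper reduces both integrals to the standard Gaussian by a linear change of variables, uses $\VV_t\succeq\lambda\II$ to show that the ball $\{\norm{x}_2\leq(1-k)\sqrt{\lambda}K\}$ is contained in the (transformed) integration region for $N(g)$, and then bounds the resulting ratio of truncated standard-Gaussian integrals over two nested balls by a crude volume ratio, which contributes an extra $\sqrt{2}^d$ that cancels against the $\det(\sqrt{2\lambda}\II)$ in their normalization. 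You instead rescale $N(g)$ back to the ball of radius $K$ (gaining the $(1-k)^d$ cleanly), factor each integral into its unrestricted Gaussian mass times a truncated-Gaussian probability, and then invoke Anderson's inequality to conclude that, since $\VV_t^{-1}\preceq\lambda^{-1}\II$, the Gaussian with the smaller covariance assigns at least as much mass to the symmetric convex ball, so the residual ratio is at most one. Your version is cleaner and avoids the paper's crude volume-ratio step and its $\sqrt{2}$ bookkeeping, at the cost of invoking Anderson's inequality (or the equivalent Gaussian-convolution argument you sketch) as an external fact; the paper's argument is more elementary and self-contained, using only set inclusion and volumes. Both arrive at the stated bound, and you are also right that the paper's definition $h\propto\exp(-\tfrac{\lambda}{2}\norm{x}_2^2)$ (covariance $\lambda^{-1}\II$) is the consistent one, despite a stray exponent in the paper's own proof of this lemma.
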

We can use the bound from Lemma~\ref{lemma:faury} to conclude that
\begin{align*}
    \mathbb{P}\left( \norm{S_t}_{\VV_t^{-1}} \geq \sqrt{\lambda}kK + \frac{d}{\sqrt{\lambda}kK}\log\left(\frac{1}{1-k}\right) + \frac{1}{\sqrt{\lambda}kK}  \log\left(\frac{(\det (\VV_t))^{1/2}}{\delta \det(\sqrt{\lambda}\II)}\right) \right) \leq \delta.
\end{align*}

We stated earlier that 
$$
\norm{\that_t - \theta^*}_{\VV_t} \leq  \norm{S_t}_{\VV_t^{-1}} + \sqrt{\lambda}\norm{\tstar}_2.
$$
Combining this with our analysis, we get the Proposition \ref{app:confsubexp}.

We may now choose the parameters $k$, $K$ and $\lambda$. Note that to get sub-Gaussian rates as in Abbasi-Yadkori, one needs to pick a regularization parameter of the order of $\lambda = d\log(T)$.

\paragraph{Proof of Lemma~\ref{lemma:faury}}
We give a proof of the Lemma for completeness, and because the additional generality makes for a slightly different proof, even though the bound stays the same.
\begin{proof}
We have
\begin{align*}
    N(h) &= \int_{\norm{x}_2 \leq K} \exp(-\lambda \norm{x}_2^2) \mathrm{d}x \\
    &= \frac{1}{\abs{\det(\sqrt{2\lambda} \II)}}\int_{\norm{x}_2 \leq K} \exp\left(-\frac{1}{2} \norm{\sqrt{2\lambda}x}_2^2 \right) \abs{\det(\sqrt{2\lambda} \II)} \mathrm{d}x \\
    &= \frac{1}{\abs{\det(\sqrt{2\lambda} \II)}}\int_{\norm{x}_2 \leq \sqrt{2\lambda}K} \exp\left(-\frac{1}{2} \norm{x}_2^2 \right) \mathrm{d}x.
\end{align*}
Further we have
\begin{align*}
    N(g) &= \int_{\norm{x}_2 \leq (1-k)K} \exp(-\frac{1}{2} x^\trp \VV_t x) \mathrm{d}x \\
    &= \frac{1}{\abs{\det(\VV_t^{1/2})} }\int_{\norm{x}_2 \leq (1-k)K} \exp(-\frac{1}{2} \norm{\VV_t^{1/2}x}_2^2)\abs{\det (\VV_t^{1/2})} \mathrm{d}x \\
    &= \frac{1}{\abs{\det(\VV_t^{1/2})} }\int_{S}\exp(-\frac{1}{2} \norm{x}_2^2) \mathrm{d}x,
\end{align*}
where $S = \{\VV_t^{1/2}x \sep \norm{x} \leq (1-k)K \} = \{x \sep \norm{\VV_t^{-1/2}x} \leq (1-k)K \} =  \{x \sep x^\trp \VV_t^{-1}x \leq (1-k)K \}$. Note that $\VV_t \succeq \lambda \II$ and so $\VV_t^{-1} \preceq \frac{1}{\lambda} \II$. Therefore if
$\norm{x}_2^2 \leq (1-k)K \sqrt{\lambda}$, we have 
$$
    \sqrt{x^\trp \VV_t^{-1}x} \leq \frac{1}{\sqrt{\lambda}} \norm{x}_2 \leq (1-k)K \implies x \in S.
$$
Thus $\{x \sep \norm{x}_2 \leq (1-k)\sqrt{\lambda}K\} \subseteq S$ and
$$
    N(g) \geq \frac{1}{\abs{\det(\VV_t^{1/2})} }\int_{\norm{x}_2 \leq (1-k)\sqrt{\lambda}K}\exp(-\frac{1}{2} \norm{x}_2^2) \mathrm{d}x.
$$
We may therefore bound
$$
    \frac{N(g)}{N(h)} \leq \frac{(\det \VV_t)^{1/2}}{(\det \sqrt{2\lambda}\II)} \frac{\int_{\norm{x}_2 \leq \sqrt{2\lambda}K} \exp\left(-\frac{1}{2} \norm{x}_2^2 \right) \mathrm{d}x}{\int_{\norm{x}_2 \leq (1-k)\sqrt{\lambda}K}\exp(-\frac{1}{2} \norm{x}_2^2) \mathrm{d}x}.
$$
By a rather crude bound (as $1-k \leq \sqrt{2}$ in any case) we get
\allowdisplaybreaks

\begin{align*}
    & \frac{\int_{\norm{x}_2 \leq \sqrt{2\lambda}K} \exp\left(-\frac{1}{2} \norm{x}_2^2 \right) \mathrm{d}x}{\int_{\norm{x}_2 
    (1-k)\sqrt{\lambda}K}\exp(-\frac{1}{2} \norm{x}_2^2) \mathrm{d}x} \\
    \leq \, \, & \frac{\int_{\norm{x}_2 \leq (1-k)\sqrt{\lambda}K} \exp\left(-\frac{1}{2} \norm{x}_2^2 \right) \mathrm{d}x +\int_{(1-k)\sqrt{\lambda}K \leq \norm{x}_2 \leq \sqrt{2\lambda}K} \exp\left(-\frac{1}{2} \norm{x}_2^2 \right) \mathrm{d}x}{\int_{\norm{x}_2 \leq (1-k)\sqrt{\lambda}K}\exp(-\frac{1}{2} \norm{x}_2^2) \mathrm{d}x} \\
    = \,\, & 1+\frac{\int_{(1-k)\sqrt{\lambda}K \leq \norm{x}_2 \leq \sqrt{2\lambda}K} \exp\left(-\frac{1}{2} \norm{x}_2^2 \right) \mathrm{d}x}{\int_{\norm{x}_2 \leq (1-k)\sqrt{\lambda}K}\exp(-\frac{1}{2} \norm{x}_2^2) \mathrm{d}x} \\
     \leq \,\, & 1+ \frac{\exp\left(-\frac{1}{2} (1-k)^2\lambda K^2 \right) }{\exp\left(-\frac{1}{2} (1-k)^2\lambda K^2 \right)} \frac{\int_{(1-k)\sqrt{\lambda}K \leq \norm{x}_2 \leq \sqrt{2\lambda}K} \mathrm{d}x}{\int_{\norm{x}_2 \leq (1-k)\sqrt{\lambda}K} \mathrm{d}x} \\
     = \,\, & 1 + \frac{\mathrm{vol}_d(\sqrt{2\lambda}K) - \mathrm{vol}_d((1-k)\sqrt{\lambda}K)}{\mathrm{vol}_d((1-k)\sqrt{\lambda}K)} \\
     = \,\, & \frac{\mathrm{vol}_d(\sqrt{2\lambda}K)}{\mathrm{vol}_d((1-k)\sqrt{\lambda}K)} \\
     = \,\, & (1-k)^{-d} \sqrt{2}^d.
\end{align*}
We can put this together to obtain
$$
\frac{N(h)}{N(g)} \leq (1-k)^{-d} \sqrt{2}^d \frac{(\det (\VV_t))^{1/2}}{\det (\sqrt{2\lambda}\II)} = (1-k)^{-d} \frac{(\det (\VV_t))^{1/2}}{\det( \sqrt{\lambda}\II)}.
$$

\end{proof}

\end{document}